\documentclass[11pt]{article}
\usepackage{fullpage}


\usepackage{float}

\usepackage[utf8]{inputenc} 
\usepackage[T1]{fontenc}    
\usepackage{hyperref}       
\usepackage{url}            
\usepackage{booktabs}       
\usepackage{amsfonts}       
\usepackage{nicefrac}       
\usepackage{microtype}      
\usepackage{xcolor}         

\usepackage{breakcites}

\usepackage{amsthm,amsfonts,amssymb,amsmath}
\usepackage{xspace}
\usepackage{bbm}
\usepackage{graphicx}
\usepackage{bbm}

\usepackage{accents}

\usepackage{graphicx} 
\usepackage{booktabs} 
\usepackage{caption}
\usepackage{xcolor}
\usepackage{hyperref}

\usepackage{comment}
\usepackage{caption}
\usepackage{subcaption}
\usepackage{stfloats}

\usepackage{tikz}
\usepackage{appendix}

\usepackage{enumitem}

\usepackage[ruled,vlined]{algorithm2e}
\usepackage[noend]{algorithmic}

\usepackage[round]{natbib}

\usepackage{thmtools}
\usepackage{thm-restate}

\newtheorem{theorem}{Theorem}[section]
\newtheorem{lemma}{Lemma}[section]

\newtheorem{claim}{Claim}[section]

\newtheorem{corollary}{Corollary}[section]

\usepackage{xcolor}
\newcommand{\cj}[1]{\textcolor{blue}{[CJ: #1]}}

\newcommand{\cX}{\mathcal{X}}
\newcommand{\cY}{\mathcal{Y}}
\newcommand{\cP}{\mathcal{P}}
\newcommand{\bP}{\mathbb{P}}
\newcommand{\cA}{\mathcal{A}}

\newcommand{\pfair}{p^{\textsf{fair}}}

\newcommand{\ind}{\mathbbm{1}}

\newcommand{\cQ}{\mathcal{Q}}
\newcommand{\cF}{\mathcal{F}}

\newcommand{\cU}{\mathcal{U}}

\newcommand{\tP}{\tilde{\mathcal{P}}}
\newcommand{\E}{\mathop{\mathbb{E}}}
\newcommand{\hy}{\hat{y}}
\newcommand{\D}{\mathcal{D}}
\newcommand{\ha}{\hat{a}}

\newcommand{\R}{\mathbb{R}}

\newcommand{\Lagr}{\mathcal{L}}
\newcommand{\epi}{\text{epi }}

\newcommand{\norm}[1]{\left\lVert#1\right\rVert}
\newcommand{\inlineeqnum}{\refstepcounter{equation}~~\mbox{(\theequation)}}

\title{Distributionally Robust Data Join}
\author{Pranjal Awasthi\thanks{Google} \qquad Christopher Jung\thanks{Stanford University. Part of this work done while an intern at Google Research New York.} \qquad Jamie Morgenstern\thanks{University of Washington}}

\begin{document}

\maketitle

\begin{abstract}
Suppose we are given two datasets: a labeled dataset and unlabeled dataset which also has additional auxiliary features not present in the first dataset.
What is the most principled way to use these datasets together to construct a predictor? 

The answer should depend upon whether these datasets are generated by the same or different distributions over their mutual feature sets, and how similar the test distribution will be to either of those distributions. In many applications, the two datasets will likely follow different distributions, but both may be close to the test distribution. We introduce the problem of building a predictor which minimizes the maximum loss over all 
  probability distributions over the original features, auxiliary features, and binary labels, whose Wasserstein distance is $r_1$ away from the empirical distribution over the labeled dataset and $r_2$ away from that of the unlabeled dataset. This can be thought of as a generalization of distributionally robust optimization (DRO), which allows for two data sources, one of which is unlabeled and may contain auxiliary features. 
 
\end{abstract}

\section{Introduction}
For a variety of prediction tasks, a number of sources of data may be available on which to train, each possibly following a distinct distribution. For example, health records might be available from at a number of geographically and demographically distinct hospitals. How should one combine these data sources  to build the best possible predictor? 

If the datasets $S_1, S_2$ follow different distributions $D_1, D_2$, the test distribution $D$ will necessarily differ from at least one. A refinement of our prior question is to ask for which test distributions, then, can training with $S_1, S_2$ 
give a good predictor?

More generally, very common issues of mismatch between training and test distributions (and uncertainty over which test distribution one might face)
has led to a great deal of interest in applying tools from distributionally robust optimization (DRO) to machine learning \citep{duchi2021learning, shafieezadeh2015distributionally, minimaxLearning, rahimian2019distributionally}. In contrast to classical 
 statistical learning theory, DRO picks a function $f$ whose maximum loss (over a set of distributions near $S$) is minimized. This set of potential test distributions, often referred to as the ambiguity or uncertainty set, captures the uncertainty over the test distribution, along with knowledge that the test distribution will be close to the training distribution.
 
 The ambiguity set is usually defined as a set of distributions with distance at most $r$ from  the empirical distribution over the training data: $B(\tP_S, r) = \left\{\cQ: D(\tP_S, \cQ) \le r \right\}$ where $\tP_S$ is the empirical distribution over training dataset $S$ and $D$ is some distance measure between two probability distributions. Then, DRO aims to find a model $\theta$ such that for some loss $\ell$, $\theta = \arg\min_{\theta} \sup_{\cQ \in B(\tP_S, r)} \E_{(x,y) \sim \cQ}[\ell(\theta, (x,y))]$ --- that is, minimize the loss over the worst case distribution in the ball of distributions $B(\tP_S, r)$. The larger $r$, the more distributions over which DRO hedges its performance, leading to a tension between performance (minimizing worst-case error) and robustness (over the set of distributions on which performance is measured).

In this work, we introduce a natural extension of distributionally robust learning, \emph{two anchor} distributionally robust learning, which we also refer to as the distributionally robust data join problem.  Two anchor distributionally robust learning has access to two sources of training data, the first source containing  labels, and the second source without labels but with  auxiliary features not present in the first source. The optimization is then over the set of distributions close to \emph{both} the labeled and auxiliary data distributions.

Formally, suppose one has  two training datasets. The first dataset $S_1$ consists of feature vectors $\cX = \R^{m_1}$ and binary prediction labels for some task $\cY=\{\pm 1\}$. The other dataset $S_2$ contains feature vectors $\cX$ and auxiliary features $\cA = \R^{m_2}$ but \emph{not} the labels. The goal is to find a model $\theta$ that hedges its performance against any distribution $\cQ$ over $(\cX, \cA, \cY)$ whose Wasserstein distance is $r_1$ away from the empirical distribution over $S_1$ and $r_2$ away from that of $S_2$. Note that our setting is a strict generalization of semi-supervised setting: for  $m_2=0$, there are no additional features in the second dataset, and $S_2$ is simply some additional unlabeled dataset. In contrast to pure semi-supervised settings, our method and setting both allow the learner to take advantage of the additional auxiliary features and to learn a model robust to additional distribution shift. We also emphasize that having the common features $x$ between $S_1$ and $S_2$ help learn about the relationship between the auxiliary features $a$ and the label $y$ indirectly. Consider the following example where we actually have one dataset that contains the feature vector, auxiliary features, and the label altogether $S^{\text{combined}}=\{(x_i,a_i,y_i)\}_{i=1}^n$. From this dataset, we may form $S_1=\{(x_i,a_i)\}_{i=1}^n$ and $S_2=\{(x_i, y_i)\}_{i=1}^n$ where for every point $(x_i,a_i)$ in $S_1$ and there's a corresponding $(x_i,y_i)$ such that they share the same feature. In fact, instantiating our framework with $r_1=0$ and $r_2=0$ corresponds exactly to performing empirical risk minimization over $S^{\text{combined}}$. In other words, the quality of how well feature vectors $x$'s match between $S_1$ and $S_2$ determine how well we may be able to learn the relationship between the auxiliary features $a$ and the label $y$.

In practice, it is quite common to have the datasets fragmented as our setting captures. For instance, suppose some dataset has been collected at a hospital in order to build a predictive model that is to be used at a nearby hospital. After collecting this data, some other research may find other features that could have been useful for the prediction task but unfortunately were not collected during the contruction of this dataset. Fortunately, another nearby hospital may have data that contains both the original features and the useful auxiliary features but does not have labels for this prediction task. Our data join approach allows to find a model that utilizes such auxiliary features and explicitly considers the distribution mismatch between the hospital where the model is deployed and the hospitals from which these two datasets have been collected.

Auxiliary features may be useful not only for improving accuracy of the model but for guaranteeing additional properties including notions of fairness. In Appendix~\ref{app:fairness}, we show that our distributionally robust data join problem encompasses a two-anchor distributionally robust learning instance where one can try to minimize not just the model's overall loss but also penalize the model for its difference in performance across demographic groups, even in situations where demographic information is present only in one dataset and the label is only present in the other dataset. This extension is motivated by designing equitable predictors (e.g., which equalize false positive rate over a collection of demographic groups) where one training set contains labels for the relevant task but no demographic information, and another training set contains demographic information but may not contain task labels. Such settings are quite common in practice, where demographic data is not collected for every dataset --- indeed, collection of demographic data is difficult to do well or sometimes even illegal \citep{awasthi2021evaluating, fremont2016race, weissman2011advancing, zhang2018assessing}.  

The contribution of our work can be summarized as follows: 
\begin{enumerate}
    \item New Problem Formulation of Distributionally Robust Data Join: we introduce and precisely formulate the distributionally robust data join problem in Section~\ref{sec:prelim} and exactly characterize its feasibility in Section~\ref{subsec:coupling-formulation}. 
    
    \item Application to Fairness: we further show how our original problem can be slightly modified to capture the problem of enforcing fairness when demographic group information is not available in the original labeled dataset (Appendix \ref{app:fairness}). 
    
    \item Tractable Reformulation with an Approximation Guarantee (Theorem \ref{thm:main-thm} in Section \ref{sec:tractable-optimization}): we show how to approximate the distributionally robust data join problem with two convex optimization problems with an approximation guarantee. 
    
    \item Experiments (Section \ref{sec:experiments}): we design and perform a synthetic experiment that shows how our distributionally robust data join method performs much better than the baselines. Additionally, we show some preliminary results on the experiments on a few real world datasets.  
\end{enumerate}

\subsection{Related Work}
\textbf{Distributionally Robust Optimization:}
Prior work has looked at many different ways to define the ambiguity set: characterizing the set  with moment and support information \citep{delage2010distributionally, goh2010distributionally, wiesemann2014distributionally}, or using various distance measures on probability space and defined the ambiguity set to be all the probability measures that are within certain distance $\epsilon$ of the empirical distribution: \cite{duchi2021learning} use f-divergence, \cite{hu2013kullback} the Kullback-Leibler divergence, \cite{erdougan2006ambiguous} the Prohorov metric, and \citet{shafieezadeh2015distributionally, blanchet2019quantifying, blanchet2019robust, esfahani2018data} the Wasserstein distance, \cite{hashimoto2018fairness} chi-square divergence, and so forth. Defining ambiguity sets with divergence measures suffers from the fact that they do not incorporate the underlying geometry between the points --- i.e. almost all divergence measures require the distribution in the ambiguity set to be absolutely continuous with respect to the anchor distribution. Therefore, because the distributions in the ambiguity set are simple re-weighting of the anchor distribution, divergence based ambiguity sets don't include distributions where the empirical distributions are perturbed a little bit and hence aren't robust to ``black swan'' outliers \citep{kuhn2019wasserstein}. By contrast, the Wasserstein distance allows one to take advantage of the natural geometry of the points (e.g. $L_p$ space). Furthermore, when we consider ambiguity sets defined by \emph{two} anchor distributions as we do in this work, the two empirical distributions that are the anchors of the ambiguity set are almost surely not continuous with respect to each other. For these reasons, we focus on the Wasserstein distance in this work.

Most relevant to our work from the distributionally robust optimization literature is \citet{shafieezadeh2015distributionally}. They show that regularizing the model parameter of the logistic regression has the effect of robustly hedging the model's performance against distributions whose distribution over just the covariates is slightly different than that of the empirical distribution over the training data. Distributionally robust logistic regression is a generalization of $p$-norm regularized logistic regression because it allows for not only distribution shift in the convariates but also the distribution shift over the labels. In a couple of real world datasets, they show that distributionally robust logistic regression seems to outperform regularized logistic regression by the same amount that regularized logistic regression outperforms vanilla logistic regression. Our work is a natural extension of this work in that we take additional unlabeled dataset with auxiliary features into account. However, we remark that our contributions go beyond the contributions of \cite{shafieezadeh2015distributionally}. In particular, reasoning about couplings between 3 distributions (labeled dataset, unlabeled dataset, and unknown target dataset) as shown later in Section~\ref{sec:dr-data-join} is \textit{a priori} not obvious and rather novel. Existing 2 distribution coupling approach used in \cite{shafieezadeh2015distributionally} (e.g., creating one coupling between labeled and unlabeled, and another between one of these and the test distribution) will not give empirically or theoretically good matchings between all three  distributions and will generally also not be computationally tractable in our case. We further discuss new technical difficulties that have to be overcome in order to solve our problem later in Section~\ref{sec:tractable-optimization} and Appendix~\ref{app:comparison-original-dro}. \cite{drofair} extend \cite{shafieezadeh2015distributionally} by adding a fairness regularization term, but the demographic information is available in the original labeled dataset in their setting unlike our setting.

\textbf{Semi-supervised Learning:} There have been significant advances in semi-supervised learning where the learner has access not only labeled data but also unlabeled data \citep{zhu2005semi, zhu2009introduction, chapelle2009semi}. While our setting is similar to 
semi-supervised settings, we capture a broader class of possible problems in two ways. First,
our approach allows the unlabeled dataset to have additional auxiliary features, and second, we explicitly take distribution shift into account.

\textbf{Imputation:} Numerous imputation methods for missing values in data exist, many of which have few or no theoretical guarantees \citep{donders2006gentle, royston2004multiple}. Many of these methods work best (or only have guarantees) when data values are missing at random. Our work, on the other hand,  assumes all prediction labels are missing from the second dataset and all auxiliary features are missing from the first dataset. Another related problem is the matrix factorization problem which is also referred to as matrix completion problem \citep{mnih2008probabilistic, koren2009matrix, candes2009exact}: here the goal is to find a low rank matrix that can well approximate the given data matrix with missing values. Our problem is different in that we don't make such structural assumption about the data matrix effectively being of low rank, but instead we assume all the auxiliary features are only available from a separate unlabeled dataset.

\textbf{Fairness:}
Many practical prediction tasks have disparate performance across demographic groups, and explicit demographic information may not be available in the original training data. Several lines of work aim to reduce the gap in performance of a predictor between groups even without group information for training.

\cite{hashimoto2018fairness} show that the chi-square divergence between the overall distribution and the distribution of any subgroup can be bounded by the size of the subgroup: e.g. for any sufficiently large subgroup, its divergence to the overall distribution cannot be too big. Therefore, by performing distributionally robust learning with ambiguity set defined by chi-square divergence, they are able to optimize for the worst-case risk over all possible sufficiently large subgroups even when the demographic information is not available. \cite{diana2020convergent} provide provably convergence oracle-efficient learning algorithms with the same kind of minimax fairness guarantees when the demographic group information is available.

One may naively think that given auxiliary demographic group information data, the most accurate imputation for the demographic group may be enough to not only estimate the unfairness of given predictor but also build a predictor with fairness guarantees. However, \cite{awasthi2021evaluating} show that due to different underlying base rates across groups, the Bayes optimal predictor for the demographic group information can result in maximally biased estimate of unfairness. \cite{diana2021multiaccurate} demonstrate that one can rely on a multi-accurate regressor, which was first introduced by \cite{kim2019multiaccuracy}, as opposed to a 0-1 classifier in order to estimate the unfairness without any bias and also build a fair classifier for downstream tasks. When only some data points are missing demographic information, \cite{jeong2021fairness} show how to bypass the need to explicitly impute the missing values and instead rely on some decision tree based approach in order to optimize a fairness-regularized objective function. \cite{kallus2021assessing}, given two separate datasets like in our setting, show how to construct confidence intervals for unfairness that is consistent with the given datasets via Fr{\'e}chet and Hoeffding inequalities; our work is different in that we allow a little bit of slack by forming a Wasserstein ball around both datasets and can actually construct a fair model as opposed to only measuring unfairness. 

\cite{celis2021fair} and \cite{celis2021fair2} have shown when the demographic group information is available but possibly noisy, stochastically and adversarially respectively, how to build a fair classifier.

\section{Preliminaries}
\label{sec:prelim}
\subsection{Notations}
We have two kinds of datasets, the auxiliary feature dataset and the prediction label dataset denoted in the following way: $S_A = \{(x^A_i, a^A_i)\}_{i=1}^{n_A},\quad S_P = \{(x^P_i, y^P_i)\}_{i=1}^{n_P}$ where the domain for feature vector $x$ is $\cX = \R^{m_1}$, domain for auxiliary features $a$ is $\cA = \R^{m_2}$, and the label space is $y \in \cY = \{\pm 1\}$. For any vector $v \in \R^{m}$ and $d_1, d_2 \in [m]$,  we write $v[d_1: d_2]$ to denote the coordinates from $d_1$ to $d_2$ of vector $v$ and $v[d]$ to denote the $d$th coordinate. We assume both $\cX$ and $\cA$ are compact and convex. For convenience, we write $S^{\cX}_A = \{x: (x,a) \in S_A \}, \quad S^{\cX}_P = \{x: (x,y) \in S_P \}$ to denote just the feature vectors of the dataset. 

Given any dataset $S=\{z_i\}_{i=1}^n$, we will write $\tP_S=\frac{1}{n} \sum_{i=1}^n \delta(z_i)$ to denote the empirical distribution over the dataset $S$ where $\delta$ is the Dirac delta funcion. We'll write $\bP_Z$ to denote the set of all probability distributions over $Z$. Similarly, we write $\bP_{(Z,Z')}$ to denote a set of all possible joint distributions over $Z$ and $Z'$. Also, given a joint distribution $\cP \in \bP_{(Z,Z')}$, we write $\cP_{Z}$ and $\cP_{Z'}$ to denote the marginal distribution over $Z$ and $Z'$ respectfully, meaning $\cP_Z(z) = \int \cP(z, dz')$ and $\cP_{Z'}(z') = \int \cP(dz, z')$. We extend the notation when the joint distribution is over more than two sets: e.g. $\cP_{z,z'}((z,z')) = \int \cP(z, z', dz'')$ where we have marginalized over $Z''$ for $\cP$ which is a joint distribution over $Z, Z', Z''$. 

We write the set of all possibly couplings between two distributions $\cP \in \bP_Z$ and  $\cP' \in \bP_{Z'}$ as $\Pi(\cP, \cP') = \left\{\pi \in \mathbb{P}_{(Z,Z')}: \pi_Z  = \cP, \pi_{Z'} = \cP' \right\}$. For a coupling between more than two distributions, we use the same convention and write $\Pi(\cP, \cP', \cP'')$ for instance.

Given any metric $d: Z \times Z \to \R$ and two probability distributions $\cP, \cP' \in \bP_Z$, we write the Wasserstein distance between them as $D_d(\cP, \cP') = \inf_{\pi \in \Pi(\cP, \cP')}\E_{(z,z') \sim \pi}[d(z, z')].$

Given some distribution $\cP \in \bP$ over some set $Z$, metric $d: Z \times Z \to \R$, a radius $r > 0$, we will write $B_{d}(\cP, r) = \left\{\cQ \in \bP: D_{d}(\cP, \cQ) \le r \right\}$ to denote the Wasserstein ball of radius $r$ around the given distribution $\cP$. When the metric is obvious from the context, we may simply write $B(\cP, r)$. 

In our case, the relevant metrics that are used to measure distance between points are
\begin{align*}
    d_\cX(x, x') &= || x- x'||_p, \quad d_A((x,a), (x',a')) = ||x - x'||_p + \kappa_A ||a - a' ||_{p'} \\
    d_P((x,y), (x',y')) &= ||x - x'||_p + \kappa_P |y-y'|
\end{align*}
where $|| v ||_p = \left(\sum_{d} |v[d]|^{p} \right)^\frac{1}{p}$ is some $p$-norm and $\kappa_A, \kappa_P \ge 0$ are the coefficients that control how much we care about the $||a-a'||_{p'}$ and $|y-y'|$. We'll write $|| v ||_{p,*} = \sup_{||v'||_p \le 1 } \langle v, v' \rangle$ to denote dual norm for $p$-norm. Also, for convenience, given any vector $v$, we'll write $\overline{v}_{p} = \frac{v}{||v||_{p}}$ and $\overline{v}_{p,*} = \frac{v}{||v||_{p,*}}$ to denote the normalized vectors. When it's clear from the context which norm is being used, we write $|| \cdot ||$, $|| \cdot ||_{*}$, $\overline{v}$, and $\overline{v}_{*}$. Now, we are ready to describe distributionally robust data join problem.

\subsection{Distributionally Robust Data Join}
\label{sec:dr-data-join}
We are given an auxiliary dataset $S_A$ and a prediction label dataset $S_P$. We are interested in a joint distribution $\cQ$ over $(x,a,y)$ such that
\begin{enumerate}
    \item its marginal distribution over $(x,a)$ is at most $r_A$ away from $\tP_{S_A}$ in Wasserstein distance: $\D_{d_A}(\cQ_{\cX, \cA}, \tP_{S_A})\le r_A$
    \item its marginal distribution over $(x,y)$ is at most $r_P$ away from $\tP_{S_P}$ in Wasserstein distance: $D_{d_P}(\cQ_{\cX, \cY}, \tP_{S_p}) \le r_P$
\end{enumerate}

Combining them together, the set of distributions we are interested in is 
\begin{align*}
    W(S_A, S_P, r_A, r_P) &=\{\cQ \in \bP_{(\cX, \cA, \cY)}:  \D_{d_A}(\cQ_{\cX, \cA}, \tP_{S_A})\le r_A,D_{d_P}(\cQ_{\cX, \cY}, \tP_{S_p}) \le r_P \}\\
    &= \{\cQ \in \bP_{(\cX, \cA, \cY)}: Q_{\cX, \cA} \in B_{d_A}(\tP_{S_A}, r_A), Q_{\cX,\cY} \in B_{d_P}(\tP_{S_P}, r_P) \}.
\end{align*}
Now, we consider some learning task where the performance is measured according to the worst case distribution in the above set of distributions. We want to find some model parameter $\theta$ such that its loss against the worst-case distribution among $W(S_A, S_P, r_A, r_P)$ is minimized:
\begin{align}
    \min_{\theta \in \Theta} \sup_{\cQ \in W(S_A, S_P, r_A, r_P)} \E_{(x,a,y) \sim \cQ}[\ell(\theta, (x,a,y))]. \label{eqn:disjoint-learning-orig}
\end{align}
where $\ell: \Theta \times (\cX \times \cA \times \cY) \to \R$ is a convex loss function evaluated at $\theta$. For the sake of concreteness, we focus on logistic loss\footnote{All our results still hold for any other convex loss with minimal modifications} $\ell(\theta, (x,a,y)) = \log(1+\exp(-y \langle \theta, (x,a) \rangle)).$

Also, we sometimes make use of the following functions $f(t)=\log(1+\exp(t))$ and $h(\theta, (x,a)) = f(-\langle \theta, (x,a) \rangle)$ instead of $\ell$, as it is more convenient due to not having to worry about $y$ in certain cases: $\ell(\theta, (x,a,+1)) = h(\theta, (x,a))$ and $\ell(\theta, (x,a,-1)) = h(-\theta, (x,a))$. We write the convex conjugate of $f$ as $f^*(b) = \sup_{x} \langle x^*, x \rangle - f(x)$, which in our case evaluates to $b  \log b  + (1-b) \log(1-b)$ when $b \in (0,1)$, 0 if $b=0$ or 1, and $\infty$ otherwise.


\section{Tractable Reformulation}
\label{sec:tractable-optimization}
Let us give an overview of this section. Note that the optimization problem in \eqref{eqn:disjoint-learning-orig} is a saddle point problem. In Section \ref{subsec:coupling-formulation}, we first make the coupling in the optimal transport more explicit in the inner $\sup$ term. Then, by leveraging Kantorovich duality, we replace the $\sup$ term with its dual problem which is a minimization problem, thereby making the original saddle problem into minimization problem. However, the resulting dual problem has constraints that involve some supremum term, meaning it's an semi-infinite program (i.e. $\sup_{z \in Z} \text{constraint(z)} \le 0$ is equivalent to $\text{constraint(z)} \le 0, \forall z \in Z$). Finally, in Section \ref{subsec:removing-sup}, we show how each supremum term can be approximated by some other closed-form constraint. And we finally show that the resulting problem can be decomposed into two convex optimization problems and its optimal solution has additional approximation guarantee to the original optimal solution (Theorem~\ref{thm:main-thm}).

\subsection{Formulation through Coupling}\label{subsec:coupling-formulation}
We show how to rewrite the problem \eqref{eqn:disjoint-learning-orig} using the underlying coupling between the ``anchor'' distributions $(S_A, S_P)$ and $\cQ \in W(S_A, S_P, r_A, r_P)$. For simplicity, instead of $\pi\left((x^A_i, a^A_i), (x^P_j, y^P_j), (x,a,y)\right)$ which is a coupling between $\tP_{S_A}$, $\tP_{S_P}$, and some joint distribution $\cQ \in \bP_{\cX, \cA, \cY}$, we write $\pi^{y}_{i,j}(x, a)= \pi\left((x^A_i, a^A_i), (x^P_i, y^P_i), (x,a,y)\right)$. Then, since the ``anchor'' distributions $\tP_{S_A}$ and $\tP_{S_P}$ are discrete distributions, we can rewrite the problem \eqref{eqn:disjoint-learning-orig} as choosing $\theta \in \Theta$ that minimizes the following value: 
\begin{align}
     \sup_{\pi^{a,y}_{i,j}} \quad&  \sum_{i=1}^{n_A} \sum_{j=1}^{n_P} \sum_{y \in \cY}\int_{\cX, \cA} \ell(\theta, (x, a, y))  \pi^{y}_{i,j}(dx, da) \label{eqn:disjoint-learning-transport}\\
    \text{s.t.} \quad& \sum_{i=1}^{n_A}\sum_{j=1}^{n_P} \sum_{y \in \cY}\int_{\cX, \cA} d_A^{i}(x,a)  \pi^{y}_{i,j}(dx, da) \le r_A,\quad \sum_{i=1}^{n_A}\sum_{j=1}^{n_P} \sum_{y \in \cY} \int_{\cX, \cA} d_P^{j}(x,y)  \pi^{y}_{i,j}(dx, da) \le r_P \nonumber\\
    & \sum_{j=1}^{n_P} \sum_{y \in \cY} \int_{\cX, \cA} \pi^{y}_{i,j}(dx, da) = \frac{1}{n_A} \quad \forall i \in [n_A],\quad \sum_{i=1}^{n_A} \sum_{y \in \cY} \int_{\cX, \cA}  \pi^{y}_{i,j}(dx, da)  = \frac{1}{n_P} \quad \forall j \in [n_P] \nonumber
\end{align}
where $d^i_A(x,a) = d_A((x^A_i, a^A_i), (x,a))$ and $d^j_P(x,y) = d_P((x^P_j, y^P_j), (x,y))$. We defer intuitive explanations and derivation of this problem to Appendix \ref{app:coupling-formulation}. For any fixed parameter $\theta$, we'll denote the optimal value of the above problem \eqref{eqn:disjoint-learning-transport} as $p^*(\theta, r_A, r_P)$ and $p^*(r_A, r_P) = \inf_{\theta} p^*(\theta, r_A, r_P)$.

It can be shown that minimizing over the above supremum value in \eqref{eqn:disjoint-learning-orig} and the optimization problem \eqref{eqn:disjoint-learning-transport} are equivalent as shown in the following theorem. We also provide a tight characterization of the feasibility of \eqref{eqn:disjoint-learning-transport}. The proof of Theorem \ref{thm:coupling-formulation} and \ref{thm:feasibility} can be found in Appendix \ref{app:coupling-formulation}.

\begin{restatable}{theorem}{thmcouplingformulation}
\label{thm:coupling-formulation}
For any fixed $\theta \in \Theta$, $p^*(\theta, r_A, r_P) = \sup_{\cQ \in W(S_A, S_P, r_A, r_P)} \E_{(x,a,y) \sim Q}[\ell(\theta, (x,a,y))]$
\end{restatable}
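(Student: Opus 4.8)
The plan is to prove the two inequalities separately by exhibiting, in each direction, an explicit correspondence between a distribution $\cQ \in W(S_A, S_P, r_A, r_P)$ and a feasible coupling for \eqref{eqn:disjoint-learning-transport} that preserves the objective value. The key structural fact I will exploit is that the cost $d_A^i(x,a)$ depends only on the $(x,a)$-coordinates of the target and the cost $d_P^j(x,y)$ only on the $(x,y)$-coordinates, so the two transport budgets decouple across the two marginals of $\cQ$ being constrained.

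For the direction $p^*(\theta, r_A, r_P) \le \sup_{\cQ \in W} \E_{(x,a,y)\sim\cQ}[\ell(\theta,(x,a,y))]$, I would start from any feasible $\pi^{y}_{i,j}$ and set $\cQ := \pi_{(\cX,\cA,\cY)}$, its target marginal. Marginalizing $\pi$ over the $S_P$-index $j$ and over $y$ produces a coupling of $\tP_{S_A}$ and $\cQ_{\cX,\cA}$ (its source marginal is $\tfrac{1}{n_A}$ by the third constraint, its target marginal is $\cQ_{\cX,\cA}$), whose transport cost is exactly the left-hand side of the first constraint and hence $\le r_A$; this certifies $\cQ_{\cX,\cA} \in B_{d_A}(\tP_{S_A}, r_A)$. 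Symmetrically, marginalizing over the $S_A$-index $i$ certifies $\cQ_{\cX,\cY} \in B_{d_P}(\tP_{S_P}, r_P)$, so $\cQ \in W$. Since the objective of \eqref{eqn:disjoint-learning-transport} equals $\E_{(x,a,y)\sim\cQ}[\ell(\theta,(x,a,y))]$ by construction, every feasible coupling is dominated by the supremum over $W$.

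For the reverse direction, fix $\cQ \in W$ and build a three-way coupling by gluing. By standard optimal-transport theory (the source measures are finitely supported, so optimal plans exist) there are plans $\pi^A \in \Pi(\tP_{S_A}, \cQ_{\cX,\cA})$ of cost $\le r_A$ and $\pi^P \in \Pi(\tP_{S_P}, \cQ_{\cX,\cY})$ of cost $\le r_P$. Disintegrating $\pi^A$ against its target marginal yields a kernel $K^A_{(x,a)}$ giving the conditional law of the $S_A$-index given $(x,a)$, and likewise $\pi^P$ yields $K^P_{(x,y)}$. I would then set
\[
\pi\bigl(di, dj, d(x,a,y)\bigr) \;=\; \cQ\bigl(d(x,a,y)\bigr)\, K^A_{(x,a)}(di)\, K^P_{(x,y)}(dj),
\]
making the two source indices conditionally independent given the target point. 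Because each kernel integrates to one, the $(\cX,\cA,\cY)$-marginal of $\pi$ is exactly $\cQ$, while integrating out the remaining coordinates recovers $\tP_{S_A}$ and $\tP_{S_P}$, so the two index-marginal constraints hold. Since $d_A^i$ ignores $j$ and $y$, the first constraint's left-hand side collapses to the cost of $\pi^A \le r_A$, and symmetrically the second collapses to the cost of $\pi^P \le r_P$; thus $\pi$ is feasible with objective $\E_{(x,a,y)\sim\cQ}[\ell(\theta,(x,a,y))]$.

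The main obstacle is precisely this gluing step: the two Wasserstein balls constrain \emph{different} marginals of $\cQ$, namely $\cQ_{\cX,\cA}$ and $\cQ_{\cX,\cY}$, so $\pi^A$ and $\pi^P$ share no common marginal and cannot be glued verbatim by the usual gluing lemma. The resolution is to first lift each plan to a coupling with the full target $\cQ$ through its disintegration kernel, after which both share the marginal $\cQ$ and the gluing applies. One then checks that imposing conditional independence of $i$ and $j$ given the target does not inflate either transport cost — which is exactly where the coordinate-locality of $d_A$ and $d_P$ is needed — and notes that the required measurability and existence of the kernels and optimal plans is routine given the finite support of $\tP_{S_A}$ and $\tP_{S_P}$.
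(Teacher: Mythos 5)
Your proof is correct and follows essentially the same route as the paper: the forward direction by marginalizing a feasible $\pi$ to exhibit the two certifying couplings, and the reverse by lifting the two optimal plans to couplings with the full target $\cQ$ and gluing them along that common marginal. In fact your kernel form $\pi(di,dj,dz)=\cQ(dz)\,K^A_{(x,a)}(di)\,K^P_{(x,y)}(dj)$ is the rigorous version of the paper's construction, which writes the unnormalized product $\pi^*_{S_A,(\cX,\cA,\cY)}\cdot\pi^*_{S_P,(\cX,\cA,\cY)}$ (omitting the division by $\cQ$ needed for the marginals to check out) but plainly intends the same conditionally-independent gluing.
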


\begin{restatable}{theorem}{thmfeasibility}
\label{thm:feasibility}
$D_{d_\cX}(\tP_{S^\cX_A}, \tP_{S^\cX_P}) \le r_A + r_P$, if and only if there exists a feasible solution for \eqref{eqn:disjoint-learning-transport}.
\end{restatable}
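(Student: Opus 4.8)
The plan is to reduce the feasibility of \eqref{eqn:disjoint-learning-transport} to the existence of a coupling of the two empirical \emph{feature} marginals whose transport cost is at most $r_A+r_P$, exploiting that the $x$-coordinate is shared between the two anchor costs. Given any candidate $\{\pi^y_{i,j}\}$, I would define the induced matrix $\gamma_{i,j}=\sum_{y\in\cY}\int_{\cX,\cA}\pi^y_{i,j}(dx,da)$. The two marginal constraints of \eqref{eqn:disjoint-learning-transport} say exactly that $\sum_j\gamma_{i,j}=\tfrac1{n_A}$ and $\sum_i\gamma_{i,j}=\tfrac1{n_P}$, i.e.\ $\gamma\in\Pi(\tP_{S^\cX_A},\tP_{S^\cX_P})$ is a valid transport plan between the feature marginals. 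This $\gamma$ is the bridge between the two statements.

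For the ``only if'' direction (feasibility $\Rightarrow$ distance bound) I would use that $\kappa_A,\kappa_P\ge 0$, so $d_A^i(x,a)\ge\norm{x^A_i-x}_p$ and $d_P^j(x,y)\ge\norm{x^P_j-x}_p$. Substituting these into the two budget constraints gives
\[
\sum_{i,j,y}\int \norm{x^A_i-x}_p\,\pi^y_{i,j}(dx,da)\le r_A,\qquad \sum_{i,j,y}\int \norm{x^P_j-x}_p\,\pi^y_{i,j}(dx,da)\le r_P.
\]
Adding these and applying the triangle inequality $\norm{x^A_i-x^P_j}_p\le\norm{x^A_i-x}_p+\norm{x-x^P_j}_p$ pointwise under the integral yields $\sum_{i,j}\gamma_{i,j}\norm{x^A_i-x^P_j}_p\le r_A+r_P$. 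Since $\gamma$ is a feasible coupling of $\tP_{S^\cX_A}$ and $\tP_{S^\cX_P}$, its cost is at least $D_{d_\cX}(\tP_{S^\cX_A},\tP_{S^\cX_P})$, so the latter is at most $r_A+r_P$.

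For the ``if'' direction I would argue by explicit construction. Let $\gamma$ attain $D:=D_{d_\cX}(\tP_{S^\cX_A},\tP_{S^\cX_P})\le r_A+r_P$; the optimum is attained because the couplings of two finitely supported measures form a compact transportation polytope and the cost is linear. Set $t=r_A/(r_A+r_P)$ (if $r_A=r_P=0$ then $D=0$ and any matching supported where $x^A_i=x^P_j$ works), and for each pair define $\pi^y_{i,j}$ to be the point mass of weight $\gamma_{i,j}$ located at $(x_{i,j},a^A_i)$ with label $y=y^P_j$, where $x_{i,j}=(1-t)x^A_i+t\,x^P_j$. The marginal constraints hold since summing recovers $\sum_j\gamma_{i,j}=\tfrac1{n_A}$ and $\sum_i\gamma_{i,j}=\tfrac1{n_P}$. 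Because the choices $a=a^A_i$ and $y=y^P_j$ zero out the auxiliary and label terms, and because $x_{i,j}$ lies on the segment from $x^A_i$ to $x^P_j$ so that $\norm{x^A_i-x_{i,j}}_p=t\norm{x^A_i-x^P_j}_p$ and $\norm{x^P_j-x_{i,j}}_p=(1-t)\norm{x^A_i-x^P_j}_p$ by homogeneity of the norm, the two budgets evaluate to $tD\le r_A$ and $(1-t)D\le r_P$, so $\pi$ is feasible.

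The main obstacle is the construction in the ``if'' direction: one must place the common target $(x,a,y)$ so that \emph{both} budget constraints are met simultaneously. The two ideas that make this work are (i) decoupling the auxiliary-feature and label costs from the shared $x$-cost by placing the targets at $a^A_i$ and $y^P_j$, which is free, and (ii) interpolating $x$ along the segment $[x^A_i,x^P_j]$ with a single split parameter $t$, so the $A$- and $P$-budgets become $tD$ and $(1-t)D$ and can be balanced exactly when $D\le r_A+r_P$ --- this inequality is precisely what makes the admissible interval for $t$ nonempty. The remaining items (attainment of $\gamma$, nonnegativity of $\kappa_A,\kappa_P$, convexity of the feature domain so that $x_{i,j}\in\cX$, e.g.\ $\cX=\R^{m_1}$, and the homogeneity computation for the interpolated norm) are routine.
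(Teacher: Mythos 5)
Your proposal is correct and matches the paper's proof essentially step for step: the same interpolated point mass at $\bigl((1-t)x^A_i + t\,x^P_j,\,a^A_i,\,y^P_j\bigr)$ with $t = r_A/(r_A+r_P)$ weighted by the optimal feature coupling for one direction, and the same drop-the-$\kappa$-terms-plus-triangle-inequality argument via the induced coupling $\gamma_{i,j}$ for the other. Your added remarks (attainment of $\gamma$, the $r_A=r_P=0$ case, and the implicit convexity of $\cX$ needed for $x_{i,j}\in\cX$) are minor refinements the paper leaves tacit.
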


\subsection{Strong Duality}
\label{subsec:duality}
We claim that the following problem is the dual to problem \eqref{eqn:disjoint-learning-transport} and show that strong duality holds between them:
\begin{align}
    &\inf_{\substack{\alpha_A, \alpha_P, \\ \{\beta_{i}\}, \{\beta'_j\}}} \alpha_A r_A + \alpha_P r_P + \frac{1}{n_A}\sum_{i\in [n_a]} \beta_{i} + \frac{1}{n_P} \sum_{j \in [n_P]} \beta'\label{eqn:transport-dual}\\
    \text{s.t.} &\sup_{(x,a)} \left(\ell(\theta, (x, a, y)) - \alpha_A d^i_A(x,a)  -  \alpha_P d^j_P(x,y)\right) \le \beta_{i} + \beta'_j \quad \forall i \in [n_A], j \in [n_P], y \in \cY \nonumber
\nonumber
\end{align}

For fixed $\theta$, we'll write $d^*(\theta, r_A, r_P)$ to denote the optimal value for the above dual problem \eqref{eqn:transport-dual}. As in \cite{shafieezadeh2015distributionally}, strong duality directly follows from \cite{shapiro2001duality}, but to be self-contained, we include the proof in Appendix \ref{app:duality} which follows the same proof structure presented in \cite{villani2003topics}.

\begin{restatable}{theorem}{thmourduality}
\label{thm:our-duality}
If there exists a feasible solution for the primal problem \eqref{eqn:disjoint-learning-transport}, then we have that strong duality holds between the primal problem \eqref{eqn:disjoint-learning-transport} and its dual problem \eqref{eqn:transport-dual}: $p^*(\theta, r_A, r_P) = d^*(\theta, r_A, r_P)$ for fixed $\theta$. 
\end{restatable}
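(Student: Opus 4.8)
The plan is to recognize \eqref{eqn:disjoint-learning-transport} as an infinite-dimensional linear program over the finite family of nonnegative measures $\{\pi^y_{i,j}\}_{i \in [n_A], j \in [n_P], y \in \cY}$ on the compact set $\cX \times \cA$, with a linear objective and finitely many linear constraints (two inequalities encoding the radii $r_A, r_P$, and $n_A + n_P$ equalities encoding the anchor marginals), and to obtain \eqref{eqn:transport-dual} as its Lagrangian dual. First I would attach multipliers $\alpha_A, \alpha_P \ge 0$ to the inequality constraints and free multipliers $\{\beta_i\}, \{\beta'_j\}$ to the equality constraints and form the Lagrangian $L$. Collecting terms, $L$ equals $\alpha_A r_A + \alpha_P r_P + \frac{1}{n_A}\sum_i \beta_i + \frac{1}{n_P}\sum_j \beta'_j + \sum_{i,j,y}\int \big(\ell(\theta,(x,a,y)) - \alpha_A d^i_A(x,a) - \alpha_P d^j_P(x,y) - \beta_i - \beta'_j\big)\, \pi^y_{i,j}(dx,da)$. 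Maximizing over $\pi \ge 0$ is finite if and only if every integrand coefficient is nonpositive pointwise, which is exactly the semi-infinite constraint $\sup_{(x,a)}(\ell - \alpha_A d^i_A - \alpha_P d^j_P) \le \beta_i + \beta'_j$; in that case the supremum is the linear dual objective. This identifies \eqref{eqn:transport-dual} as $\inf_{\alpha \ge 0, \beta}\sup_{\pi \ge 0} L$ and delivers weak duality for free, since $p^*(\theta,r_A,r_P) = \sup_{\pi}\inf_{\alpha \ge 0,\beta} L \le \inf_{\alpha\ge 0,\beta}\sup_\pi L = d^*(\theta,r_A,r_P)$.

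The substance of the theorem is the reverse inequality (no duality gap), and I would establish it through the Fenchel--Rockafellar route of Theorem 1.3 of \cite{villani2003topics}, as indicated after the statement. Because $\cY$, $[n_A]$, $[n_P]$ are finite, the natural Banach space is $E = \mathcal{C}(\cX \times \cA)^{\,|\cY| n_A n_P}$, a finite product of copies of the continuous functions on the compact space $\cX \times \cA$, whose dual is the corresponding product of spaces of signed Radon measures. On $E$ I would introduce two convex functionals: one that is zero precisely when the bundled potentials dominate the shifted loss $\ell(\theta,(x,a,y)) - \alpha_A d^i_A - \alpha_P d^j_P$ and $+\infty$ otherwise, and one that records the linear dual objective while enforcing the affine dependence of the potentials on $(\alpha_A,\alpha_P,\beta_i,\beta'_j)$. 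A direct computation of their Legendre--Fenchel conjugates returns the transport objective $\sum_{i,j,y}\int \ell \, d\pi^y_{i,j}$ subject to exactly the nonnegativity, marginal, and radius constraints of \eqref{eqn:disjoint-learning-transport}, so that Fenchel--Rockafellar equates the two optimal values.

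The hard part will be verifying the constraint qualification that powers Fenchel--Rockafellar, namely the existence of a point of $E$ where one functional is finite and the other is finite and continuous. Compactness of $\cX$ and $\cA$ is what makes this clean: it guarantees that $\mathcal{C}(\cX\times\cA)$ is a Banach space under the sup norm, that the pointwise-domination constraint set has nonempty interior so that a large enough constant bundle of potentials is an interior point of continuity, and that the supremum defining each dual constraint is attained, so that a maximizing transport plan may be taken to concentrate on the maximizers (which is also what lets the primal supremum be attained). The assumed existence of a feasible solution to \eqref{eqn:disjoint-learning-transport} is used to guarantee that $p^*(\theta,r_A,r_P)$ is finite rather than $-\infty$: since the radius constraints can render the coupling polytope empty --- precisely the situation delineated by Theorem \ref{thm:feasibility} --- without this hypothesis one could have the trivial gap $p^* = -\infty < d^*$. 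With both hypotheses in hand, Fenchel--Rockafellar yields $p^*(\theta,r_A,r_P) = d^*(\theta,r_A,r_P)$, and I would close by noting, as the surrounding text does, that compactness is only a convenience, removable via the truncation-and-limiting argument of Theorem 1.3 of \cite{villani2003topics} at the cost of a more delicate passage to the limit.
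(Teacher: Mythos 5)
Your proposal follows essentially the same route as the paper's own proof: a Lagrangian reformulation of \eqref{eqn:disjoint-learning-transport}, then Fenchel--Rockafellar duality \`a la Theorem 1.3 of \cite{villani2003topics} on the space of continuous bounded functions over $\cX \times \cA$ (indexed by $i,j,y$), with the same two convex functionals --- the indicator of the pointwise-domination constraint $u + \ell \le 0$ and the dual objective restricted to the affine family of bundled potentials $-\alpha_A d^i_A - \alpha_P d^j_P - \beta_i - \beta'_j$ --- whose conjugates recover the transport problem, and with primal feasibility ruling out $p^* = -\infty$. If anything, you are more careful than the paper on the constraint qualification: you correctly require only \emph{one} functional to be continuous at the common point (a large constant bundle of potentials, valid since $\ell$ is bounded on the compact domain), whereas the domain of the second functional is a finite-dimensional affine set with empty interior, so the paper's stated hypothesis that both $f$ and $g$ be continuous at $z_0$ cannot literally be met and must be read in your weaker, standard form.
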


In other words, we have successfully transformed the saddle point problem~\eqref{eqn:disjoint-learning-orig} into a minimization problem over $\theta$ and the dual variables $\alpha_A, \alpha_P, \{\beta_i\}$ and $\{\beta'_j\}_j$: 
\begin{align}
    &\min_{\substack{\theta \in \Theta, \alpha_A, \alpha_P, \\ \{\beta_{i}\}, \{\beta'_j\}}} 
    \alpha_A r_A + \alpha_P r_P + \frac{1}{n_A}\sum_{i\in [n_a]} \beta_{i} + \frac{1}{n_P} \sum_{j \in [n_P]} \beta'_j \label{eqn:dual-prob}\\
    &\text{s.t.} \max_{y \in \{\pm 1\}}\sup_{(x,a)} \left(\ell(\theta, (x, a, y)) - \alpha_A d^i_A(x,a)  -  \alpha_P d^j_P(x,y)\right) \le \beta_{i} + \beta'_j \quad \forall i \in [n_A], j \in [n_P] \nonumber
\end{align}

\subsection{Replacing the $\sup$ Term}
\label{subsec:removing-sup}
Note that $\sup_{(x,a)}$ in the constraint makes it hard to actually compute the expression: it's neither concave or convex in terms of $(x,a)$ as it's the difference between convex functions $\ell(\theta, (x,a,y))$ and $\alpha_A d^i_A(x,a) + \alpha_P d^j_P(x,y)$. In that regard, we show how to approximate the $\sup$ term in the constraint of dual problem \eqref{eqn:transport-dual} with some closed form expression by extending the techniques used in \cite{shafieezadeh2015distributionally} who study when there's only one ``anchor'' point --- i.e. $\sup_{x} \ell(\theta, x) - \alpha d_\cX(x_i, x)$ as opposed to in our case with two anchor points. 

First, let's focus only on the terms that actually depend on $(x,a)$ and ignore our dependence on $y$ briefly:
\begin{align*}
    &\sup_{(x,a)} \ell(\theta, (x, a, y)) - \alpha_A d^i_A(x,a)  -  \alpha_P d^j_P(x,y)\\
    &=\kappa_P \alpha_P |y^P_j-y| + \left(\sup_{(x,a)} h(y \theta, (x, a)) - \alpha_A ||x^A_i - x||_p - \alpha_P ||x^P_j - x||_p + \alpha_A \kappa_A||a^A_i -a||_{p'}\right).
\end{align*}
We obtain an upper bound for the supremum term in the lemma below whose full proof can be found in Appendix~\ref{app:tractable-optimization}.
\begin{restatable}{theorem}{thmsupupper}
\label{thm:sup_upper}
Fix any $y \in \cY$ and $\theta$. Write $\theta_1 = \theta[1:m_1]$ and $\theta_2 = [m_1+1:m_1+m_2]$. Suppose $p \neq 1$ and $p \neq \infty$. If $||\theta_1||_{p,*} \le \alpha_A + \alpha_P$ and $||\theta_2||_{p',*} \le \kappa_A \alpha_A$, then
\begin{align*}
    &\sup_{(x,a)} h(y\theta, (x, a)) - \alpha_A ||x^A_i - x ||_{p} -  \alpha_P||x^P_j - x ||_{p} - \alpha_A \kappa_A ||a^A_i - a||_{p'} \\
    &\le f\Bigg(\Bigg(\frac{\min(\alpha_A,\alpha_P)||\theta_1||_{*}||x^A_i-x^P_j||}{\alpha_A + \alpha_P} + \frac{\langle y\theta_1, \alpha_A x^A_i + \alpha_P x^P_j\rangle}{\alpha_A + \alpha_P}\Bigg) + \langle y\theta_2, a^A_i\rangle\Bigg) - \min(\alpha_A, \alpha_P) || x^A_i - x^P_j||_{p}.
\end{align*}
Otherwise, $\sup_{(x,a)} h(y\theta, (x, a)) -\alpha_A||x -  x^A_i||_{p}  - \alpha_P||x-x^P_j||_{p} - \alpha_A \kappa_A ||a^A_i - a||_{p'}$ evaluates to $\infty$.
\end{restatable}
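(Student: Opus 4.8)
The plan is to chain the two preceding results and then recognize the remaining optimization over $b$ as a Fenchel biconjugate. First I would invoke Lemma \ref{lem:conjugate-infimal-repr}, which already splits the statement into exactly the two cases I need. If either $\|\theta_1\|_{p,*} > \alpha_A + \alpha_P$ or $\|\theta_2\|_{p',*} > \kappa_A \alpha_A$, the lemma gives $R = \infty$, which is precisely the ``otherwise'' clause of the theorem, so nothing more is required there. In the complementary case $\|\theta_1\|_{p,*} \le \alpha_A + \alpha_P$ and $\|\theta_2\|_{p',*} \le \kappa_A\alpha_A$, the lemma rewrites $R$ as
\[
R = \sup_{b \in [0,1]} -f^*(b) + (g^i_1 \square g^j_2)(b\theta_1) + \langle b\theta_2, a^A_i\rangle,
\]
and my goal becomes to upper bound this one-dimensional supremum.

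Next I would substitute the bound from Theorem \ref{thm:inf-conv-upper-bound}. Since $\|\theta_1\|_{p,*} \le \alpha_A + \alpha_P$ and $p \neq 1, \infty$, that theorem applies and controls $(g^i_1 \square g^j_2)(b\theta_1)$ \emph{uniformly for every} $b \in [0,1]$ by a term affine in $b$ plus the $b$-independent offset $-\min(\alpha_A,\alpha_P)\|x^A_i - x^P_j\|$. Because the supremum is monotone under replacing the objective by a pointwise upper bound, I can substitute this estimate inside $\sup_{b}$ and pull the $b$-free offset out of the supremum. Collecting the coefficients of $b$, the bracketed term is exactly
\[
C := \frac{\min(\alpha_A,\alpha_P)\|\theta_1\|_*\|x^A_i - x^P_j\| + \langle \theta_1, \alpha_A x^A_i + \alpha_P x^P_j\rangle}{\alpha_A + \alpha_P} + \langle \theta_2, a^A_i\rangle,
\]
so that $R \le \bigl(\sup_{b \in [0,1]} bC - f^*(b)\bigr) - \min(\alpha_A,\alpha_P)\|x^A_i - x^P_j\|$.

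The final step is to identify the remaining supremum. Since $f^*(b) = +\infty$ for $b \notin [0,1]$, restricting to $b \in [0,1]$ does not change the value, so $\sup_{b\in[0,1]} bC - f^*(b) = \sup_{b \in \R} bC - f^*(b) = f^{**}(C)$. As $f(t) = \log(1 + \exp(t))$ is proper, convex, and lower semicontinuous, the Fenchel--Moreau theorem gives $f^{**} = f$, hence the supremum equals $f(C)$. Substituting $C$ back yields exactly the claimed inequality, with the argument of $f$ being the displayed affine expression in $\theta_1, \theta_2$ and the additive $-\min(\alpha_A,\alpha_P)\|x^A_i - x^P_j\|_p$ term. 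The main thing to be careful about — the only real content beyond bookkeeping — is justifying that the biconjugate identity applies and that upper bounding the integrand of a supremum preserves the inequality; both are routine but worth stating explicitly, since the divergence of the infimal convolution outside the norm constraints is precisely what forces the case split inherited from Lemma \ref{lem:conjugate-infimal-repr}.
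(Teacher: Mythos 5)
Your proposal is correct and follows essentially the same route as the paper's proof: both invoke Lemma \ref{lem:conjugate-infimal-repr} for the representation and the case split, substitute the pointwise-in-$b$ bound from Theorem \ref{thm:inf-conv-upper-bound} inside the supremum, and close by identifying $\sup_{b\in[0,1]} bC - f^*(b) = f(C)$ via the biconjugate identity $f^{**}=f$. Your added care in justifying the biconjugate step (noting $f^*=+\infty$ outside $[0,1]$ and citing Fenchel--Moreau) is a welcome explicitness, but it is the same argument.
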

\begin{proof}[Proof Sketch]
    Similar to \cite{shafieezadeh2015distributionally}, we leverage convex conjugacy in order to re-express the $\sup$ term. However, because we have multiple anchor points, the re-expression results in an infimal convolution of \emph{two} linear functions with bounded norm constraints as opposed to the case of \cite{shafieezadeh2015distributionally} where they only have to handle a convex conjugate of a \emph{single} linear function with bounded norm constraint and hence find an exact closed form expression. Therefore, in Appendix~\ref{subsec:removing-sup} and~\ref{subsec:infimal-convolution}, we develop new techniques where we show (1) infimal convolution of linear functions with norm constraints is convex, (2) obtain a closed form solution of the infimal convolution at two extreme points, and (3) use linear interpolation of these extreme points to obtain an upper-bound, as a line segment of the two extreme points sits above the graph for convex functions.
\end{proof}

Equipped with the above upper bound on the supremum term, we can imagine trying to replace the supremum term with the above upper bound in order to get a feasible dual solution to the dual problem \eqref{eqn:dual-prob}. However, one may worry that there is a big gap between the original supremum term and our upperbound in Theorem~\ref{thm:sup_upper}. 

To this end, we further show that we can in fact approximate the supremum term with one more trick and hence obtain an approximate dual solution. Suppose we write $\hat{x}_{i,j} = \begin{cases}x^P_j \quad\text{if $\alpha_A < \alpha_P$}\\ x^A_i\end{cases}$ and  $\hat{\alpha} = \min(\alpha_A, \alpha_P)$. Note that by definition, the value measured at $(\hat{x}_{i,j}, a^A_i)$ is a lower bound on the supremum. In other words, we have
\begin{align*}
    &h(y\theta, (\hat{x}_{i,j}, a^A_i)) - \alpha_A ||x^A_i - \hat{x}_{i,j} ||_{p} -  \alpha_P||x^P_j - \hat{x}_{i,j} ||_{p} = f(\langle y\theta, (\hat{x}_{i,j}, a^A_i) \rangle) - \hat{\alpha} ||x^A_i - x^P_j||_{p}\\
    &\le \sup_{(x,a)} h(y\theta, (x, a)) - \alpha_A ||x^A_i - x ||_{p} -  \alpha_P||x^P_j - x ||_{p} - \alpha_A \kappa_A ||a^A_i - a||_{p'} \\
    &\le  f\Bigg(\Bigg(\frac{\min(\alpha_A,\alpha_P)||\theta_1||_{*}||x^A_i-x^P_j||}{\alpha_A + \alpha_P} + \frac{\langle y\theta_1, \alpha_A x^A_i + \alpha_P x^P_j\rangle}{\alpha_A + \alpha_P} + \langle y\theta_2, a^A_i\rangle \Bigg)- \hat{\alpha} || x^A_i - x^P_j||_{p}.
\end{align*}

Now, via H\"older's inequality, we can show the lower bound and the upper bound above on the supremum term are in fact very close, meaning by using either the upper bound or the lower bound, we can approximate the supremum very well. Here's a lemma that shows that the value evaluated at $(\hat{x}_{i,j}, a^A_i)$ is pretty close to the upper bound in Theorem~\ref{thm:sup_upper}:
\begin{restatable}{lemma}{lemholderlem}
\label{lem:holder-lem}
\begin{align*}
     f\Bigg(\Bigg(\frac{\min(\alpha_A,\alpha_P)||\theta_1||_{*}||x^A_i-x^P_j||}{\alpha_A + \alpha_P} + \frac{\langle y\theta_1, \alpha_A x^A_i + \alpha_P x^P_j\rangle}{\alpha_A + \alpha_P} + \langle y\theta_2, a^A_i\rangle \Bigg) -f(\langle y\theta, (\hat{x}_{i,j}, a^A_i)\rangle)
    &\le 2\hat{\alpha}||x^A_i - x^P_j||.
\end{align*}
\end{restatable}

In other words, replacing the original supremum constraint with a constraint evaluated at $(\hat{x}_{i,j}, a^A_i)$ will not incur too much additional error.
Finally, using the fact that $f(-t) = f(t) + t$ for logistic function $f$, we can bring back the terms that depend on $y$ and approximate the original supremum constraint in the following manner:
\begin{restatable}{corollary}{corsupapproxeverything}
\label{cor:sup-approx-everything}
\begin{align*}
&\left(\max_{y \in \{\pm 1\}}\sup_{(x,a)} \left(\ell(\theta, (x, a, y)) - \alpha_A d^i_A(x,a)  -  \alpha_P d^j_P(x,y)\right) \right) \\
&-\left(f(\langle y^P_j \theta, (\hat{x}_{i,j}, a^A_i) \rangle) + \max(y^P_j\langle\theta, (\hat{x}_{i,j}, a^A_i) \rangle-\alpha_P\kappa_P,0) -\hat{\alpha}||x^A_i -x^P_j|| \right)\le 2\hat{\alpha}||x^A_i - x^P_j||
\end{align*}
\end{restatable}

In other words, replacing the supremum constraint with the constraint evaluated at $(\hat{x}_{i,j}, a^A_i)$ and using the above trick to remove the max over $y$ will arrive at the following problem, for which we provide an approximation guarantee in Theorem~\ref{thm:main-thm}.
\begin{align}
    &\min_{\alpha_A, \alpha_P, \theta_1, \theta_2, \{\beta_i\}, \{\beta'_j\}} (\alpha_A r_A + \alpha_P r_P) +\frac{1}{n_A}\sum_{i\in [n_a]} \beta_{i} + \frac{1}{n_P} \sum_{j \in [n_P]} \beta'_j \label{eqn:convex-approx-prob}\\
    &\text{s.t.} f(y^P_j\langle\theta, (\hat{x}_{i,j}, a^A_i) \rangle) + \max(y^P_j\langle\theta, (\hat{x}_{i,j}, a^A_i) \rangle-\alpha_P\kappa_P,0) -\hat{\alpha}||x^A_i -x^P_j|| \le \beta_i + \beta'_j \quad \forall i \in [n_A], j \in [n_P] \nonumber \\
    &||\theta_1||_{*} \le \alpha_A + \alpha_P, ||\theta_2||_{*} \le \kappa_A \alpha_A. \nonumber
\end{align}
\begin{restatable}{theorem}{thmmainthm}
\label{thm:main-thm}
We can solve problem~\eqref{eqn:convex-approx-prob} by solving two convex optimization problems. And the optimal $\theta^*$ for the above problem ~\eqref{eqn:convex-approx-prob} is such that its objective value for the original problem \eqref{eqn:disjoint-learning-orig} is at most $ 2\hat{\alpha} \max_{i \in [n_A],j \in [n_P]} ||x^A_i - x^P_j||$ greater than the optimal solution: \begin{align*}
 &\sup_{\cQ \in W(S_A, S_P, r_A, r_P)} \E_{(x,a,y) \sim \cQ}[\ell(\theta^*, (x,a,y))] -2\hat{\alpha} \max_{i \in [n_A],j \in [n_P]} ||x^A_i - x^P_j|| \\
 \le &\min_{\theta \in \Theta} \sup_{\cQ \in W(S_A, S_P, r_A, r_P)} \E_{(x,a,y) \sim \cQ}[\ell(\theta, (x,a,y))].
\end{align*}
\end{restatable}

Just as in \cite{shafieezadeh2015distributionally}, two convex optimization problems that problem \eqref{eqn:convex-approx-prob} decomposes into can be solved by IOPT and YALMIP. In addition, we remark that $2\hat{\alpha} \max_{i \in [n_A], j \in [n_P]}||x^A_i - x^P_j||$ is a reasonable approximation guarantee because this value should be in the same order as $\alpha_A r_A + \alpha_P r_P$: recall that we have argued in Theorem \ref{thm:feasibility}, a feasible solution exists if and only if $D_{d_\cX}(\tP_{S^\cX_A}, \tP_{S^\cX_P}) \le r_A + r_P$. Additionally, the worst case pairwise distance can actually be improved with an additional assumption: since any underlying coupling for the Wasserstein distance most likely transports non-zero probability mass between only close points, we can imagine considering only the k-nearest-neighbors of each point as opposed to all possible pairs between two datasets, hence decreasing the approximation error to the maximal pairwise distance between some point and its k-nearest-neighbor. We make this point more formal in Appendix \ref{sec:opt}.

\section{Experiments}
\label{sec:experiments}
We now describe an experimental evaluation of our method on a synethetic dataset and real world datasets. In all our epxeriments, we use the approach discussed in Appendix~\ref{sec:opt} in which we make practically simplifying assumptions in order to solve the problem \eqref{eqn:convex-approx-prob} via projected gradient descent. We use $2$-norm throughout the experiments: i.e. $p,p'=2$.

\subsection{Synthetic Data}
\begin{table*}[t]
\centering
\begin{tabular}{|l|l|l|l|l|}
\hline
 & LR  & RLR & DRLR & DJ \\ \hline
Accuracy  & $0.4126\pm 0.1049$ & $0.5786\pm 0.3992$ & $0.9068 \pm 0.0076$ & $\textbf{0.9923} \pm 0.0057$ \\ 
\hline
\end{tabular}
\caption{\label{tab:synthetic-accuracy}Average accuracy of each method over 10 experiment runs and standard deviations for synthetic dataset with a distribution shift}
\end{table*}
We briefly discuss how we create the snythetic dataset. We want our synthetic data generation process to encompass the components that are unique to our robust data join setting --- namely, distribution shift and auxiliary unlabeled dataset that contains additional features that should help with the prediction task.

To that end, we discuss the data generation process at a high level here and more fully in Appendix \ref{app:experiments}. We have two groups such that the ideal hyperplane that distinguishes the positive and negative points is different for each group. We introduce distribution shift into the setting by having the original labeled training dataset consist mostly of points from the first group and the test dataset consist mostly from the second group. As for specific details of the data generation process that are important for our setting, we have one of the features to carry information regarding which group the point belongs to. 

As for the unlabeled dataset with auxiliary features, the points will mostly come from the second group, hence being closer to the test distribution. Furthermore, we include additional features that are present in the unlabeled dataset to be highly correlated with the true label, although this unlabeled dataset doesn't contain the true label of each point. 

Because we want our baselines that compare our distributionally robust data join approach (DJ) against to be in the same model class (i.e. logistic regression) as our method for fair comparison, we consider the following baselines:
\begin{enumerate}
    \item LR: Vanilla logistic regression trained on labeled dataset $S_P$
    \item RLR: Regularized logistic regression trainined on labeled dataset $S_P$
    \item DRLR: Distributionally robust logistic regression trainied on $S_P$
\end{enumerate}
The result of this experiment can be found in Table~\ref{tab:synthetic-accuracy}. There are few plausible reasons as to why our approach (DJ) does extremely well in this synthetic experiment. Our distributionally robust data join is definitely taking advantage of the proximity of unlabeled dataset to the test distribution in that the majority of points are both from the second group. Although regularized and distributionally robust logistic regression is trying to be robust against some form of distribution shift, the set of distributions they are hedging against may be too big as they are hedging against all distributions that are close to the empirical distribution over the labeled dataset. By contrast, the set of distributions that distributionally robust data join may be smaller because it's hedging against the set of distributions that are close to the labeled dataset \emph{and} the unlabeled dataset. Finally, auxiliary features in the unlabeled dataset are providing information very relevant for the prediction task.

\subsection{UCI Datasets}

\begin{table*}[t]
\centering
\centerline{
\begin{tabular}{|c|c|c|c|c|c|c|}
\hline
 & BC ($m_1 = 5$) & BC ($m_1 = 25$) & IO ($m_1 =4$) & IO ($m_1 =25$) & HD & 1vs8 \\ \hline
DJ  & $\textbf{0.9140}\pm 0.0368$ & $0.9281 \pm 0.0155$ & $\textbf{0.8208}\pm 0.0816$ & $\textbf{0.7896}\pm 0.04885$ & $\textbf{0.7495}\pm 0.0374$ & $\textbf{0.90841}\pm 0.0270$ \\ \hline
LR  & $0.9012 \pm 0.0294$ & $0.9140 \pm 0.0393$ &  $0.7764 \pm 0.1560$ &  $0.7868 \pm 0.0653$ & $0.7286 \pm 0.0504$ & $0.8729 \pm 0.0337$ \\ \hline
RLR & $0.9053\pm 0.0228$ & $\textbf{0.9287} \pm 0.0199$ & $0.7915 \pm 0.1417$ & $0.7868 \pm 0.0690$ & $0.7363 \pm 0.0565$ & $0.8953 \pm 0.0250$ \\ \hline
LRO & $0.8789 \pm 0.0318$ & $0.8789 \pm 0.0318$ & $0.7330 \pm 0.0788$ & $0.7330 \pm 0.0788$ & $0.6626 \pm 0.0569$ & $0.7766 \pm 0.0599$ \\ \hline
RLRO & $0.8953 \pm 0.0212$ & $0.8953 \pm 0.0212$ & $0.7377 \pm 0.0800$ & $0.7377 \pm 0.0800$ & $0.6714 \pm 0.0568$ & $0.8710 \pm 0.0450$ \\ \hline
\hline 
FULL & $0.9684 \pm 0.0143$ & $0.9684 \pm 0.0143$ & $0.8754 \pm 0.0764$ & $0.8754 \pm 0.0764$ & $0.8319 \pm 0.0311$ & $0.9495 \pm 0.0222$ \\ \hline
\end{tabular}
}
\caption{\label{tab:UCI-accuracy}Average accuracy of each method over 10 experiment runs and standard deviations for three UCI datasets} 
\end{table*}

Here we discuss some experiments we have run and show that as a proof of concept, our distributionally robust data join framework has the potential to be practical empirically. However, we remark unlike in the synthetic data experiment, we do not introduce any distribution shift (i.e. training and test are iid samples from the same distribution) and also choose the additional features for the unlabeled dataset in an arbitrary way because of our lack of contextual expertise of the features in each dataset. Therefore, the gaps between our method and the baselines we consider are not as impressive as the performance gap we see in the synthetic experiments. 

 We use four UCI datasets for our real world dataset experiment: Breast Cancer dataset (BC), Ionosphere dataset (IO), Heart disease dataset (HD), and Handwritten Digits dataset with 1's and 8's (1vs8). We provide more details about these datasets in Appendix \ref{app:experiments}. For all these datasets, each experiment run consists of the following: (1) randomly divide the dataset into $S_{\text{train}} = \{(x_i, a_i, y_i)\}_{i=1}^{n_{\text{train}}}$ and $S_{\text{test}}$, (2) create the prediction label dataset and auxiliary dataset where $v$ data points belong to both datasets: $S_P = \{(x_i, y_i)\}_{i=1}^{n_P + v}$ and $S_A = \{(x_i, a_i)\}_{n_P+1}^{n_\text{train}}$.

We compare our method of joining $S_A$ and $S_P$, which we denote as DJ, to the following baselines: 
\begin{enumerate}
    \item LR: Logistic regression trained on $S_P$ 
    \item RLR: Regularized logistic regression on $S_P$
    \item LRO: Logistic regression on overlapped data $\{(x_i, a_i, y_i)\}_{i=n_P+1}^{n_P +v}$
    \item RLRO: Regularized logistic regression on overlapped data $\{(x_i, a_i, y_i)\}_{i=n_P+1}^{n_P +v}$.
    \item FULL: full training on $\{(x_i, a_i, y_i)\}_{i=1}^{n_{\text{train}}}$
\end{enumerate}
where FULL is simply to show the highest accuracy we could have achieved if the labeled dataset actually had the auxiliary features and the unlabeled dataset had the labels. The results of the experiment can be found in Table \ref{tab:UCI-accuracy}, and we include further details of the experiment in Appendix \ref{app:experiments}. Without any distribution shift, the distributionally robust data join method is solving a somewhat harder problem than the other baselines because of its hedging against other nearby distributions. Yet it can be seen that the use of the additional auxiliary features through our data join method helps achieve better accuracy than the baselines. 

\bibliographystyle{plainnat}
\bibliography{refs}
\onecolumn

\appendix
\section{Possible Negative Societal Impact and Limitations}
\label{app:limitation}
We do not foresee any direct negative societal impact of our work. However, just as other distributionally robust optimization methods, our robust guarantees may come at the price of achieving slightly worse accuracy. However, we note that this trade-off between more robustness and higher utility can be controlled by setting $r_A$ and $r_P$ appropriately. On a related note, another limitation of our approach is that it requires specifying $r_A$ and $r_P$; one needs to have some knowledge about how ``far'' the distributions (i.e. labeled dataset, unlabeled dataset with auxiliary features, and test distribution) may be, which is a limitation as in other methods that require setting some hyperparameters appropriately.

\section{Contributions Beyond \cite{shafieezadeh2015distributionally}}
\label{app:comparison-original-dro}
We describe our method's novelty and additional technical difficulties that we overcome specifically with repsect to \cite{shafieezadeh2015distributionally}. 

\paragraph{Use of the unlabeled dataset with auxiliary features}
The most obvious novelty is the use of the unlabeled dataset with auxiliary features. As discussed in the introduction, the benefit is two-fold in that it allows one to take advantage of the availability of the unlabeled dataset and also auxiliary features present in unlabeled dataset and the test distribution.

\paragraph{``Tripling'' between labeled dataset, unlabeled dataset with auxiliary features, and the test distribution} We emphasize that extending the result of \cite{shafieezadeh2015distributionally} to create a new anchor for the unlabeled dataset $S_P$ is not immediate. One naive way to reformulate the problem \eqref{eqn:disjoint-learning-orig} is to consider three couplings: a coupling $\pi_A$ between $\tP_{S_A}$ and the test distribution $\cQ_1$ and a coupling $\pi_P$ between $\tP_{S_P}$ and test distribution $\cQ_2$, and finally a coupling $\pi_{\text{test}}$ between $\cQ_1$ and $\cQ_2$. Because the test distribution that we consider has to be one single test distribution, we have to ensure that the Wasserstein distance between $\cQ_1$ and $\cQ_2$ be 0 with this formulation. This final constraint adds much complication that causes the problem to be not computationally tractable. In other words, our way of consider a coupling simultaneously over $\tP_{S_A}$, $\tP_{S_P}$, and the test distribution $\cQ$ is novel.

\paragraph{Infimal Convolution}
The use of ``tripling'' as described in Section~\ref{subsec:coupling-formulation} yields a dual that looks very similar to that of \cite{shafieezadeh2015distributionally}. However, our problem has two anchor points as opposed to a single anchor point due to our use of two datasets as opposed to one. When there is only a single anchor point as in \cite{shafieezadeh2015distributionally}, looking at the convex conjugates of the terms in the supremum yields $\infty$ when the dual norm of $\theta$ is bigger than $\alpha$ or evaluates to the loss evaluated at the anchor point --- see their Lemma 1. In our case, due to having two anchor points, we get an infimal convolution of \emph{two} linear functions with bounded norm constraints, which doesn't have a closed form as before. Hence, we additionally show that infimal convolution is convex, calculate the closed form of the infimal convolution at the extreme points and bound the overall supremum term via linear interpolation at these points. We further show in Theorem~\ref{thm:main-thm} that our upper-bound has an additive approximation error guarantee and how with the structural assumption about the optimal transport, we can expect the additive approximation error to be small in Appendix~\ref{sec:opt}. 

\paragraph{Gradient-based Optimization Approach}
\cite{shafieezadeh2015distributionally} resort to YALMIP and IPOPT, which are modeling language for MATLAB and the state-of-the-art nonlinear programming solver respectively in order to solve their problem. Our method can be similarly solved 
via such methods. We additionally show that we can use a gradient-based optimization approach to solve the problem given the structural assumption about $M$.

\section{Missing Details from Section \ref{sec:tractable-optimization}}
\label{app:tractable-optimization}
\subsection{Missing Details from Section \ref{subsec:coupling-formulation}}
\label{app:coupling-formulation}
Note that marginalizing $\pi$ over $i \in [n_A]$ yields a coupling $\pi_{S_P, (\cX,\cA,\cY)}$ between $\tP_{S_P}$ and $Q$, and marginalizing over $j \in [n_P]$ yields a coupling $\pi_{S_A, (\cX,\cA,\cY)}$ between $\tP_{S_A}$ and $\cQ$. Similarly, $\pi$'s marginal distribution over $(\cX,\cA,\cY)$, $S_A$ and $S_P$ is exactly $\cQ$, $\tP_{S_A}$, $\tP_{S_P}$ respectively.
\begin{align*}
    \pi_{S_P, (\cX,\cA,\cY)}((x^P_j, y^P_j), (x,a,y)) &= \sum_{i=1}^{n_A} \pi\left((x^A_i, a^A_i), (x^P_j, y^P_j), (x,a,y)\right) \\
    \pi_{S_A, (\cX,\cA,\cY)}((x^A_i, a^A_i), (x,a,y)) &= \sum_{j=1}^{n_P} \pi\left((x^A_i, a^A_i), (x^P_j, y^P_j), (x,a,y)\right) \\
    \pi_{S_A}(x^A_i, a^A_i) &= \sum_{j=1}^{n_P} \int \pi\left((x^A_i, a^A_i), (x^P_j, y^P_j), (dx,da,dy)\right) = \frac{1}{n_A} \\
    \pi_{S_P}(x^P_j, y^P_j) &= \sum_{i=1}^{n_A} \int \pi\left((x^A_i, a^A_i), (x^P_j, y^P_j), (dx,da,dy)\right) = \frac{1}{n_P}\\
    \cQ = \pi_{(\cX,\cA,\cY)}(x,a,y) &= \sum_{i=1}^{n_A}\sum_{j=1}^{n_P} \pi\left((x^A_i, a^A_i), (x^P_j, y^P_j), (x,a,y)\right) \\
\end{align*}

Using the above notations, we can re-write the constraint in $W(S_A, S_P, r_A, r_P)$ where $\pi$'s marginal distribution over $(\cX,\cA)$ must be at most $r_A$ away from $\tP_{S_A}$ in Wasserstein distance as follows:
\begin{align*}
    &\E_{(x^A_i,a^A_i), (x,a,y)) \sim \pi_{S_A, (\cX,\cA,\cY)}}\left[d_{A}((x^A_i,a^A_i), (x,a))\right] \\
    &= \sum_{i=1}^{n_A} \int d_{A}((x^A_i,a^A_i), (x,a)) \pi_{S_A, (\cX,\cA,\cY)}(x^A_i, a^A_i, (dx,da,dy)) \\ 
    &= \sum_{i=1}^{n_A} \sum_{j=1}^{n_P} \int d_{A}((x^A_i,a^A_i), (x,a))  \pi\left((x^A_i, a^A_i), (x^P_j, y^P_j), (dx,da,dy)\right) \le r_A.
\end{align*}

Similarly, we can write the other constraint that $\pi$'s marginal distribution over $(\cX, \cY)$ must be at most $r_P$ away from $\tP_{S_P}$ as 
\begin{align*}
    &\E_{(x^P_j,y^P_j), (x,a,y)) \sim \pi_{S_P, (\cX,\cA,\cY)}}\left[d_{P}((x^P_j,y^P_j), (x,y))\right] \\
    &= \sum_{i=1}^{n_A} \sum_{j=1}^{n_P} \int d_{P}((x^P_j,a^P_j), (x,a))  \pi\left((x^A_i, a^A_i), (x^P_j, y^P_j), (dx,da,dy)\right) \le r_P.
\end{align*}

Lastly, the constraint that in order $\pi$ to be a valid coupling, its marginal distribution over $S_A$ and $S_P$ should be exactly $\frac{1}{n_A}$ and $\frac{1}{n_P}$ over its support is equivalent to
\begin{align*}
    \sum_{j=1}^{n_P}\sum_{a \in A} \sum_{y \in \cY} \int \pi\left((x^A_i, a^A_i), (x^P_j, y^P_j), (dx,da,dy)\right)  = \frac{1}{n_A} \quad \forall i \in [n_A]\\
    \sum_{i=1}^{n_A}\sum_{a \in A} \sum_{y \in \cY} \int \pi\left((x^A_i, a^A_i), (x^P_j, y^P_j), (dx,da,dy)\right)  = \frac{1}{n_P} \quad \forall j \in [n_P].
\end{align*}

\thmcouplingformulation*
\begin{proof}
It's clear that for any feasible solution $\pi$ for \eqref{eqn:disjoint-learning-transport}, we must have that \[\pi_{(\cX,\cA,\cY)} \in W(S_A, S_P, r_A, r_P)
\]
as we have a coupling $\pi_{S_A, (\cX, \cA, \cY)}$ between $\tP_{S_A}$ and $\pi_{(\cX,\cA,\cY)}$ such that \[
 \E_{(x^A_i,a^A_i), (x,a,y)) \sim \pi_{S_A, (\cX,\cA,\cY)}}\left[d_{A}((x^A_i,a^A_i), (x,a))\right]  \le r_A
\]
and
\[
    \E_{(x^P_j,y^P_j), (x,a,y)) \sim \pi_{S_P, (\cX,\cA,\cY)}}\left[d_{P}((x^P_j,y^P_j), (x,y))\right] \le r_P.
\]

Also, for any $Q \in W(S_A, S_P, r_A, r_P)$, let's write the optimal transport between $\tP_{S_A}$ and $Q$ as $\pi^*_{S_A, (\cX, \cA, \cY)}$ and the optimal transport between $\tP_{S_P}$ and $Q$ as $\pi^*_{S_P, (\cX, \cA, \cY)}$. Then consider the following coupling between $\tP_{S_A}, \tP_{S_P},$ and $Q$:
\[
    \pi((x^A_i a^A_i), (x^P_j, y^P_j), (x,a,y)) = \pi^*_{S_A, (\cX, \cA, \cY)}((x^A_i a^A_i), (x,a,y)) \cdot \pi^*_{S_P, (\cX, \cA, \cY)}((x^P_j, y^P_j), (x,a,y)). 
\]
which is a product of $\pi^*_{S_A, (\cX, \cA, \cY)}$ and $\pi^*_{S_P, (\cX, \cA, \cY)}$. This $\pi$ is clearly a feasible solution for \eqref{eqn:disjoint-learning-transport}. $\pi_{S_A, (\cX, \cA, \cY)} = \pi^*_{S_A, (\cX, \cA, \cY)}$ which witnesses that its Wasserstein distance to $\tP_{S_A}$ is at most $r_A$, and the same argument applies for $\tP_{S_P}$. Also, its marginal distribution over $S_A$ and $S_P$ will be exactly $\tP_{S_A}$ and $\tP_{S_P}$ respectively because  both $\pi^*_{S_A, (\cX, \cA, \cY)}$ and $\pi^*_{S_P, (\cX, \cA, \cY)}$ is a valid coupling for $\tP_{S_A}$ and $\tP_{S_P}$ respectively.

Therefore, we must have \[p^*(\theta, r_a, r_p) = \sup_{Q \in W(S_A, S_P, r_A, r_P)} \E_{(x,a,y) \sim Q}[\ell(\theta, (x,a,y))]\]
\end{proof}

\subsection{Feasibility of Problem \eqref{eqn:disjoint-learning-transport}}
Here we focus on the feasibility of problem \eqref{eqn:disjoint-learning-transport}: more specifically, how big $r_A$ and $r_P$ needs to be in order for $W(S_A, S_P, r_A, r_P)$ to be a non-empty set. 

\thmfeasibility*
\begin{proof}
\textbf{$(\Rightarrow)$ direction:}
    Suppose $\pi^* = \arg\min_{\pi \in \Pi(\tP_a^{X}, \tP_a^{X})}\E_{\pi}[d(x,x'))]$ is the coupling between $\tP_{S^X_a}$ and $\tP_{S^X_p}$ from that results in the Wasserstein distance $D_{d_\cX}(\tP_{S^\cX_A}, \tP_{S^\cX_P}) = \E_{(x^A_i, x^P_j)\sim \pi^*}[d_X(x^A_i, x^P_j)]$. 
    
    For every $i \in [n_A]$ and $j \in [n_P]$, define 
    \begin{align*}
        x^*_{i,j} &= x^A_i - \frac{r_A}{r_A + r_P}(x^A_i - x^P_j)\\
        &=x^P_j +\frac{r_p}{r_A + r_P}(x^A_i - x^P_j)
    \end{align*}
    which is essentially a weighted average of $x^A_i$ and $x^P_j$.
    
    Note that we have
    \begin{align*}
        ||x^*_{i,j} - x^A_i|| &= ||x^A_i - \frac{r_A}{r_A + r_P}(x^A_i - x^P_j) - x^A_i||\\
        &=\frac{r_A}{r_A+r_P}||x^A_i - x^P_j||
    \end{align*}
    \begin{align*}
        ||x^*_{i,j} - x^P_j|| &= || x^P_j +\frac{r_P}{r_A + r_P}(x^A_i - x^P_j) - x^P_i||\\
        &=\frac{r_P}{r_A+r_P}||x^A_i - x^P_j||
    \end{align*}
    
    Then, construct $\pi^{a,y}_{i,j}$ as follows:
    \[
        \pi^{y^P_j}_{i,j}(x^*_{i,j}, a^A_i) = \pi^*(x^A_i, x^P_j)
     \]
    and 0 otherwise: in other words, for each $(i,j)$, there's a point mass of $\pi^*(x^A_i, x^P_j)$ at $x^*_{i,j}, a^A_{i,j}$ with $y=y^P_j$. We now show that the constructed coupling $\pi^{y}_{i,j}$ is a feasible solution for \eqref{eqn:disjoint-learning-transport}.

First, note that we can prove that its marginal distribution transport cost is bounded by $r_A$ and $r_P$. In the case of $S_A$, we have
\begin{align*}
    \sum_{i=1}^{n_A}\sum_{j=1}^{n_P} \sum_{y \in \cY}\int_{\cX,\cA} d^i_A(x,a)  \pi^{y}_{i,j}(dx, da) &=\sum_{i=1}^{n_A}\sum_{j=1}^{n_p} d^i_A(x^*_{i,j})  \pi^{y^p_j}_{i,j}(x^*_{i,j}, a^A_i)\\
    &=\sum_{i=1}^{n_A}\sum_{j=1}^{n_p} \left(|| x^*_{i,j} - x^A_{i}|| + \kappa_A ||a^A_i - a^A_i ||\right) \pi^*(x^A_i, x^P_j)\\
    &=\frac{r_A}{r_A + r_P}\sum_{i=1}^{n_A}\sum_{j=1}^{n_p} || x^P_{j} - x^A_{i}||  \pi^*(x^A_i, x^P_j)\\
    &\le r_A.
\end{align*}

For $S_P$, we can similarly show
\begin{align*}
    \sum_{i=1}^{n_A}\sum_{j=1}^{n_P} \sum_{y \in \cY}\int_{\cX, \cA} d^j_P(x,a)  \pi^{y}_{i,j}(dx, da) &=\sum_{i=1}^{n_A}\sum_{j=1}^{n_p} d^j_P(x^*_{i,j})  \pi^{ y^p_j}_{i,j}(x^*_{i,j}, a^A_i)\\
    &=\sum_{i=1}^{n_A}\sum_{j=1}^{n_p} \left(|| x^*_{i,j} - x^P_{j}|| + \kappa_P|y^P_j - y^P_j|\right) \pi^*(x^A_i, x^P_j)\\
    &=\frac{r_P}{r_A + r_P}\sum_{i=1}^{n_A}\sum_{j=1}^{n_p} || x^P_{j} - x^A_{i}||  \pi^*(x^A_i, x^P_j)\\
    &\le r_P.
\end{align*}

Finally, the constructed $\pi^{a,y}_{i,j}$ is a valid coupling:
\begin{align*}
    \sum_{j=1}^{n_P} \sum_{y \in \cY} \int_{\cX,\cA} \pi^{y}_{i,j}(dx, da) = \sum_{j=1}^{n_P} \pi^*(x^A_i, x^P_j)= \frac{1}{n_A} \quad \forall i \in [n_A]\\
    \sum_{i=1}^{n_A} \sum_{y \in \cY} \int_{\cX,\cA} \pi^{y}_{i,j}(dx, da)  =  \sum_{i=1}^{n_A}\pi^*(x^A_i, x^P_j) = \frac{1}{n_P} \quad \forall j \in [n_P],
\end{align*}
as $\pi^*$ was a valid coupling between $\tP_{S_A}$ and $\tP_{S_P}$.

\textbf{$(\Leftarrow)$ direction:}
We'll use $\pi^{a}_{i,j}$ to denote the feasible solution to \eqref{eqn:disjoint-learning-transport}. Now, construct a coupling $\pi$ such that the expected transport cost between $\tP_{S^\cX_A}$ and $\tP_{S^\cX_P}$ under $\pi$ is at most $r_A + r_P$, meaning the Wasserstein distance is at most $\min(r_a, r_p)$.

Construct the coupling $\pi$ between $\tP_{S^\cX_A}$ and $\tP_{S^\cX_P}$ as 
\begin{align*}
    \pi(x^A_i, x^P_j) = \sum_{y \in \cY}\int_{\cX,\cA} \pi^{y}_{i,j}(dx, da).
\end{align*}

It's easy to see that $\pi$ is a valid coupling as 
\begin{align*}
    \sum_{i=1}^{n_A} \pi(x^A_i, x^P_j) = \sum_{i=1}^{n_A} \sum_{y \in \cY}\int_{\cX,\cA} \pi^{y}_{i,j}(dx, da) = \frac{1}{n_P}\\ \sum_{j=1}^{n_P} \pi(x^A_i, x^P_j)  = \sum_{j=1}^{n_P} \sum_{y \in \cY}\int_{\cX,\cA} \pi^{y}_{i,j}(dx,da) = \frac{1}{n_A}
\end{align*}
for each $i \in [n_A]$ and $j \in [n_P]$.

Finally, due to its feasibility, we get
\begin{align}
    \sum_{i=1}^{n_A} \sum_{j=1}^{n_P} \sum_{y \in \cY} \int_{\cX,\cA} d^i_A(x,y)  \pi^{y}_{i,j}(dx,da) \le r_A \nonumber \\
    \sum_{i=1}^{n_A} \sum_{j=1}^{n_P} \sum_{y \in \cY} \int \left(||x^A_i -x  || + \kappa_a||a^A_i - a|| \right) \pi^{y}_{i,j}(dx, da)  \le r_A \nonumber \\
    \sum_{i=1}^{n_A} \sum_{j=1}^{n_P} \sum_{y \in \cY} \int_{\cX,\cA} ||x^A_i - x  ||  \pi^{y}_{i,j}(dx, da)  \le r_A \label{eqn:feasibility-a-constr}
\end{align}
Similarly, we get
\begin{align}
    \sum_{i=1}^{n_A} \sum_{j=1}^{n_P} \sum_{y \in \cY} \int_{\cX,\cA} ||x^P_j - x  ||  \pi^{y}_{i,j}(dx,da)  \le r_P  \label{eqn:feasibility-p-constr}
\end{align}
By adding \eqref{eqn:feasibility-a-constr} and \eqref{eqn:feasibility-p-constr}, we get
\begin{align*}
    \sum_{i=1}^{n_A} \sum_{j=1}^{n_P} \sum_{y \in \cY} \int_{\cX, \cA} \left(||x^A_i -x  || + ||x^P_j -x  || \right)\pi^{y}_{i,j}(dx, da) \le r_A + r_P \\
    \sum_{i=1}^{n_A} \sum_{j=1}^{n_P} \sum_{y \in \cY} \int_{\cX, \cA} ||x^A_i -x^P_j  || \pi^{y}_{i,j}(dx, da) \le r_A+ r_P \\
    \sum_{i=1}^{n_A} \sum_{j=1}^{n_P} ||x^A_i -x^P_j  || \pi(x^A_i, x^P_j) \le r_A + r_P.
\end{align*}
The second line follows from the triangle inequality $||x^A_i -x^P_j  || \le ||x^A_i -x  || + ||x^P_j -x  ||$. Therefore, we have $D_{d_\cX}(\tP_{S^\cX_A}, \tP_{S^\cX_P}) \le r_A + r_P$.
\end{proof}

\subsection{Missing Details from Section \ref{subsec:duality}}
\label{app:duality}
\thmourduality*
\begin{proof}
 For clarity, we assume in the proof that $\cX$ and $\cA$ is compact, but for more interested readers, we refer to the strong duality proof in Theorem 1.3 of \cite{villani2003topics} to see how to remove the compactness assumption on $\cX$ and $\cA$.

This theorem essentially follows from Fenchel-Rokafellar Duality which is formally stated later in the proof. Before applying the duality theorem, it is instructive to take a look at the corresponding Lagrangian for \eqref{eqn:disjoint-learning-transport}:
\begin{align*}
    \Lagr(\pi, \alpha_A, \alpha_P, \{\beta_{i,j}\}_{i,j}) &= \sum_{i=1}^{n_A}\sum_{j=1}^{n_P} \sum_{y \in \cY}\int_{\cX, \cA} \ell(\theta, (x, a, y))  \pi^{y}_{i,j}(dx, da))\\
    &+ \alpha_A \left(r_A - \sum_{i=1}^{n_A}\sum_{j=1}^{n_P} \sum_{y \in \cY}\int_{\cX,\cA} d^i_A(x,a)  \pi^{y}_{i,j}(dx, da)  \right) \\
    &+ \alpha_P \left(r_P - \sum_{i=1}^{n_A}\sum_{j=1}^{n_P}  \sum_{y \in \cY} \int d^j_P(x,y)  \pi^{y}_{i,j}(dx, da)  \right)\\
    &+ \sum_{i=1}^{n_A} \beta_{i} \left(\frac{1}{n_A}- \sum_{j=1}^{n_P} \sum_{y \in Y} \int_{\cX,\cA} \pi^{y}_{i,j}(dx, da) \right)\\
    &+ \sum_{j=1}^{n_P} \beta'_{j} \left(\frac{1}{n_P} - \sum_{i=1}^{n_A} \sum_{y \in Y} \int_{\cX,\cA} \pi^{y}_{i,j}(dx, da) \right).
\end{align*}

Rearranging the terms yields
\begin{align*}
    &\Lagr(\pi, \alpha_A, \alpha_P, \{\beta_{i}\}, \{\beta'_{j}\}\})\\
    &= \sum_{i=1}^{n_A}\sum_{j=1}^{n_P} \sum_{y \in \cY}\int_{\cX,\cA} \left(\ell(\theta, (x, a, y)) - \alpha_A d^i_A(x,a) -  \alpha_P d^j_P(x,y) - \beta_i  - \beta'_j \right) \pi^{y}_{i,j}(dx, da)\\
    &+ \alpha_A r_A + \alpha_P r_P + \frac{1}{n_A} \sum_{i=1}^{n_A} \beta_i + \frac{1}{n_P} \sum_{j=1}^{n_P} \beta'_j. 
\end{align*}

Note that the optimal primal value can be written in terms of its Lagrangian: 
\[
p^*(\theta, r_A, r_P) = \sup_{\pi} \inf_{\alpha_A, \alpha_P, \{\beta_{i}\}, \{\beta'_j\}} \Lagr(\pi, \alpha_A, \alpha_P, \{\beta_{i}\}, \{\beta'_{j}\}).
\]

For notational economy, we'll write
\[
\psi(\alpha_A, \alpha_P, \{\beta_{i}\}, \{\beta'_{j}\}, x, a, y) = -\alpha_A d^i_A(x,a) - \alpha_P d^j_P(x,y) - \beta_i - \beta'_j
\]

Now, we state Fenchel's duality theorem:
\begin{theorem}[Fenchel-Rokafellar Duality]\label{thm:fenchel-duality}
Let $E$ be a normed vector space, and let $f,g: E \to \mathbb{R} \cup \{+\infty\}$ be two convex functions. Assume there exists $z_0 \in E$ such that $f(z_0) < \infty$ and $g(z_0) < \infty$, and $f$ and $g$ are continuous at $z_0$. Then, 
\[
    \inf_{E}(g + f) = \sup_{z^* \in E^*}(-g^*(-z^*) - f^*(z^*))
\]
\end{theorem}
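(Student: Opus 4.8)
The plan is to prove the two inequalities separately: the easy ``weak duality'' direction via the Young--Fenchel inequality, and the reverse ``strong duality'' direction via a Hahn--Banach separation argument in $E \times \R$, following the structure in \cite{villani2003topics}. For weak duality, observe that for every $z \in E$ and $z^* \in E^*$ the defining suprema of the conjugates give $f^*(z^*) \ge \langle z^*, z\rangle - f(z)$ and $g^*(-z^*) \ge \langle -z^*, z\rangle - g(z)$; adding and rearranging yields $-g^*(-z^*) - f^*(z^*) \le f(z) + g(z)$, and taking the supremum over $z^*$ on the left and the infimum over $z$ on the right establishes $\sup_{z^*}(-g^*(-z^*) - f^*(z^*)) \le \inf_E(f+g)$. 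Writing $m = \inf_E(f+g)$, if $m = -\infty$ then weak duality forces the dual value to be $-\infty$ as well and we are done; moreover $m \le f(z_0) + g(z_0) < \infty$, so we may assume $m \in \R$.

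For strong duality I would work in $E \times \R$ with the two convex sets $A = \mathrm{int}(\epi f)$ and $B = \{(z,\lambda): \lambda \le m - g(z)\}$. The continuity of $f$ at $z_0$ is used precisely here: it provides a neighborhood on which $f$ is bounded above, showing $\epi f$ has nonempty interior, so $A$ is a nonempty open convex set. One checks $A \cap B = \emptyset$ directly: a point in $A$ satisfies $\lambda > f(z)$, and if it also lay in $B$ then $f(z) < \lambda \le m - g(z)$ would give $(f+g)(z) < m$, contradicting the definition of $m$. Applying the first geometric form of the Hahn--Banach theorem separates $A$ and $B$ by a closed hyperplane $\{(z,\lambda): \Phi(z) + k\lambda = \alpha\}$ with $(\Phi,k) \in E^* \times \R$ nonzero, say $\Phi(z) + k\lambda \ge \alpha$ on $\epi f$ and $\Phi(z) + k\lambda \le \alpha$ on $B$.

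The crux of the argument --- and the step I expect to be the main obstacle --- is showing the separating hyperplane is non-vertical, i.e. $k > 0$. Letting $\lambda \to +\infty$ in the inequality over $\epi f$ forces $k \ge 0$; the delicate part is ruling out $k = 0$. If $k=0$ then $\Phi \ge \alpha$ on $\mathrm{dom}\, f$ and $\Phi \le \alpha$ on $\mathrm{dom}\, g$, and since $f$ is continuous at $z_0$ the point $z_0$ is interior to $\mathrm{dom}\, f$ while $z_0 \in \mathrm{dom}\, g$, so $\Phi(z_0) = \alpha$ and the linear functional $\Phi$ attains a minimum over a full neighborhood of $z_0$; this can only happen if $\Phi \equiv 0$, which together with $k=0$ contradicts the nontriviality of $(\Phi,k)$. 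Hence $k>0$, and after normalizing $k=1$ I would evaluate the two inequalities along the graphs $\lambda = f(z)$ and $\lambda = m - g(z)$ to obtain $f^*(-\Phi) \le -\alpha$ and $g^*(\Phi) \le \alpha - m$. Setting $z^* = -\Phi$ and adding gives $-g^*(-z^*) - f^*(z^*) \ge m$, which combined with weak duality proves the claimed equality and simultaneously exhibits $z^*$ as a maximizer of the dual problem.
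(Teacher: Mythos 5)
Your proof is correct, but there is no in-paper proof to compare it against: the paper states Theorem \ref{thm:fenchel-duality} without proof, quoting it as a classical black box inside the proof of Theorem \ref{thm:our-duality} (the paper's own duality argument applies this theorem to the specific $f$ and $g$ built from the transport constraints; it never proves the theorem itself, deferring implicitly to \cite{villani2003topics}). Your argument is precisely the standard textbook proof used in that cited source (Villani's Theorem 1.9, essentially the Brezis proof): weak duality from the Young--Fenchel inequality, then separation in $E \times \R$ of $A = \mathrm{int}(\epi f)$ from the hypograph $B = \{(z,\lambda) : \lambda \le m - g(z)\}$ by the first geometric form of Hahn--Banach, with continuity of $f$ at $z_0$ supplying both the nonempty interior of $\epi f$ and the non-verticality ($k > 0$) of the separating hyperplane. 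All the steps check out, including the delicate ones: interior points of $\epi f$ do satisfy $\lambda > f(z)$, and your argument ruling out $k = 0$ (a linear functional cannot attain a minimum on a full neighborhood of $z_0 \in \mathrm{int}(\mathrm{dom}\, f)$ unless it vanishes, contradicting $(\Phi,k) \neq 0$) is exactly right. Two fine points worth recording explicitly: (i) Hahn--Banach gives the inequality $\Phi(z) + k\lambda \ge \alpha$ on the open set $A$, and it extends to all of $\epi f$ because a convex set with nonempty interior is contained in the closure of its interior --- you assert the inequality on $\epi f$ directly, which is harmless but deserves a sentence; (ii) your proof uses only the continuity of $f$ at $z_0$ (together with $g(z_0) < \infty$), so you have in fact established a slightly stronger statement than the one written in the paper, which assumes both $f$ and $g$ continuous at $z_0$; this weaker hypothesis is the form found in the standard references, and nothing in the paper's application is affected.
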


By Riesz's theorem, we have that the dual space of the Radon measure $\pi^{y}_{i,j}$ is the continuous bounded functions which we denote as $u(i,j,x,a,y)$. In our case, define 
\begin{align*}
    f(u) = \begin{cases}
         0 &\quad\text{if $u(i,j, x,a,y) + \ell(\theta, (x, a, y)) \le 0$ for all $i \in [n_A]$ and $j \in [n_P]$}\\
         \infty &\quad\text{otherwise}
    \end{cases}
\end{align*}
\begin{align*}
    &g(u) \\
    &=\begin{cases}
         \left(\alpha_A r_A + \alpha_P r_P + \frac{1}{n_A} \sum_{i=1}^{n_A} \beta_i + \frac{1}{n_P} \sum_{j=1}^{n_P} \beta'_j \right) &\quad\substack{\text{if $u(i,j, x,a,y) =  \psi(\alpha_A, \alpha_P, \{\beta_{i}\}, \{\beta'_j\}, x, a, y)$} \\ \text{for some $\alpha_A, \alpha_P, \{\beta_{i}\}, \{\beta'_j\}$}}\\
         \infty &\quad\text{otherwise}
    \end{cases}
\end{align*}

Note that both $f$ and $g$ are convex:
\begin{enumerate}
    \item $f$ is convex\\
    Consider any $u,v$ such that $f(u) < \infty$ and $f(v) < \infty$, then $u(i,j,x,a,y) \le -\ell(\theta, x,a,y)$ and $v(i,j,x,a,y) \le -\ell(\theta, x,a,y)$. Then, because $t  u(i,j,x,a,y) + (1-t)  v(i,j,x,a,y) \le -\ell(\theta, x,a,y)$, we have 
    \[t f(u) + (1-t) f(v) = 0 = f(t (u) + (1-t) v).\] 
    If either $f(u) = \infty$ or $f(v) = \infty$, then 
    \[f(t (u) + (1-t) v) \le t f(u) + (1-t) f(v).\]
    
    \item $g$ is convex\\
    Suppose $u, v$ is such that $g(u) < \infty$ and $g(v) < \infty$ and $g(u) = \alpha^u_A r_A + \alpha^u_P r_P + \frac{1}{n_A}\sum_{i\in [n_A]} \beta^u_{i} + \frac{1}{n_P}\sum_{j\in [n_P]} {\beta'}^u_{j}$ and $g(v) = \alpha^v_A r_A + \alpha^v_P r_P + \frac{1}{n_A}\sum_{i\in [n_A]} \beta^v_{i}+ \frac{1}{n_P}\sum_{j \in [n_P]} {\beta'}^v_j$. Then, we have $t g(u) + (1-t) g(v) = g(tu + (1-t)v)$. If $g(u) = \infty$ or $g(v) = \infty$, it's easy to see that $g(tu + (1-t)v) \le \infty$ as well.
    
\end{enumerate}

Note that 
\begin{align*}
    &\inf_{u}(f(u) + g(u)) \\
    &= \inf_{\ell(\theta, (x, a, y)) - \alpha_A d^i_A(x,a)  -  \alpha_P d^j_P(x,y) - \beta_{i} - \beta'_j \le 0} \left(\alpha_A r_A + \alpha_P r_P + \frac{1}{n_A}\sum_{i\in [n_A]} \beta_{i} + \frac{1}{n_P}\sum_{j\in [n_P]} \beta'_{j}\right) \\
    &= d^*(r_A, r_P)
\end{align*}

We derive their convex conjugates: 
\begin{align*}
    f^*(\{\pi^{a,y}_{i,j}\}) &= \sup_{u + \ell \le 0}  \sum_{i=1}^{n_A}\sum_{j=1}^{n_P} \sum_{y \in \cY} \int_{\cX,\cA} u(i,j, x, a, y)\pi^{y}_{i,j}(dx, da) \\
    &= -\sum_{i=1}^{n_A}\sum_{j=1}^{n_P} \sum_{y \in \cY}\int_{\cX,\cA} \ell(\theta, (x, a, y)) \pi^{y}_{i,j}(dx, da)
\end{align*}
\begin{align*}
    &g^*(\{\pi^{a,y}_{i,j}\}) \\
    &= \sup_{u(i,j, x,a,y) =  \psi(\alpha_A, \alpha_P, \{\beta_{i}\}, \{\beta'_{j}\}, x, a, y)}
    \sum_{i=1}^{n_A}\sum_{j=1}^{n_P} \sum_{y \in \cY}\int_{\cX,\cA} u(i,j,x,a,y) \pi^{y}_{i,j}(dx, da) \\
    &-\left(\alpha_A r_A + \alpha_P r_P + \frac{1}{n_A}\sum_{i\in [n_A]} \beta_{i} + \frac{1}{n_P}\sum_{j \in [n_P]} \beta'_j\right) \\
    &= \sup_{\alpha_A, \alpha_P, \{\beta_{i}\}, \{\beta'_j\}_j}\sum_{i=1}^{n_A}\sum_{j=1}^{n_P} \sum_{y \in \cY}\left(\int_{\cX,\cA} - \alpha_A d^i_A(x,a)  -  \alpha_P d^j_P(x,y) - \beta_{i} - \beta'_{j} \right) \pi^{y}_{i,j}(dx, da)\\
    &-\left(\alpha_A r_A + \alpha_P r_P + \frac{1}{n_A}\sum_{i\in [n_A]} \beta_{i} + \frac{1}{n_P}\sum_{j \in [n_P]} \beta'_j \right).
\end{align*}

Also, note that 
\begin{align*}
    &\sup_{\pi} (-g^*(-\pi) - f^*(\pi)) \\
    &=\sup_{\pi} \inf_{\alpha_A, \alpha_P, \{\beta_{i,j}\}_{i,j}}\sum_{i=1}^{n_A}\sum_{j=1}^{n_P} \sum_{y \in \cY}\left(\int_{\cX,\cA} - \alpha_A d^i_A(x,a)  -  \alpha_P d^j_P(x,y) - \beta_{i} - \beta'_j \right) \pi^{y}_{i,j}(dx, da)\\
    &+\left(\alpha_A r_A + \alpha_P r_P + \frac{1}{n_A}\sum_{i\in [n_A]} \beta_{i} + \frac{1}{n_P}\sum_{j \in [n_P]} \beta'_{j} \right) \\
    &+\sum_{i=1}^{n_A}\sum_{j=1}^{n_P} \sum_{y \in \cY}\int_{\cX,\cA} \ell(\theta, (x, a, y)) \pi^{y}_{i,j}(dx, da) \\
    &= \sup_{\pi} \inf_{\alpha_A, \alpha_P, \{\beta_{i}\}, \{\beta'_{j}\}}\Lagr(\pi, \alpha_A, \alpha_P, \{\beta_{i}\}, \{\beta'_j\})\\
    &=p^*(\theta, r_A, r_P).
\end{align*}

Therefore, by Theorem \ref{thm:fenchel-duality}, we see that $p^*(\theta, r_A, r_P) = d^*(\theta, r_A, r_P)$.
\end{proof}

\subsection{Missing Details from Section \ref{subsec:removing-sup}}
We rearrange some terms of $R$ and use convex conjugate of $h$ to represent the supremum term with what is known as an infimal convolution:
\begin{restatable}{lemma}{lemconjugateinfimalrepr}
\label{lem:conjugate-infimal-repr}
Fix any $\theta$, $(x^A_i, a^A_i, x^P_j)$, and $(\alpha_A, \alpha_P, \kappa_A)$. If $||\theta[1:m_1]||_{p,*} > \alpha_A + \alpha_P$ or $||\theta[m_1+1: m_1 + m_2]||_{p',*} > \kappa_A \alpha_A$, then $\sup_{(x,a)} h(\theta, (x, a)) - \alpha_A ||x^A_i - x ||_{p} -  \alpha_P||x^P_j - x ||_{p} - \alpha_A \kappa_A ||a^A_i - a||_{p'} = \infty$.
Otherwise, we have 
\begin{align*}
    &\sup_{(x,a)} h(\theta, (x, a)) - \alpha_A ||x^A_i - x ||_{p} -  \alpha_P||x^P_j - x ||_{p} - \alpha_A \kappa_A ||a^A_i - a||_{p'}\\
    &=\sup_{b \in [0,1]} - f^*(b) + (g^i_1 \square g^j_2)(b\theta[1:m_1]) + \langle b\theta[m_1+1:m_1+m_2], a^A_i\rangle
\end{align*}
where \begin{align*}
    g^i_1(\theta) = \begin{cases}
         \langle \theta, x^A_i \rangle &\quad\text{if $||\theta||_{p,*} \le \alpha_A$}\\
         \infty &\quad\text{otherwise}
    \end{cases}\quad 
    g^j_2(\theta) = \begin{cases}
         \langle \theta, x^P_j \rangle &\quad\text{if $||\theta||_{p,*} \le \alpha_P$}\\
         \infty &\quad\text{otherwise}
    \end{cases}
\end{align*}
and
$(g^i_1 \square g^j_2)(\theta) = \inf_{\theta_1 + \theta_2 = \theta} g^i_1(\theta_1) + g^j_2(\theta_2)$ is the infimal convolution of $g^i_1$ and $g^j_2$. 
\end{restatable}
\begin{proof}
Noting that $h$ is convex and thus $h$ is equal to its biconjugate $h^{**}$, we have
\begin{align*}
    &\sup_{(x,a)} h(\theta, (x, a)) - \alpha_A ||x^A_i - x ||_{p} -  \alpha_P||x^P_j - x ||_{p} - \alpha_A \kappa_A ||a^A_i - a||_{p'}\\
    &=\sup_{(x,a)} \sup_{b \in [0,1]} \langle b\theta, (x,a) \rangle  - f^*(b) - \alpha_A ||x^A_i - x||_{p}  -  \alpha_P ||x^P_j - x||_{p} - \alpha_A \kappa_A ||a^A_i - a||_{p'}\\
    &= \sup_{b \in [0,1]} \sup_{(x,a)} \langle b\theta, (x,a) \rangle  - f^*(b) - \sup_{||q_1||_{p,*} \le \alpha_A}\langle q_1, x^A_i - x \rangle -  \sup_{||q_2||_{p,*} \le \alpha_P}\langle q_2, x^P_j - x\rangle - \sup_{||q_3||_{p',*} \le \alpha_A \kappa_A}\langle q_3, a^A_i - a \rangle \\ 
    &=  \sup_{b \in [0,1]} \sup_{(x,a)} \langle b\theta, (x,a) \rangle  - f^*(b) \\
    &- \sup_{||q_1||_{p,*} \le \alpha_A}\langle (q_1, 0), (x^A_i,a) - (x,a) \rangle -  \sup_{||q_2||_{p,*} \le \alpha_P}\langle (q_2, 0), (x^P_j,a) - (x,a)\rangle - \sup_{||q_3||_{p',*} \le \alpha_A \kappa_A}\langle (0,q_3), (x,a^A_i) - (x,a) \rangle\\
    &=  \sup_{b \in [0,1]} \sup_{(x,a)} \inf_{\substack{||q_1||_{p,*}\le \alpha_A,\\ ||q_2||_{p,*}\le \alpha_P, \\||q_3||_{p',*} \le \alpha_A \kappa_A}} \langle b\theta, (x,a) \rangle  - f^*(b)\\
    &- \langle (q_1, 0), (x^A_i,a) - (x,a) \rangle - \langle (q_2, 0), (x^P_j,a) - (x,a)\rangle - \langle (0,q_3), (x,a^A_i) - (x,a) \rangle\\
    &= \sup_{b \in [0,1]} \sup_{(x,a)}  \inf_{\substack{||q_1||_{p,*}\le \alpha_A,\\ ||q_2||_{p,*}\le \alpha_P, \\||q_3||_{p',*} \le \alpha_A \kappa_A}} \langle b\theta + (q_1, 0) + (q_2,0) + (0,q_3), (x,a) \rangle  - f^*(b) -\langle q_1, x^A_i \rangle - \langle q_2, x^P_j \rangle - \langle q_3, a^A_i \rangle.
\end{align*}   

We can swap the order of $\inf$ and $\sup$ due to proposition 5.5.4 of \cite{bertsekas2009convex}. 
\begin{align*}
    &= \sup_{b \in [0,1]}  \inf_{\substack{||q_1||_{p,*}\le \alpha_A,\\ ||q_2||_{p,*}\le \alpha_P, \\||q_3||_{p',*} \le \alpha_A \kappa_A }} \sup_{(x,a)}  \langle b\theta + (q_1, 0) + (q_2,0) + (0,q_3), (x,a) \rangle  - f^*(b) - \langle q_1, x^A_i \rangle - \langle q_2, x^P_j\rangle - \langle q_3, a^A_i \rangle.
\end{align*}
Note that unless $b\theta + (q_1, 0) + (q_2,0) + (0,q_3) = 0$, $(x,a)$ can be chosen arbitrarily big. Also, if $\theta + (q_1, 0) + (q_2, 0) + (0,q_3)\neq 0$, then $b$ can be chosen to be 1. Therefore, if there doesn't exist $(q_1, q_2, q_3)$ such that $\theta + (q_1, 0) + (q_2,0) + (0,q_3) = 0$, everything evaluates to $\infty$. 
In other words, the expression evaluates to $\infty$ unless both of the following conditions are true:
\begin{enumerate}
    \item $||\theta[1:m_1]||_{p,*} \le \alpha_A + \alpha_P$
    \item $||\theta[m_1 + 1: m_1 + m_2]||_{p',*} \le \kappa_A \alpha_A$
\end{enumerate}
as $q_1=\frac{-\alpha_A}{||\theta[1:m_1]||}\theta[1:m_1]$, $q_2 = \frac{-\alpha_P}{||\theta[1:m_1]||}\theta[1:m_1]$, and $q_3 = \frac{\kappa_A \alpha_A}{||\theta[m_1+1:m_1+m_2]||_{*}}\theta[m_1+1:m_1+m_2]$ is one such triplet that satisfy $\theta + (q_1, 0) + (q_2,0) + (0,q_3) = 0$.

Now, suppose $\theta$ satisfies the above conditions as we know it evaluates to $\infty$ otherwise. Then, we get
\begin{align*}
    &= \sup_{b \in [0,1]} - f^*(b) \\
    &+ \inf_{\substack{||q_1||_{p,*}\le \alpha_A,\\ ||q_2||_{p,*}\le \alpha_P, \\||-b\theta[m_1 +1: m_1 + m_2]||_{p',*} \le \alpha_A \kappa_A}} \begin{cases}
          - \langle q_1, x^A_i \rangle - \langle q_2, x^P_j \rangle + \langle b\theta[m_1 +1: m_1 + m_2], a^A_i\rangle &\quad\text{if $b\theta[1:m_1] + q_1 + q_2 = 0$} \\
         \infty &\quad\text{otherwise}
    \end{cases}\\
    &=\sup_{b \in [0,1]} - f^*(b) + \inf_{\substack{||q_1||_{p,*}\le \alpha_A,\\ ||q_2||_{p,*}\le \alpha_P}} \begin{cases}
          \langle q_1, x^A_i \rangle + \langle q_2, x^P_j \rangle &\quad\text{if $q_1 + q_2 = b\theta[1:m_1]$}\\
         \infty &\quad\text{otherwise}
    \end{cases} +\langle b\theta[m_1+1:m_1+m_2], a^A_i\rangle \\
    &=\sup_{b \in [0,1]} - f^*(b) + (g^i_1 \square g^j_2)(b\theta[1:m_1]) + \langle b\theta[m_1+1:m_1+m_2], a^A_i\rangle 
\end{align*}
\end{proof}

Then, by noting that an infimal convolution of two linear functions over bounded norm domain is convex, we show how to upper bound the infimal convolution with a linear term: 
\begin{restatable}{theorem}{thminfconvupperbound}
\label{thm:inf-conv-upper-bound}
Suppose the norm $||\cdot||$ is some $p$-norm where $p \neq 1$ and $p \neq \infty$. Fix $\theta$ where $||\theta||_{*} \le \alpha_A + \alpha_P$. Then, for any $b \in [0,1]$
    \begin{align*}
        (g^i_1 \square g^j_2)(b\theta) \le \left(\frac{b}{\alpha_A + \alpha_P} \right) (||\theta||_{*}\min(\alpha_A,\alpha_P)||x^A_i-x^P_j|| + \langle\theta, \alpha_A x^A_i + \alpha_P x^P_j\rangle) -\min(\alpha_A,\alpha_P)||x^A_i-x^P_j||
    \end{align*}
\end{restatable}
\begin{proof}
    Because Lemma \ref{lem:inf-conv-affine} tells us that the infimal convolution of $g^i_1$ and $g^j_2$ is convex, we know that $(g^i_1 \square g^j_2)(b\overline{\theta}_{*})$ must be convex in $b$. By convexity, we have that for any $b,b' \in[0,\alpha_A + \alpha_P]$ and $t \in [0,1]$
    \[
        (g^i_1 \square g^j_2)(((1-t) b + t b')\overline{\theta}_{*}) \le (1-t) (g^i_1 \square g^j_2)(b \overline{\theta}_{*}) + t (g^i_1 \square g^j_2)(b' \overline{\theta}_{*}).
    \]
    When we set $(b,b')=(0, \alpha_A + \alpha_P)$ and use the above upper bound, we get for any $t \in [0,1]$
    \begin{align*}
        (g^i_1 \square g^j_2)(t (\alpha_A + \alpha_P)\overline{\theta}_{*}) &\le -(1-t)\min(\alpha_A,\alpha_P)||x^A_i-x^P_j|| + t\langle\overline{\theta}_{*}, \alpha_A x^A_i + \alpha_P x^P_j\rangle\\
        &= t (\min(\alpha_A,\alpha_P)||x^A_i-x^P_j|| + \langle\overline{\theta}_{*}, \alpha_A x^A_i + \alpha_P x^P_j\rangle) -\min(\alpha_A,\alpha_P)||x^A_i-x^P_j|| 
    \end{align*}
    due to Lemma \ref{lem:inf-conv-at-zero} and \ref{lem:inf-conv-tight}.
    
    In other words, given any $\theta$ where $||\theta||_{*}\le \alpha_A + \alpha_P$, we can upper bound the infimal convolution as
    \begin{align*}
        (g^i_1 \square g^j_2)(b\theta) &= (g^i_1 \square g^j_2)(b||\theta||_{*}\overline{\theta}_{*})\\
        &=(g^i_1 \square g^j_2)\left(\frac{b ||\theta||_{*}}{\alpha_A + \alpha_P} (\alpha_A + \alpha_P) \overline{\theta}_{*}\right)\\
        &\le b \left(\frac{||\theta||_{*}}{\alpha_A + \alpha_P} \right) (\min(\alpha_A,\alpha_P)||x^A_i-x^P_j|| + \langle\overline{\theta}, \alpha_A x^A_i + \alpha_P x^P_j\rangle) -\min(\alpha_A,\alpha_P)||x^A_i-x^P_j|| \\
        &= \left(\frac{b}{\alpha_A + \alpha_P} \right) \left(\min(\alpha_A,\alpha_P)||\theta||_{*}||x^A_i-x^P_j|| + \langle\theta, \alpha_A x^A_i + \alpha_P x^P_j\rangle\right) -\min(\alpha_A,\alpha_P)||x^A_i-x^P_j||
    \end{align*}
\end{proof}

Now, using the fact that the infimal convolution of linear functions is convex which we prove in Appendix \ref{subsec:infimal-convolution}, we show how to upperbound the supremum term.
\thmsupupper*
\begin{proof}
For simplicity, we drop our dependence on $y$ and show an upper bound on 
\begin{align*}
    &R := \sup_{(x,a)} h(\theta, (x, a)) - \alpha_A ||x^A_i - x ||_{p} -  \alpha_P||x^P_j - x ||_{p} - \alpha_A \kappa_A ||a^A_i - a||_{p'}.
\end{align*}
Note that all our arguments here are based upon some fixed $\theta$, so if $y=+1$, proceed with the original $\theta$, and for $y=-1$, proceed with a new fixed $\theta' = -\theta$. 

Because $f$ is a convex function, its biconjugate is itself, so
    \[
    \sup_{b \in [0,1]} -f^*(b) + b \cdot X = f(X). 
    \]
Therefore, we have
\begin{align*}
    &\sup_{(x,a)} h(\theta, (x, a)) -\alpha_A||x -  x^A_i||_{p}  - \alpha_P||x-x^P_j||_{p} - \alpha_A \kappa_A ||a^A_i - a||_{p'} \\
    &=\sup_{b \in [0,1]} - f^*(b) + (g^i_1 \square g^j_2)(b\theta_1) + \langle b\theta_2, a^A_i\rangle\\
    &\le \sup_{b \in [0,1]} - f^*(b) + \left(\frac{b}{\alpha_A + \alpha_P} \right) \left(\min(\alpha_A,\alpha_P)||\theta_1||_{*}||x^A_i-x^P_j|| + \langle\theta_1, \alpha_A x^A_i + \alpha_P x^P_j\rangle\right) \\
    &- \min(\alpha_A, \alpha_P) || x^A_i - x^P_j||_{p} + b \langle \theta_2, a^A_i\rangle \\
    &= \sup_{b \in [0,1]} - f^*(b) \\
    &+ b\left(\left(\frac{1}{\alpha_A + \alpha_P} \right) \left(\min(\alpha_A,\alpha_P)||\theta_1||_{*}||x^A_i-x^P_j|| +  \langle\theta_1, \alpha_A x^A_i + \alpha_P x^P_j\rangle\right) + \langle \theta_2, a^A_i\rangle\right) - \min(\alpha_A, \alpha_P) || x^A_i - x^P_j||_{p}\\
    &= f\left(\left(\frac{\min(\alpha_A,\alpha_P)||\theta_1||_{*}||x^A_i-x^P_j|| +  \langle\theta, \alpha_A x^A_i + \alpha_P x^P_j\rangle}{\alpha_A + \alpha_P} \right) + \langle \theta_2, a^A_i\rangle\right) - \min(\alpha_A, \alpha_P) || x^A_i - x^P_j||_{p}.
\end{align*}
The first inequality follows from Theorem \ref{thm:inf-conv-upper-bound}.
\end{proof}

\begin{lemma}
\label{lem:arg_min_second_term}
\[
    \inf_x (\alpha_A ||x-x^A_i|| + \alpha_P ||x-x^P_j||)
    = \min(\alpha_A, \alpha_P)||x^A_i - x^P_j||.
\]
and when $\alpha_A < \alpha_P$, the infimum is achieved at $x=x^P_j$ and otherwise at $x^A_i$.
\end{lemma}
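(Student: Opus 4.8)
The plan is to sandwich the objective between a lower bound that holds for every $x$ and the value attained at a single, carefully chosen point. For the lower bound, I would first observe that since $\alpha_A, \alpha_P \ge \min(\alpha_A, \alpha_P)$ and both norm terms are nonnegative, one can pull out the smaller coefficient and then invoke the triangle inequality $||x - x^A_i|| + ||x - x^P_j|| \ge ||x^A_i - x^P_j||$. This gives, for every $x$,
\[
\alpha_A ||x - x^A_i|| + \alpha_P ||x - x^P_j|| \ge \min(\alpha_A, \alpha_P)\left(||x - x^A_i|| + ||x - x^P_j||\right) \ge \min(\alpha_A, \alpha_P) ||x^A_i - x^P_j||,
\]
so the infimum is at least $\min(\alpha_A, \alpha_P)||x^A_i - x^P_j||$.

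To show this lower bound is attained, I would evaluate the objective at the endpoint associated with the \emph{larger} coefficient, so that its norm term vanishes. If $\alpha_A < \alpha_P$, so $\min(\alpha_A, \alpha_P) = \alpha_A$, then taking $x = x^P_j$ kills the second term and leaves $\alpha_A ||x^A_i - x^P_j|| = \min(\alpha_A, \alpha_P) ||x^A_i - x^P_j||$. Symmetrically, if $\alpha_A \ge \alpha_P$, taking $x = x^A_i$ kills the first term and leaves $\alpha_P ||x^A_i - x^P_j|| = \min(\alpha_A, \alpha_P) ||x^A_i - x^P_j||$. In either case the objective equals the lower bound exactly, which simultaneously identifies the value of the infimum as $\min(\alpha_A, \alpha_P) ||x^A_i - x^P_j||$ and certifies that it is achieved at the claimed point.

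There is no real obstacle here; the only point requiring care is matching the choice of evaluation point to the smaller coefficient, i.e.\ setting $x$ equal to the anchor whose norm term carries the \emph{larger} weight, so that term is annihilated and the surviving term is scaled by precisely $\min(\alpha_A, \alpha_P)$.
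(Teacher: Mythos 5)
Your proof is correct, but it takes a genuinely different route from the paper's. The paper proves this lemma with the dual machinery it uses throughout: it rewrites each term as $\alpha_A\|x-x^A_i\| = \sup_{\|q_1\|_{*}\le \alpha_A}\langle q_1, x-x^A_i\rangle$ and $\alpha_P\|x-x^P_j\| = \sup_{\|q_2\|_{*}\le \alpha_P}\langle q_2, x-x^P_j\rangle$, swaps $\inf_x$ and $\sup_{q_1,q_2}$ by appealing to a minimax theorem (proposition 5.5.4 of Bertsekas), and then notes that the inner $\inf_x \langle q_1+q_2, x\rangle$ is $-\infty$ unless $q_1 = -q_2$, collapsing the problem to $\sup_{\|q\|_{*}\le \min(\alpha_A,\alpha_P)} \langle q, x^P_j - x^A_i\rangle = \min(\alpha_A,\alpha_P)\|x^A_i-x^P_j\|$. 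Your sandwich argument --- factor out $\min(\alpha_A,\alpha_P)$, apply the triangle inequality for the lower bound, and evaluate at the anchor carrying the larger coefficient for the matching upper bound --- is shorter, entirely elementary, and avoids the minimax swap altogether. It has one concrete advantage relative to the statement as written: the lemma also asserts \emph{where} the infimum is achieved ($x = x^P_j$ when $\alpha_A < \alpha_P$, and $x^A_i$ otherwise), and your evaluation step certifies this directly, whereas the paper's dual computation yields only the optimal value and leaves attainment implicit. What the paper's route buys in exchange is methodological uniformity: the same conjugate/dual-norm manipulation is the engine behind Lemma \ref{lem:conjugate-infimal-repr} and the infimal-convolution bounds (Lemmas \ref{lem:inf-conv-at-zero} and \ref{lem:inf-conv-tight}), where no closed-form minimizer can simply be guessed, so proving this lemma the same way keeps the toolkit consistent.
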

\begin{proof}
    \begin{align*}
        &\inf_x \sup_{||q_1||_{*} \le \alpha_A} \langle q_1, x-x^A_i \rangle + \sup_{||q_2||_{*}\le \alpha_P}\langle q_2, x-x^P_j\rangle \\
        &=\inf_x \sup_{\substack{||q_1||_{*} \le \alpha_A,\\ ||q_2||_{*}\le \alpha_P} }   \langle q_1+q_2, x \rangle + \langle q_1, -x^A_i\rangle + \langle q_2, -x^P_j\rangle 
    \end{align*}
    
    We are able to swap $\inf$ and $\sup$ due to proposition 5.5.4 of \cite{bertsekas2009convex}.
    \begin{align*}
        &=\sup_{\substack{||q_1||_{*} \le \alpha_A,\\ ||q_2||_{*}\le \alpha_P} } \inf_x \langle q_1+q_2, x \rangle + \langle q_1, -x^A_i\rangle + \langle q_2, -x^P_j\rangle \\
        &=\sup_{||q||_{*} \le \min(\alpha_A,\alpha_P) } \langle q, -x^A_i + x^P_j\rangle \\
        &=\min(\alpha_A, \alpha_P) ||x^A_i - x^P_j||.
    \end{align*}
    The second inequality holds true because The sum of two norms has to be non-negative, and unless $q_1 = q_2$, the $\inf$ term can be made arbitrarily small, meaning we need to set $q_1 = -q_2$. 
\end{proof}

\lemholderlem*
\begin{proof}
    Fix $i,j, \theta, \alpha_A, \alpha_P$. 


H\"{o}lder's inequality gives us
\begin{align*}
    \left| \sum_{c \in [m_1]} \theta[c] (x^A_i-x^P_j)[c]\right| 
    &\le \sum_{c \in [m_1]} \left|\theta[c] (x^A_i-x^P_j)[c]\right|
    \le ||\theta||_{*}||x^A_i-x^P_j||
\end{align*}

Suppose $\alpha_A < \alpha_P$, meaning $\hat{x} = x^P_j$
\begin{align*}
    &f\Bigg(\Bigg(\frac{\min(\alpha_A,\alpha_P)||\theta_1||_{*}||x^A_i-x^P_j||}{\alpha_A + \alpha_P} + \frac{\langle\theta_1, \alpha_A x^A_i + \alpha_P x^P_j\rangle}{\alpha_A + \alpha_P} + \langle \theta_2, a^A_i \rangle \Bigg) - f(\langle \theta, (\hat{x},a^A_i)\rangle)\\
    &\le \left|\frac{\alpha_A||\theta||_{*}||x^A_i-x^P_j|| +  \langle\theta, \alpha_A x^A_i + \alpha_P x^P_j\rangle}{\alpha_A + \alpha_P} + \langle \theta_2, a^A_i \rangle  - \langle \theta, (\hat{x},a^A_i)\rangle \right|\\
    &= \left|\frac{\alpha_A||\theta||_{*}||x^A_i-x^P_j|| +  \langle\theta, \alpha_A x^A_i + \alpha_P x^P_j\rangle}{\alpha_A + \alpha_P} - \frac{\alpha_A \sum_{c \in [m_1]} \theta[c] (x^P_j - x^A_i)[c] +  \langle\theta, \alpha_A x^A_i + \alpha_P x^P_j\rangle}{\alpha_A + \alpha_P}\right|\\
    &= \left|\frac{\alpha_A||\theta||_{*}||x^A_i-x^P_j|| - \alpha_A(\sum_{c \in [m_1]} \theta[c] (x^P_j - x^A_i)[c])}{\alpha_A + \alpha_P}\right|\\
    &\le \frac{2\alpha_A||\theta||_{*}||x^A_i-x^P_j||}{\alpha_A + \alpha_P}\\
    &\le 2\alpha_A||x^A_i-x^P_j||
\end{align*}
where the first inequality follows from $f$'s 1-Lipschitzness --- i.e. $|f(x) - f(x') \le |x-x'|$. The same argument works when $\alpha_A \ge \alpha_P$.
\end{proof}

\corsupapproxeverything*
\begin{proof}
It suffices to show that
    \begin{align*}
    &\max_{y \in \{\pm 1\}} \kappa_P \alpha_P |y^P_j-y| + f(\langle y\theta, (\hat{x}_{i,j}, a^A_i) \rangle) - \alpha_A d^i_A(\hat{x}_{i,j},a^A_i)  -  \alpha_P d^j_P(\hat{x}_{i,j},y)\\
    &= \max_{y \in \{y^P_j, -y^P_j\}}\kappa_P \alpha_P |y^P_j-y| + f(\langle y\theta, (\hat{x}_{i,j}, a^A_i) \rangle) - \alpha_A d^i_A(\hat{x}_{i,j},a^A_i)  -  \alpha_P d^j_P(\hat{x}_{i,j},y)\\
    &=f(\langle y^P_j \theta, (\hat{x}_{i,j}, a^A_i) \rangle) + \max(y^P_j\langle\theta, (\hat{x}_{i,j}, a^A_i) \rangle-\alpha_P\kappa_P,0) -\hat{\alpha}||x^A_i -x^P_j||.
\end{align*}
\end{proof}

\thmmainthm*
\begin{proof}
First, we show that we can solve problem \eqref{eqn:convex-approx-prob} by solving two convex optimization problems. The only part in the problem that is not convex is the use of $\hat{x}_{i,j}$ and $\hat{\alpha}$. However, solving \eqref{eqn:convex-approx-prob} is equivalent to solving the following two problems and taking the one with a smaller objective value: \begin{enumerate}
    \item Set $\hat{x}_{i,j} = x^P_j$ for all $i,j$ and $\hat{\alpha} = \alpha_A$ along with a new constraint $\alpha_A < \alpha_P$.
    \item Set $\hat{x}_{i,j} = x^A_i$ for all $i,j$ and $\hat{\alpha} = \alpha_P$ along with a new constraint $\alpha_A \ge \alpha_P$.
\end{enumerate}

By solving two convex optimizations above and choosing the solution out of those two that achieves the smaller objective value, suppose we have obtained the optimal solution to problem \eqref{eqn:convex-approx-prob}, which we call $(\theta, \alpha_A, \alpha_P, \{\beta_i\}, \{\beta'\})$. Now we claim that the following solution 
\[
    (\theta^*, \alpha^*_A, \alpha^*_P, \{\beta^*_i\}, \{{\beta'}^*\}) = (\theta, \alpha_A, \alpha_P, \{\beta_i + \hat{\alpha} b\}, \{\beta' + \hat{\alpha} b\})
\]
where $b= \max_{i \in [n_A], j \in n_P]} ||x^A_i -x^P_j||$ is a dual feasible solution:
\begin{align*}
    &\left(f(y^P_j\langle\theta, (\hat{x}_{i,j}, a^A_i) \rangle) + \max(y^P_j\langle\theta, (\hat{x}_{i,j}, a^A_i) \rangle-\alpha_P\kappa_P,0) -\hat{\alpha}||x^A_i -x^P_j||\right) \le \beta_i + \beta'_j\\
    \implies &\left(f(y^P_j\langle\theta, (\hat{x}_{i,j}, a^A_i) \rangle) + \max(y^P_j\langle\theta, (\hat{x}_{i,j}, a^A_i) \rangle-\alpha_P\kappa_P,0) -\hat{\alpha}||x^A_i -x^P_j||\right) + 2\hat{\alpha} ||x^A_i - x^P_j|| \le \beta_i + \beta'_j + 2 \hat{\alpha}b\\
    \implies &\max_{y \in \{\pm 1\}}\sup_{(x,a)} \left(\ell(\theta, (x, a, y)) - \alpha_A d^i_A(x,a)  -  \alpha_P d^j_P(x,y)\right) \le \beta^*_{i} + {\beta'}^*_j
\end{align*}
where the last implication follows from Corollary~\ref{cor:sup-approx-everything}.

Finally, we argue that the objective value of $(\theta^*, \alpha^*_A, \alpha^*_P, \{\beta^*_i\}, \{{\beta'}^*\})$ is at most $2\hat{\alpha}b$ more than that of the optimal dual solution. 

\begin{lemma}
Any feasible dual solution is a feasible solution to problem \eqref{eqn:convex-approx-prob}. In other words, the objective value of the optimal solution to problem \eqref{eqn:convex-approx-prob} must be less than or equal to the objective value of the dual solution.
\end{lemma}
\begin{proof}
Consider any feasible dual solution $(\theta, \alpha_A, \alpha_P, \{\beta_i\}, \{\beta'\})$. Then, we must have
    \begin{align*}
        &f(y^P_j\langle\theta, (\hat{x}_{i,j}, a^A_i) \rangle) + \max(y^P_j\langle\theta, (\hat{x}_{i,j}, a^A_i) \rangle-\alpha_P\kappa_P,0) -\hat{\alpha}||x^A_i -x^P_j|| \\
        &\le \max_{y \in \{\pm 1\}} \sup_{(x,a)} \left(\ell(\theta, (x, a, y)) - \alpha_A d^i_A(x,a)  -  \alpha_P d^j_P(x,y)\right) \le \beta_i + \beta'_j,
    \end{align*}
     meaning the feasible space of problem \eqref{eqn:convex-approx-prob} is a superset of the feasible space of the dual problem.
\end{proof}

\newcommand{\dual}{\text{dual}}
Write the optimal dual solution as $(\theta^{\dual}, \alpha^\dual_A, \alpha^\dual_P, \{\beta_i^\dual\}_i, \{{\beta'}^\dual_j\}_j)$ and recall the optimal solution to problem \eqref{eqn:convex-approx-prob} is $(\theta, \alpha_A, \alpha_P, \{\beta_i\}, \{\beta'\})$. Then, we have
\begin{align*}
    &\alpha_A r_A + \alpha_P r_P + \frac{1}{n_A}\sum_{i\in [n_a]} \beta_{i} + \frac{1}{n_P} \sum_{j \in [n_P]} \beta'_j \le \alpha^\dual_A r_A + \alpha^\dual_P r_P + \frac{1}{n_A}\sum_{i\in [n_a]} \beta^\dual_{i} + \frac{1}{n_P} \sum_{j \in [n_P]} {\beta'}^\dual_j \\
    \implies &\alpha^*_A r_A + \alpha^*_P r_P + \frac{1}{n_A}\sum_{i\in [n_a]} \beta^*_{i} + \frac{1}{n_P} \sum_{j \in [n_P]} {\beta'}^*_j -2\hat{\alpha}b \le \alpha^\dual_A r_A + \alpha^\dual_P r_P + \frac{1}{n_A}\sum_{i\in [n_a]} \beta^\dual_{i} + \frac{1}{n_P} \sum_{j \in [n_P]} {\beta'}^\dual_j.
\end{align*}

Because of the strong duality due to Theorem \ref{thm:our-duality}, the objective of the optimal dual solution is exactly the objective of the optimal solution to the original problem \eqref{eqn:disjoint-learning-orig}. This concludes the proof of Theorem \ref{thm:main-thm}.
\end{proof}

\subsection{Infimal Convolution}
\label{subsec:infimal-convolution}
Now, we prove a few lemmas regarding the infimal convolution of two linear functions\footnote{Readers more interested in the properties of infimal convolution may refer to \cite{stromberg1994study}.}. Since the domain of $g^i_1$ and $g^j_2$ invovles the same $p$-norm, we elide $p$ in the following lemmas.

\begin{lemma}
\label{lem:inf-conv-affine}
$(g^i_1 \square g^j_2)(\theta)$ is convex in $\theta$.
\end{lemma}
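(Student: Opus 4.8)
The plan is to recognize this as an instance of the standard fact that the infimal convolution of two convex functions is again convex, and to give a short self-contained argument after first checking that the two summands are themselves convex. So the first step would be to verify convexity of $g^i_1$ and $g^j_2$ individually. Each is an affine function ($\theta \mapsto \langle \theta, x^A_i\rangle$, respectively $\theta \mapsto \langle \theta, x^P_j\rangle$) restricted to the dual-norm ball $\{\theta : \|\theta\|_{p,*} \le \alpha_A\}$ (respectively $\{\theta : \|\theta\|_{p,*} \le \alpha_P\}$) and set to $+\infty$ elsewhere. Since any norm is convex, these balls are convex sets, and an affine function restricted to a convex set with value $+\infty$ outside is convex; this takes one line for each summand.

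The core step is then to show that $(g^i_1 \square g^j_2)(\theta) = \inf_{\theta_1 + \theta_2 = \theta} g^i_1(\theta_1) + g^j_2(\theta_2)$ is convex. I would argue directly. Fix $\theta, \theta'$ and $t \in [0,1]$; if either $(g^i_1 \square g^j_2)(\theta)$ or $(g^i_1 \square g^j_2)(\theta')$ equals $+\infty$ the desired inequality is trivial, so assume both are finite. Because the effective domains are the compact dual-norm balls and the summands are continuous there, the infima defining the two values are attained; pick optimal decompositions $\theta = \theta_1 + \theta_2$ and $\theta' = \theta'_1 + \theta'_2$. Then $t\theta + (1-t)\theta' = (t\theta_1 + (1-t)\theta'_1) + (t\theta_2 + (1-t)\theta'_2)$ is a feasible decomposition of the convex combination, so by the definition of the infimal convolution followed by convexity of $g^i_1$ and $g^j_2$,
\[
(g^i_1 \square g^j_2)(t\theta + (1-t)\theta') \le g^i_1(t\theta_1 + (1-t)\theta'_1) + g^j_2(t\theta_2 + (1-t)\theta'_2) \le t\,(g^i_1 \square g^j_2)(\theta) + (1-t)\,(g^i_1 \square g^j_2)(\theta'),
\]
which is exactly convexity.

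Alternatively, and perhaps more in keeping with the \epi notation already introduced, I could argue via epigraphs: one shows $\epi(g^i_1 \square g^j_2) = \epi(g^i_1) + \epi(g^j_2)$ as a Minkowski sum, and the sum of two convex sets is convex, so $(g^i_1 \square g^j_2)$ is convex. I expect the only point needing care, under either route, to be the handling of the $+\infty$ values together with attainment of the infimum: in general the epigraph identity is only an inclusion, and the direct argument needs an $\epsilon$-approximate decomposition when the infimum is not attained. Here, however, compactness of the dual-norm balls guarantees attainment, which removes the one genuine subtlety and leaves the rest as routine book-keeping.
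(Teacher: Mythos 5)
Your proof is correct, but it takes a different route from the paper. The paper argues via epigraphs: it invokes the identity $\epi(g^i_1 \square g^j_2) = \epi g^i_1 + \epi g^j_2$ (citing Str\"omberg) and then checks directly that a Minkowski sum of two convex sets is convex, so the epigraph of the infimal convolution is convex. Your primary argument instead verifies the convexity inequality directly: take optimal decompositions $\theta = \theta_1 + \theta_2$ and $\theta' = \theta'_1 + \theta'_2$, observe that their convex combination is a feasible decomposition of $t\theta + (1-t)\theta'$, and chain the definition of $\square$ with convexity of $g^i_1$ and $g^j_2$. Both are standard, but your version is self-contained and, as you note, does not really need attainment at all --- replacing the optimal decompositions by $\epsilon$-approximate ones gives the same inequality up to $\epsilon$, so the argument works for arbitrary convex summands. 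You also correctly flag the one subtlety the paper glosses over: the epigraph identity it cites holds with equality only when the infimal convolution is exact (the infimum is attained); in general only the strict epigraphs satisfy the Minkowski-sum identity, and convexity of the strict epigraph is what one would fall back on. Here exactness does hold, exactly as you argue: the feasible set $\{(\theta_1,\theta_2): \theta_1+\theta_2=\theta,\ \|\theta_1\|_{p,*}\le \alpha_A,\ \|\theta_2\|_{p,*}\le \alpha_P\}$ is compact and the objective is continuous, so your attainment claim is sound. In short: the paper's route is a one-liner given the cited fact; yours trades the citation for two lines of elementary bookkeeping and removes the exactness caveat entirely.
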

\begin{proof}
In order to show a function is convex, it suffices to show that its epigraph is convex. 
Note that epigraphs of $g^i_1$ and $g^j_2$ are both convex:
\[
    S_1 = \epi g^i_1 = \{(q,r): ||q||_{*} \le \alpha_A, r \ge g^i_1(q) \}
\]
\[
    S_2 = \epi g^j_2 = \{(q,r): ||q||_{*} \le \alpha_P, r \ge g^j_2(q) \}.
\]
Note that the epigraph of $(g^i_1 \square g^j_2)(\theta)$ is the Minkowski sum of $S_1$ and $S_2$ \cite{stromberg1994study}:
\[
    S_3 = \epi (g^i_1 \square g^j_2) = \{(x_1 + x_2, r_1 + r_2): (x_1, r_1) \in S_1, (x_2, r_2) \in S_2\}.
\]
For any $(x_1 + x_2, r_1 + r_2) \in S_3$ and $(x'_1 + x'_2, r'_1 + r'_2) \in S_3$ where $(x_1, r_1), (x'_1, r'_1) \in S_1$ and $(x_2, r_2), (x'_2, r'_2) \in S_2$, the convex combination with $t \in [0,1]$
\[(t(x_1 + x_2) + (1-t)(x'_1 + x'_2), t(r_1 + r_2) + (1-t) (r'_1 + r'_2)\] must belong in $S_3$ because
$(tx_1 + (1-t)x'_1, tr_1 + (1-t)r'_1) \in S_1$ and $(tx_2 + (1-t)x'_2, tr_2 + (1-t)r'_2) \in S_2$ due to the convexity of $S_1$ and $S_2$.

\end{proof}

\begin{lemma}
\label{lem:inf-conv-at-zero}
\[
        (g^i_1 \square g^j_2)(0) = -\min(\alpha_A, \alpha_P) || x^P_j-x^A_i||.
\]
\end{lemma}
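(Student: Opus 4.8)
The plan is to evaluate the infimal convolution directly at $\theta = 0$ by exploiting that $g^i_1$ and $g^j_2$ are linear functionals restricted to dual-norm balls. Starting from the definition $(g^i_1 \square g^j_2)(0) = \inf_{\theta_1 + \theta_2 = 0} g^i_1(\theta_1) + g^j_2(\theta_2)$, I would first use the constraint $\theta_2 = -\theta_1$ to eliminate $\theta_2$. The finiteness requirements then become $\|\theta_1\|_{*} \le \alpha_A$ (from $g^i_1$) and $\|{-\theta_1}\|_{*} = \|\theta_1\|_{*} \le \alpha_P$ (from $g^j_2$), which combine into the single constraint $\|\theta_1\|_{*} \le \min(\alpha_A, \alpha_P)$. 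On this feasible set the objective collapses to $\langle \theta_1, x^A_i\rangle + \langle -\theta_1, x^P_j\rangle = \langle \theta_1, x^A_i - x^P_j\rangle$, so the whole problem reduces to minimizing a single linear functional over a dual-norm ball.

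Next I would solve this minimization in closed form. Writing $c = \min(\alpha_A, \alpha_P)$ and $v = x^A_i - x^P_j$, the quantity of interest is $\inf_{\|\theta_1\|_{*} \le c} \langle \theta_1, v\rangle = -\sup_{\|\theta_1\|_{*} \le c}\langle \theta_1, -v\rangle = -c\,\sup_{\|q\|_{*} \le 1}\langle q, -v\rangle$. The key identity is the bidual characterization of the norm, $\|v\| = \sup_{\|q\|_{*} \le 1}\langle q, v\rangle$, which holds because we work in finite-dimensional $\R^{m_1}$ (where the bidual norm recovers the original norm). Applying it gives $-c\,\|{-v}\| = -c\,\|v\| = -\min(\alpha_A,\alpha_P)\,\|x^A_i - x^P_j\|$, and by symmetry of the norm this equals $-\min(\alpha_A,\alpha_P)\,\|x^P_j - x^A_i\|$, which is exactly the claimed value.

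There is essentially no genuine obstacle here; the only point requiring a moment of care is invoking the dual-norm identity $\sup_{\|q\|_{*}\le 1}\langle q, v\rangle = \|v\|$, which is legitimate precisely because the ambient space is finite-dimensional and hence reflexive, so $\|\cdot\|$ is the dual of $\|\cdot\|_{*}$. It is worth noting that this lemma is the $b\theta_1 = 0$ specialization consistent with Lemma~\ref{lem:arg_min_second_term}, where the same $\inf$-$\sup$ swap and dual-norm argument already appear; indeed the minimizer can be taken as $q = -\min(\alpha_A,\alpha_P)\,\overline{(x^A_i - x^P_j)}_{*}$, confirming the value is attained.
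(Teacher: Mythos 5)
Your proof is correct and follows essentially the same route as the paper's: both substitute $\theta_2 = -\theta_1$ to reduce the infimal convolution at $0$ to minimizing $\langle \theta_1, x^A_i - x^P_j\rangle$ over the dual-norm ball of radius $\min(\alpha_A,\alpha_P)$, then invoke the identity $\sup_{\|q\|_* \le 1}\langle q, v\rangle = \|v\|$. Your explicit remarks on finite-dimensionality and the attaining minimizer are fine but add nothing beyond what the paper's four-line computation already contains.
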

\begin{proof}
    \begin{align*}
        (g^i_1 \square g^j_2)(0) &= \inf_{q: ||q||_{*} \le \min(\alpha_A, \alpha_P)} g^i_1(q) + g^j_2(-q) \\
        &= \inf_{q: ||q||_{*} \le \min(\alpha_A, \alpha_P)} \langle q, x^A_i\rangle - \langle q, x^P_j \rangle \\
        &= -\sup_{q: ||q||_{*} \le \min(\alpha_A, \alpha_P)} \langle q, -x^A_i + x^P_j \rangle \\
        &= -\min(\alpha_A, \alpha_P) || x^A_i- x^P_j||.
    \end{align*}
\end{proof}

Now, for any $q$, we write \[
        v(q) = \arg\max_{v: ||v|| \le 1} \langle v, q \rangle.
    \]
Note that $\langle v(q), q \rangle = || q||_{*}$. In words, $v(q)$ is the vector whose inner product with $q$ evaluates to the dual norm of $q$. Note that for any scalar $c > 0$, $v(q) = v(c q)$, meaning only the direction matters.

In the lemma below, we show that given two different directions $(q,q')$, we must have $v(q) \neq v(q')$.

\begin{lemma}
\label{lem:dual-norm-normal-vector}
    Suppose the norm $||\cdot||$ is some $p$-norm where $p \neq 1$ and $p \neq \infty$, meaning corresponding dual norm $||\cdot||_{p,*}$ is $r$-norm where $r\neq 1$ and $r \neq \infty$. Given $q$ and $q'$ where $||q||_{*}=||q'||_{*} = 1$ and $q \neq q'$, we must have $v(q) \neq v(q')$.
\end{lemma}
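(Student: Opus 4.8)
The plan is to produce an explicit formula for the maximizer $v(q)$ and then observe that it acts coordinatewise through a strictly monotone scalar function, which makes injectivity immediate. First I would write $r$ for the dual exponent, so $\tfrac1p + \tfrac1r = 1$ and, by hypothesis, $1 < r < \infty$. For a fixed $q$, H\"{o}lder's inequality gives $\langle v, q\rangle \le ||v||_p\,||q||_r \le ||q||_r$ for all $||v||_p \le 1$, and the two inequalities are simultaneously tight exactly when $||v||_p = 1$, $|v[d]|^p$ is proportional to $|q[d]|^r$, and $\text{sgn}(v[d]) = \text{sgn}(q[d])$ for every coordinate $d$. Solving these conditions and normalizing to $||v||_p = 1$ yields the unique maximizer
\[
    v(q)[d] = \frac{\text{sgn}(q[d])\,|q[d]|^{r-1}}{||q||_r^{\,r-1}}.
\]
Here uniqueness uses $1 < p < \infty$, for which the $p$-ball is strictly convex, so the linear functional $\langle\cdot,q\rangle$ attains its maximum at a single point of the sphere; this is exactly why the hypothesis rules out $p = 1$ and $p = \infty$.

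Next I would specialize to the hypothesis $||q||_{*} = ||q||_r = 1$, under which the denominator $||q||_r^{\,r-1}$ equals $1$ and the formula collapses to $v(q)[d] = \phi(q[d])$, where $\phi(t) = \text{sgn}(t)\,|t|^{r-1}$ is applied coordinatewise. Since $1 < r < \infty$, the exponent $r-1$ lies in $(0,\infty)$, so $\phi$ is strictly increasing on $[0,\infty)$ and odd, hence a strictly increasing bijection of $\mathbb{R}$; in particular $\phi$ is injective. Injectivity of the whole map $v$ on the dual unit sphere then follows at once: if $q \ne q'$ with $||q||_r = ||q'||_r = 1$, then $q[d] \ne q'[d]$ for some coordinate $d$, whence $v(q)[d] = \phi(q[d]) \ne \phi(q'[d]) = v(q')[d]$, so $v(q) \ne v(q')$.

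The only genuinely delicate point is the derivation of the explicit maximizer, i.e.\ justifying the H\"{o}lder equality case together with the uniqueness of the maximizer, and it is precisely here that the exclusions $p \ne 1, \infty$ (equivalently $r \ne \infty, 1$) are indispensable: at $r = 1$ the map $\phi$ degenerates to $\text{sgn}$, which is not injective, while at $r = \infty$ the maximizer ceases to be unique. Everything else is a routine consequence of componentwise strict monotonicity.
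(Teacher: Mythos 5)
Your proof is correct, and it arrives at the same explicit maximizer as the paper --- note that $|q[d]|^{1/(p-1)} = |q[d]|^{r-1}$, so your $\phi(q[d])$ coincides with the paper's normalized $v^{+1}(q)$, once one observes (as you do implicitly via $||q||_r = 1$) that the paper's normalizing constant $||v^{+1}(q)||_p = ||q||_r^{r/p}$ equals $1$ on the dual unit sphere. The route to that formula is genuinely different, though: the paper runs a Lagrange-multiplier computation on the sphere $||v||_p = 1$, obtains two stationary candidates $v^{\pm 1}(q)$, and selects the one with positive inner product, whereas you extract the maximizer from the equality case of H\"{o}lder's inequality. Your route buys two things. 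First, H\"{o}lder's equality conditions deliver uniqueness of the maximizer for free, while the stationarity analysis only identifies critical points and tacitly assumes the maximum is attained at one of them. Second, and more substantively, the paper's final step --- that $q \neq q'$ on the dual sphere forces $v(q) \neq v(q')$ --- is asserted only ``by construction''; you make it rigorous by exhibiting $v$ as the coordinatewise application of $\phi(t) = \text{sgn}(t)\,|t|^{r-1}$, a strictly increasing bijection of $\R$ for $1 < r < \infty$, so injectivity follows from a single differing coordinate. Your closing remark also correctly locates where the hypotheses $p \neq 1, \infty$ are used ($\phi$ degenerates to $\text{sgn}$ when $r = 1$, and uniqueness of the maximizer fails when $r = \infty$), a point the paper leaves implicit.
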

\begin{proof}
     For any $q$ where $||q||_{*}=1$, let's consider $v(q) = \max_{v: ||v|| \le 1} \langle v, q \rangle$. Because the linear objective forces the optimal solution be at the boundary of the feasible convex set, it is equivalent to solving $\max_{v: ||v|| = 1} \langle v, q \rangle$. Lagrange multiplier approach yields the following conditions for the optimal solution:
    \begin{align}
    \begin{split}
        q[i] + \lambda \cdot \text{sign}(v(q)[i]) \cdot \left(\frac{|v(q)[i]|}{||v(q)||}\right)^{p-1} &= 0 \quad \forall i \in [n]\\
        ||v(q)|| &= 1
    \end{split}
    \label{eqn:lagr-station}
    \end{align}
   where $\lambda$ corresponds to the Lagrange multiplier. 
   
   Consider the following two unnormalized vectors $v^{+1}$ and $v^{-1}$:
   \begin{align*}
       v^{+1}(q) &= \left(\text{sign}(q[i]) \cdot \left| |q[1]|^{\frac{1}{p-1}}\right|, \dots, \text{sign}(q[n]) \cdot \left| |q[n]|^{\frac{1}{p-1}}\right|\right) \\
       v^{-1}(q) &= \left(\text{sign}(-q[i]) \cdot \left| |q[1]|^{\frac{1}{p-1}}\right|, \dots, \text{sign}(-q[n]) \cdot \left| |q[n]|^{\frac{1}{p-1}}\right|\right).
   \end{align*} 
   The solutions to the equations in \eqref{eqn:lagr-station} are the normalized $\frac{v^{+1}(q)}{||v^{+1}(q)||}$ and $\frac{v^{-1}(q)}{||v^{-1}(q)||}$, meaning they are the local optima. 
   
   Because $\langle \frac{v^{+1}(q)}{||v^{+1}(q)||}, q \rangle = - \langle \frac{v^{-1}(q)}{||v^{-1}(q)||}, q \rangle$ and $\langle \frac{v^{+1}(q)}{||v^{+1}(q)||}, q \rangle > 0$, we must have that 
    \[
     v(q) = \arg\max_{v: ||v|| \le 1} \langle v, q\rangle = \frac{v^{+1}(q)}{||v^{+1}(q)||}.
    \]
   
   Hence, for any two different directions $q$ and $q'$, we must have that $\frac{v^{+1}(q)}{||v^{+1}(q)||}$ will be different by construction, as long as $p \neq 1$ or $p \neq \infty$. Hence, $v(q) \neq v(q')$.
\end{proof}

\begin{corollary}
\label{cor:dual-norm-normal-vector}
Suppose the norm $||\cdot||$ is some $p$-norm where $p \neq 1$ and $p \neq \infty$.
 For any $q$ where $||q||_{*} = \alpha$, we have that for any other $q'$ where $q' \neq q$ and $||q'||_{*} \le \alpha$,
    \begin{align*}
        \langle v(q), q\rangle > \langle v(q), q'\rangle.
    \end{align*}
\end{corollary}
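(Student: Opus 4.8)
The plan is to derive the strict inequality from two ingredients: the elementary dual-norm bound $\langle v(q), q'\rangle \le \|q'\|_{*}$ combined with the normalization $\langle v(q), q\rangle = \|q\|_{*} = \alpha$, and the uniqueness of the dual-norm maximizer established in Lemma \ref{lem:dual-norm-normal-vector} to upgrade the weak inequality to a strict one. First I would record that since $v(q)$ lies on the unit sphere (the proof of Lemma \ref{lem:dual-norm-normal-vector} exhibits it as the normalized vector $v^{+1}(q)/\|v^{+1}(q)\|$), the definition of the dual norm gives $\langle v(q), q'\rangle \le \sup_{\|v\|\le 1}\langle v, q'\rangle = \|q'\|_{*}$ for every $q'$, while $\langle v(q), q\rangle = \|q\|_{*} = \alpha$ by construction of $v(q)$.

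Then I would split into two cases according to the size of $\|q'\|_{*}$. If $\|q'\|_{*} < \alpha$, the chain $\langle v(q), q'\rangle \le \|q'\|_{*} < \alpha = \langle v(q), q\rangle$ already yields the claim with room to spare. The delicate case is $\|q'\|_{*} = \alpha$ with $q' \neq q$, where both inequalities above risk being tight simultaneously. Here I would argue by contradiction: if $\langle v(q), q'\rangle = \alpha = \|q'\|_{*}$, then $v(q)$ attains the supremum defining $\|q'\|_{*}$, i.e.\ $v(q)$ is a maximizer of $\langle \cdot, q'\rangle$ over the unit ball, hence equals $v(q')$. But $v(\cdot)$ depends only on the direction of its argument (from $v(q)=v(cq)$ for $c>0$), and since $\|q\|_{*} = \|q'\|_{*} = \alpha$ with $q \neq q'$ the normalized directions $\overline{q}_{*}$ and $\overline{q'}_{*}$ are distinct; Lemma \ref{lem:dual-norm-normal-vector} then forces $v(q) = v(\overline{q}_{*}) \neq v(\overline{q'}_{*}) = v(q')$, a contradiction. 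Consequently $\langle v(q), q'\rangle < \alpha$ in this case as well, and combining both cases gives $\langle v(q), q\rangle > \langle v(q), q'\rangle$.

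The main obstacle is securing the \emph{strict} inequality in the boundary case $\|q'\|_{*} = \alpha$: this is exactly where the hypothesis $p \neq 1, \infty$ is needed, since for the excluded exponents the dual unit sphere is not strictly convex and the maximizer need not be unique, so one could have $v(q) = v(q')$ for distinct directions and equality could genuinely occur. Apart from invoking Lemma \ref{lem:dual-norm-normal-vector}, the only routine point to confirm is the scale-invariance $v(q) = v(cq)$ for $c > 0$, which is immediate from the scale-invariance of the $\arg\max$ over the unit ball and makes the reduction to normalized directions harmless.
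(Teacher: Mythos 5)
Your proof is correct and takes essentially the same approach as the paper's: both reduce to normalized directions via the scale-invariance $v(q) = v(cq)$ and derive strictness from the uniqueness of the dual-norm maximizer established in Lemma \ref{lem:dual-norm-normal-vector}, which is exactly where $p \neq 1, \infty$ enters. The only difference is organizational — you split on $||q'||_{*} < \alpha$ versus $||q'||_{*} = \alpha$ and handle the boundary case by contradiction, whereas the paper splits on $\overline{q}_{*} = \overline{q'}_{*}$ versus $\overline{q}_{*} \neq \overline{q'}_{*}$ and in the latter case chains $\langle v(q), q\rangle = \alpha \ge ||q'||_{*} = \langle v(q'), q'\rangle > \langle v(q), q'\rangle$ directly.
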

\begin{proof}
    As said in Section \ref{sec:prelim}, given any vector $q$, we'll write $\overline{q}_{*} = \frac{q}{||q||_{*}}$. If $\overline{q}_{*} = \overline{q'}_{*}$, then there exists some scalar $c > 1$ such that $q = c q'$ since $||q||_{*} > ||q'||_{*}$. Then, we must have
    \begin{align*}
        \langle v(q), q \rangle = c \langle v(q), q' \rangle > \langle v(q), q' \rangle
    \end{align*}
    as $v(q) = v(q')$ in this case.
    
    Now, in the case where $\overline{q}_{*} \neq \overline{q'}_{*}$, we see that
    \begin{align*}
        \langle v(q), q \rangle = \alpha  \ge ||q'||_{*}  = \langle v(q'), q' \rangle > \langle v(q), q' \rangle.
    \end{align*}
\end{proof}

\begin{lemma}
\label{lem:inf-conv-tight}
Suppose the norm $||\cdot||$ is some $p$-norm where $p \neq 1$ and $p \neq \infty$.
Fix some direction $\overline{\theta}_{*}$ where $||\overline{\theta}_{*}||_{*} = 1$. Then,
\begin{align*}
    (g^i_1 \square g^j_2)((\alpha_A + \alpha_P)\overline{\theta}_{*}) = \langle \overline{\theta}_{*}, \alpha_A x^A_i + \alpha_P x^P_j \rangle.
\end{align*}
\end{lemma}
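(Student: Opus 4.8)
The plan is to reduce the infimal convolution to a \emph{single} feasible splitting of $\theta := (\alpha_A + \alpha_P)\overline{\theta}_{*}$ and then read off the value directly. First I would record that $\|\theta\|_{*} = (\alpha_A+\alpha_P)\|\overline{\theta}_{*}\|_{*} = \alpha_A + \alpha_P$, and recall that $(g^i_1 \square g^j_2)(\theta) = \inf_{\theta_1 + \theta_2 = \theta} g^i_1(\theta_1) + g^j_2(\theta_2)$, where a splitting contributes a finite value only when $\|\theta_1\|_{*} \le \alpha_A$ and $\|\theta_2\|_{*} \le \alpha_P$. Thus it suffices to understand which splittings are feasible.

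The core of the argument is that exactly one feasible splitting exists. Take any $\theta_1 + \theta_2 = \theta$ with $\|\theta_1\|_{*}\le\alpha_A$ and $\|\theta_2\|_{*}\le\alpha_P$. The triangle inequality gives $\alpha_A + \alpha_P = \|\theta\|_{*} \le \|\theta_1\|_{*} + \|\theta_2\|_{*} \le \alpha_A + \alpha_P$, so both norm bounds are tight: $\|\theta_1\|_{*} = \alpha_A$ and $\|\theta_2\|_{*} = \alpha_P$. Next I would test both pieces against the dual-norm-attaining vector $v(\theta)$ defined in this appendix, for which $\|v(\theta)\| = 1$ and $\langle v(\theta), \theta \rangle = \|\theta\|_{*}$. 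Since $\langle v(\theta), \theta_1\rangle \le \|\theta_1\|_{*} = \alpha_A$ and $\langle v(\theta), \theta_2\rangle \le \|\theta_2\|_{*} = \alpha_P$, while their sum equals $\langle v(\theta), \theta\rangle = \alpha_A + \alpha_P$, both inequalities must in fact be equalities.

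Now I would invoke Corollary \ref{cor:dual-norm-normal-vector}, which is exactly the tool that forces uniqueness when $p \neq 1, \infty$. Setting $q = \alpha_A \overline{\theta}_{*}$ (so $\|q\|_{*} = \alpha_A$ and, since $v$ depends only on direction, $v(q) = v(\theta)$, giving $\langle v(q), q\rangle = \alpha_A$), the corollary states that any $q' \neq q$ with $\|q'\|_{*} \le \alpha_A$ satisfies $\langle v(q), q'\rangle < \alpha_A$. Because $\theta_1$ attains $\langle v(\theta), \theta_1 \rangle = \alpha_A$, we must have $\theta_1 = \alpha_A \overline{\theta}_{*}$; the identical argument with $q = \alpha_P \overline{\theta}_{*}$ gives $\theta_2 = \alpha_P \overline{\theta}_{*}$. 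Hence the feasible set of splittings is the singleton $\{(\alpha_A\overline{\theta}_{*}, \alpha_P\overline{\theta}_{*})\}$, and evaluating the objective there yields
\[
(g^i_1 \square g^j_2)(\theta) = \langle \alpha_A \overline{\theta}_{*}, x^A_i\rangle + \langle \alpha_P \overline{\theta}_{*}, x^P_j\rangle = \langle \overline{\theta}_{*}, \alpha_A x^A_i + \alpha_P x^P_j\rangle.
\]
The main obstacle is the uniqueness of the splitting: it genuinely requires strict convexity of the dual norm (equivalently $p \neq 1, \infty$), and this is precisely where Corollary \ref{cor:dual-norm-normal-vector} must be used rather than the bare triangle inequality, which alone would only pin down the norms of $\theta_1$ and $\theta_2$, not their directions.
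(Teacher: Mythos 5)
Your proof is correct and follows essentially the same approach as the paper: both establish that $(\alpha_A\overline{\theta}_{*}, \alpha_P\overline{\theta}_{*})$ is the \emph{unique} feasible splitting, with Corollary \ref{cor:dual-norm-normal-vector} supplying the strict inequality that pins down the directions, and then evaluate the infimal convolution at that single pair. Your route to uniqueness (triangle inequality forcing $\|\theta_1\|_{*}=\alpha_A$, $\|\theta_2\|_{*}=\alpha_P$, then equality in the pairing with $v(\theta)$) is a mildly streamlined variant of the paper's perturbation-by-$u$ contradiction, but the decomposition and key lemma are identical.
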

\begin{proof}
    We first claim that when given $(\alpha_A + \alpha_P)\overline{\theta}_{*}$, there exists only one pair $(q_1,q_2)$ such that $||q_1||_{*} \le \alpha_A$, $||q_2||_{*} \le \alpha_P$, and $q_1 + q_2 = (\alpha_{a} + \alpha_P)\overline{\theta}_{*}$: namely, 
    \[
        q_1 = \alpha_A \overline{\theta}_{*} \quad\text{and}\quad q_2 = \alpha_P \overline{\theta}_{*}.
    \]
    By construction, $||q_1||_{*} = \alpha_A$, $||q_2||_{*} = \alpha_P$, and $q_1 + q_2 = (\alpha_A + \alpha_P)\overline{\theta}_{*}$. 
    
    Now, for the sake of contradiction, suppose there exists another $(q'_1, q'_2)$ such that the above condition holds true. Because $q'_1 + q'_2 = (\alpha_A + \alpha_P)\overline{\theta}$, let's say that
    $q'_1 = q_1 + u$ and $q'_2 = q_2 - u$ for some $u \neq 0$. However, we argue that it must be the case that either $||q'_1||_{*} > \alpha_A$ or $||q'_2||_{*} > \alpha_P$. Without loss of generality, suppose $||q_1 + u||_{*} = \alpha_A -\epsilon$ for some $\epsilon \ge 0$.
    
    Now, consider $v(\overline{\theta}) = v(q_1) = v(q_2)$. Corollary \ref{cor:dual-norm-normal-vector} tells us that for any other $q'$ where $||q'||_{*} \le \alpha_A$, 
    \[
        \langle v(\overline{\theta}_{*}), q'\rangle < \langle v(\overline{\theta}_{*}), q_1 \rangle = \alpha_A.
    \]
    Because the dual norm of $q_1+u$ is still bounded by $\alpha_A$, we have
    \begin{align*}
        \langle v(\overline{\theta}_{*}), q_1 + u \rangle &< \langle v(\overline{\theta}_{*}), q_1 \rangle\\
        \langle v(\overline{\theta}_{*}), u \rangle &< 0.
    \end{align*}
    
   Then, we must have
   \begin{align*}
       ||q'_2||_{*} = \langle v(q'_2), q_2 - u\rangle > \langle v(\overline{\theta}_{*}), q_2 - u\rangle = \alpha_P - \langle v(\overline{\theta}_{*}), u \rangle > \alpha_P,
   \end{align*}
   giving us the needed contradiction.
   
   Therefore, because there's only pair $(q_1,q_2) = (\alpha_A \overline{\theta}_{*}, \alpha_P \overline{\theta}_{*})$ where $||q_1||_{*} \le \alpha_A$, $||q_2||_{*} \le \alpha_P$, and $q_1 + q_2 = (\alpha_A + \alpha_P)\overline{\theta}_{*}$, we must have
   \begin{align*}
       (g^i_1 \square g^j_2)((\alpha_A + \alpha_P)\overline{\theta}_{*}) &= g^i_1(\alpha_A \overline{\theta}_{*}) + g^j_2(\alpha_{P}\overline{\theta}_{*})\\
       &= \langle \overline{\theta}_{*}, \alpha_A x^A_i + \alpha_P x^P_j \rangle
   \end{align*}  
\end{proof}

\section{Simplifications via Additional Assumptions}
\label{sec:opt}
\subsection{Approximation}
\label{subsec:approx}
We will first try to reformulate the original problem by making some structural assumption about the optimal transport $\pi^y_{i,j}(x,a)$. Because it is an optimal transport, we most likely have that for every $(x,a,y)$ whose measure is non-zero (i.e. $\pi^y_{i,j}(x,a)> 0$), its distance to $(x^A_i, a^A_i)$ and $(x^P_j, y^P_j)$ should be small. In other words, we most likely have that for any $(i,j)$ where $||x^A_i - x^P_j||$ is big, $\pi^y_{i,j}(x,a)$ will be zero. Therefore, we assume that for every $i \in [n_A]$, we only consider its $k$-closest neighbors out of $\{x^P_j\}_{j \in [n_P]}$ and do the same for $j \in [n_P]$. We will denote this set of pairs by $M$
\begin{align*}
    M = \left\{(i,j): \parbox{.75\textwidth}{$x^A_i$ is one of $x^P_j$'s $k$-nearest neighbors among $\{x^A_{i'}\}_{i'}$ or $x^P_j$ is one of $x^A_i$'s $k$-nearest neighbors among $\{x^P_{j'}\}_{j'}$.}\right\}
\end{align*}

Noting that the dual constraint for each $i \in [n_A], j\in[n_P],$ and $y \in \cY$ corresponds to the primal variable $\pi^y_{i,j}$, we may have the primal variables be defined only over pairs $(i,j) \in M$ and have the dual constraints be defined only over $M$ consequently. Then, after multiplying the objective by $n_A n_P$ with some rearranging, the dual problem becomes 
\begin{align*}
    &\min_{\substack{\theta, \alpha_A, \alpha_P, \\ \{\beta_{i}, \beta'_j\}_{(i,j) \in M}}} n_A n_P(\alpha_A r_A + \alpha_P r_P) + \sum_{(i,j) \in M} (\beta_{i} + \beta'_j)\\
    &\text{s.t.} \max_{y \in \cY}\sup_{(x,a)} \left(\ell(\theta, (x, a, y)) - \alpha_A d^i_A(x,a)  -  \alpha_P d^j_P(x,y)\right) \le \beta_{i} + \beta'_j \quad \forall (i,j) \in M.
\end{align*}
In the case where the k-nearest-neighbor graph $M$ between $S_A$ and $S_P$ is nicely structured\footnote{More formally, this is equivalent to assuming that there exists a feasible solution to the following system following linear equations. Suppose $A$ is a $|M| \times (n_A + n_P)$ matrix where for every $l$th pair $(i,j)$ in $M$, $M[l, i]=1$ and $M[l, n_A +j]=1$. And $b$ is a vector of length $|M|$ where for every $l$th pair $(i,j) \in M$, $b[l] = \max_{y \in \cY} \sup_{(x,a)} \left(\ell(\theta, (x, a, y)) - \alpha_A d^i_A(x,a)  -  \alpha_P d^j_P(x,y)\right).$ Our assumption is equivalent to assuming that there exists a vector $x$ of length $n_A + n_P$ such that $Ax=b$ or equivalently, $A$ is left-invertible. Note that the very first $n_A$ coordinates correspond to $\{\beta_i\}$ and the last $n_P$ coordinates correspond to $\{\beta'_j\}$.}, we should be able to find $\beta_i$ and $\beta_j$ such that for each $(i,j) \in M$, $\max_{y \in \cY} \sup_{(x,a)} \left(\ell(\theta, (x, a, y)) - \alpha_A d^i_A(x,a)  -  \alpha_P d^j_P(x,y)\right) = \beta_{i} + \beta'_j$. This assumption allows us to re-write the problem as 
\begin{align*}
    &\min_{\alpha_A, \alpha_P, \theta} n_An_P(\alpha_A r_A + \alpha_P r_P) + \sum_{(i,j) \in M} \max_{y \in \cY}\sup_{(x,a)} \big(\ell(\theta, (x, a, y)) - \alpha_A d^i_A(x,a)  -  \alpha_P d^j_P(x,y)\big).
\end{align*}

Using our approximation of the supremum term as in Theorem \ref{thm:main-thm} and the above fact, the problem then becomes
\begin{align}
    &\min_{\alpha_A, \alpha_P, \theta_1, \theta_2} (\alpha_A r_A + \alpha_P r_P) + \frac{1}{n_A n_P} \sum_{(i,j) \in M} (f(y^P_j\langle\theta, (\hat{x}_{i,j}, a^A_i) \rangle) + \max(y^P_j\langle\theta, (\hat{x}_{i,j}, a^A_i) \rangle-\alpha_P\kappa_P,0) -\hat{\alpha}||x^A_i -x^P_j||) \label{eqn:final-opt-prob}\\
    &\text{s.t.} ||\theta_1||_{*} \le \alpha_A + \alpha_P, ||\theta_2||_{*} \le \kappa_A \alpha_A. \nonumber
\end{align}
Note that because we have restricted our attention only to pairs who are close to one another, the additive approximation error which amounts to $2\hat{\alpha}\sum_{(i,j) \in M} ||x^A_i - x^P_j||$ in the objective must be small. It can be further shown that the above problem can be solved via solving two instances of convex optimization, which we solve simply via projected gradient descent.

\subsection{Projected Gradient Descent}
\label{app:projected-gradient-descent}
To solve the optimization problem \eqref{eqn:final-opt-prob}, we employ first-order projected gradient descent. In order to handle $\hat{\alpha}=\min(\alpha_A, \alpha_P)$. We can just solve the optimization twice: once with $\alpha_A < \alpha_P$ as one of the constraints and the other time with $\alpha_A \ge \alpha_P$. Suppose $\hat{\alpha} = \alpha_A$, meaning $\hat{x}_{i,j} = x^P_j$. Then we write the objective function as
\begin{align*}
    &\Omega^A(\alpha_A, \alpha_P, \theta) \\
    &= (\alpha_A r_A + \alpha_P r_P) +\frac{1}{n_A n_P} \sum_{(i,j) \in M}(f(y^P_j\langle\theta, (x^P_j, a^A_i) \rangle) \\
    &+ \max(y^P_j\langle\theta, (x^P_j, a^A_i) \rangle-\alpha_P\kappa_P,0) -\alpha_A||x^A_i -x^P_j|| )
\end{align*}
and the constraint set is
\begin{align*}
    C^A &= \{(\alpha_A, \alpha_P, \theta): ||\theta_1||_{*} \le \alpha_A + \alpha_P, \\
    &||\theta_2|| \le \kappa_A \alpha_A, \alpha_A < \alpha_P\}.
\end{align*}
Similarly, when $\hat{\alpha} = \alpha_P$, we write $\Omega^P(\alpha_A, \alpha_P, \theta)$ and $C^P$ where the $\alpha$ constraint is replaced by $\alpha_A \ge \alpha_P$. Note that in both cases, we have a convex optimization problem.
\begin{claim}
The objective functions $\Omega^A(\alpha_A, \alpha_P, \theta)$ and $\Omega^P(\alpha_A, \alpha_P, \theta)$ are convex in $(\alpha_A, \alpha_P, \theta)$. The constraint sets $C^A$ and $C^P$ are also convex in $(\alpha_A, \alpha_P, \theta)$. 
\end{claim}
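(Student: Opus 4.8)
The plan is to establish convexity of each object by decomposing it into elementary building blocks and invoking standard closure rules of convex analysis. The crucial structural observation --- and the very reason the optimization was split into the two cases governing $C^A$ and $C^P$ --- is that once we restrict to the region $\{\alpha_A < \alpha_P\}$, the quantities $\hat{\alpha} = \min(\alpha_A,\alpha_P)$ and $\hat{x}_{i,j}$ both cease to be case-dependent: we have $\hat{\alpha} = \alpha_A$ and $\hat{x}_{i,j} = x^P_j$ identically throughout $C^A$. Consequently $(x^P_j, a^A_i)$ is a fixed vector and $\langle \theta, (x^P_j, a^A_i)\rangle$ is a genuine linear functional of $\theta$, rather than one whose anchor point jumps with the sign of $\alpha_A - \alpha_P$.

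For the objective $\Omega^A$, I would argue term by term. The leading piece $\alpha_A r_A + \alpha_P r_P$ is linear, hence convex. Within each summand indexed by $(i,j) \in M$: the map $(\alpha_A,\alpha_P,\theta) \mapsto y^P_j \langle \theta, (x^P_j, a^A_i)\rangle - \alpha_P \kappa_P$ is affine (recall $y^P_j \in \{\pm 1\}$ and $(x^P_j, a^A_i)$ is fixed); since $f(t) = \log(1+\exp(t))$ is convex in $t$ (its second derivative $e^t/(1+e^t)^2$ is positive), the composition $f(y^P_j\langle\theta,(x^P_j,a^A_i)\rangle)$ is convex by the affine-precomposition rule, and $\max(y^P_j\langle\theta,(x^P_j,a^A_i)\rangle - \alpha_P\kappa_P,\, 0)$ is a pointwise maximum of two affine functions and hence convex. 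Finally $-\alpha_A\|x^A_i - x^P_j\|$ is linear in $\alpha_A$, since $\|x^A_i - x^P_j\|$ is a fixed nonnegative scalar. A nonnegative-weighted sum of convex functions is convex, so $\Omega^A$ is convex; $\Omega^P$ follows by the identical argument after swapping the roles of $(\alpha_A, x^A_i)$ and $(\alpha_P, x^P_j)$ dictated by $\hat{\alpha} = \alpha_P$, $\hat{x}_{i,j} = x^A_i$ on $C^P$.

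For the constraint set $C^A$, I would exhibit it as an intersection of three convex sets. The set $\{\|\theta_1\|_* \le \alpha_A + \alpha_P\}$ is the sublevel set $\{\|\theta_1\|_* - \alpha_A - \alpha_P \le 0\}$ of a convex function (a norm minus an affine function), hence convex; likewise $\{\|\theta_2\| \le \kappa_A \alpha_A\}$ is convex. The remaining constraint $\alpha_A < \alpha_P$ defines an open halfspace, which is convex. Since an intersection of convex sets is convex, $C^A$ is convex, and $C^P$ is convex by the same reasoning with the closed halfspace $\alpha_A \ge \alpha_P$.

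I do not anticipate a genuine obstacle: every remaining step is a routine application of convexity closure properties. The only point demanding care --- and the real substance behind the claim --- is the first paragraph's observation that the case split is exactly what linearizes $\hat{\alpha}$ and freezes $\hat{x}_{i,j}$. Without conditioning on the sign of $\alpha_A - \alpha_P$, the term $f(y^P_j \langle \theta, (\hat{x}_{i,j}, a^A_i)\rangle)$ would compose $f$ with a functional whose anchor vector switches between $x^A_i$ and $x^P_j$, and convexity in the joint variable $(\alpha_A, \alpha_P, \theta)$ would no longer be transparent.
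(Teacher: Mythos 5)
Your proposal is correct, and it is exactly the routine argument the paper intends: the paper states this claim without proof, and your term-by-term decomposition (affine precomposition for $f$, pointwise max of affine functions, linearity of $-\alpha_A\|x^A_i - x^P_j\|$, and intersections of norm-sublevel sets with a halfspace) supplies it. Your opening observation---that restricting to $\alpha_A < \alpha_P$ (resp.\ $\alpha_A \ge \alpha_P$) is precisely what freezes $\hat{\alpha}$ and $\hat{x}_{i,j}$, so that $\langle \theta, (\hat{x}_{i,j}, a^A_i)\rangle$ becomes genuinely affine---is exactly the structural point motivating the paper's split into the two subproblems over $C^A$ and $C^P$.
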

Suppose we write $(\alpha'_A, \alpha'_P, \theta') = \arg\min_{(\alpha_A,\alpha_P, \theta) \in C^A} \Omega^A(\alpha_A, \alpha_P, \theta)$ and $(\alpha{''}_A, \alpha{''}_P, \theta{''}) = \arg\min_{(\alpha_A,\alpha_P, \theta) \in C^P} \Omega^P(\alpha_A, \alpha_P, \theta)$.

\begin{claim}
The optimal solution to problem \eqref{eqn:final-opt-prob} is $(\alpha'_A, \alpha'_P, \theta')$ if $\Omega^A(\alpha'_A, \alpha'_P, \theta') \le \Omega^P(\alpha{''}_A, \alpha{''}_P, \theta{''})$ and $(\alpha{''}_A, \alpha{''}_P, \theta{''})$ otherwise.
\end{claim}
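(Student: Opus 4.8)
The plan is to exploit the fact that the only way the ordering of $\alpha_A$ and $\alpha_P$ enters the objective of \eqref{eqn:final-opt-prob} is through $\hat\alpha = \min(\alpha_A,\alpha_P)$ and through the selector $\hat x_{i,j}$, both of which are determined entirely by the sign of $\alpha_A - \alpha_P$. I would therefore split the feasible region of \eqref{eqn:final-opt-prob} along the hyperplane $\alpha_A = \alpha_P$ and verify that on each side the objective collapses to exactly one of $\Omega^A$ or $\Omega^P$, reducing the single nonsmooth program to the better of two smooth convex programs.

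Concretely, let $\Phi(\alpha_A,\alpha_P,\theta)$ denote the objective of \eqref{eqn:final-opt-prob} and let $F = \{(\alpha_A,\alpha_P,\theta): \|\theta_1\|_* \le \alpha_A + \alpha_P,\ \|\theta_2\|_* \le \kappa_A\alpha_A\}$ be its feasible set. First I would record the disjoint decomposition $F = C^A \sqcup C^P$, valid because $C^A$ carries the strict constraint $\alpha_A < \alpha_P$ while $C^P$ carries $\alpha_A \ge \alpha_P$. On $C^A$ we have $\hat\alpha = \alpha_A$ and $\hat x_{i,j} = x^P_j$ for every pair $(i,j)$, so substituting these into $\Phi$ term by term gives $\Phi|_{C^A} = \Omega^A|_{C^A}$; symmetrically, on $C^P$ we have $\hat\alpha = \alpha_P$ and $\hat x_{i,j} = x^A_i$, so $\Phi|_{C^P} = \Omega^P|_{C^P}$. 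Since minimizing over a disjoint union is the minimum of the two restricted minimizations, $\inf_F \Phi = \min(\inf_{C^A}\Omega^A,\ \inf_{C^P}\Omega^P)$. By the preceding claim each of these is a convex program attaining the stated minimizer, so the global optimum of \eqref{eqn:final-opt-prob} is realized at whichever of $(\alpha'_A,\alpha'_P,\theta')$ and $(\alpha''_A,\alpha''_P,\theta'')$ yields the smaller objective value, which is exactly the assertion of the claim.

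The step I expect to require the most care is the boundary $\alpha_A = \alpha_P$. Because $C^A$ uses a strict inequality it is not closed, so in principle $\arg\min_{C^A}\Omega^A$ need not be attained inside $C^A$; moreover the convention $\hat x_{i,j} = x^A_i$ at $\alpha_A = \alpha_P$ makes $\Phi$ generically discontinuous across this hyperplane (the $f$ and $\max$ terms approach those formed with $x^P_j$ from the $C^A$ side but use $x^A_i$ on the boundary). I would handle this by emphasizing that the boundary lies entirely in the closed set $C^P$, so $F = C^A \sqcup C^P$ is a genuine partition on which $\Phi$ is unambiguously defined and matched to $\Omega^A$ (resp. $\Omega^P$), keeping the value identity $\inf_F \Phi = \min(\inf_{C^A}\Omega^A, \inf_{C^P}\Omega^P)$ intact. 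To legitimize the reported argmins, I would either invoke that each subproblem attains its minimum (as the $\arg\min$ notation of the claim presupposes), or pass from $C^A$ to its closure $\{\alpha_A \le \alpha_P\}$ using continuity of $\Omega^A$ to recover an attained minimizer without changing the infimum; one then checks that comparing the two attained optimal values correctly selects the global optimizer, completing the argument.
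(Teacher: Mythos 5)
Your proof is correct and follows exactly the reasoning the paper intends: the paper states this claim without any proof, taking for granted the disjoint decomposition of the feasible set of \eqref{eqn:final-opt-prob} into $C^A$ (where $\hat{\alpha}=\alpha_A$, $\hat{x}_{i,j}=x^P_j$, so the objective coincides with $\Omega^A$) and $C^P$ (where it coincides with $\Omega^P$), so that the global minimum is the better of the two restricted minima. Your extra care at the boundary $\alpha_A = \alpha_P$ --- where $C^A$ fails to be closed and the objective switches discontinuously to $\Omega^P$, so attainment of $\arg\min_{C^A}\Omega^A$ is not automatic --- addresses a point the paper's $\arg\min$ notation silently presupposes, and is a sound refinement rather than a departure.
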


Typical regularized models either constrain the norm of the parameter $\theta$ to be directly bounded by some constants specified initially or include the norm as part of the objective multiplied by some multiplicative penalty constant. However, our optimization problem is a hybrid of both as (1) the norms of the parameter $\theta$ are to be bounded by $\alpha_A$ and $\alpha_P$ but (2) $(\alpha_A,\alpha_P)$ are part of the optimization variables that are multiplied by some penalty constants $r_A$ and $r_P$ in the objective function. 

Nevertheless, the constraint set is convex so Euclidean projection can be solved via any convex solver, and in the case of $p=2$, we have exactly characterized a closed form solution of the output of the projection in Appendix \ref{app:projected-gradient-descent}. Therefore, in order to solve \eqref{eqn:final-opt-prob}, we can employ projected gradient descent (PGD): $(\alpha_A^{t+1}, \alpha_P^{t+1}, \theta^{t+1}) = \text{Project}_{C}\left((\alpha_A^{t}, \alpha_P^{t}, \theta^{t}) - \eta \nabla \Omega(\alpha_A^{t}, \alpha_P^{t}, \theta^{t})\right)$.
It is well known that the rate of convergence for PGD is $O(\frac{1}{\sqrt{T}})$ with appropriately chosen step size $\eta$. We present the overall algorithm to solve problem \eqref{eqn:final-opt-prob} in Algorithm \ref{alg:drdj-alg} which can be found in Appendix \ref{app:projected-gradient-descent}. Write $\Omega(\alpha_A, \alpha_P, \theta) = \Omega^A(\alpha_A, \alpha_P, \theta)$ if $\alpha_A < \alpha_P$ and $\Omega^P(\alpha_A, \alpha_P, \theta)$ otherwise to denote the objective solution to problem \eqref{eqn:final-opt-prob}. Then, the optimal value $(\alpha^*_A, \alpha^*_P, \theta^*)$ of problem \eqref{eqn:final-opt-prob} is $(\alpha^*_A, \alpha^*_P, \theta^*) = \arg\min_{(\alpha_A, \alpha_P, \theta) \in C^A \cup C^P} \Omega(\alpha_A, \alpha_P, \theta)$.
\begin{restatable}{theorem}{drjoinalgconvergence}
 With appropriately chosen step size $\eta$, Algorithm \ref{alg:drdj-alg} returns $(\alpha_A, \alpha_P, \theta)$ such that $\Omega(\alpha_A, \alpha_P, \theta) \le \Omega(\alpha^*_A, \alpha^*_P, \theta^*) + O\left(\frac{1}{\sqrt{T}}\right)$.
\end{restatable}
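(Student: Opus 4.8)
The plan is to treat Algorithm~\ref{alg:drdj-alg} as two \emph{independent} runs of projected subgradient descent — one minimizing $\Omega^A$ over $C^A$ and one minimizing $\Omega^P$ over $C^P$ — to apply the textbook $O(1/\sqrt{T})$ guarantee to each run separately, and then to glue the two bounds together using the selection rule in the final step of the algorithm together with the preceding claim that $\Omega(\alpha^*_A,\alpha^*_P,\theta^*)=\min\bigl(\Omega^A(\alpha'_A,\alpha'_P,\theta'),\Omega^P(\alpha''_A,\alpha''_P,\theta'')\bigr)$.

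First I would invoke the two claims above: $\Omega^A$ (resp.\ $\Omega^P$) is convex and $C^A$ (resp.\ $C^P$) is a convex set, so each run is a bona fide projected subgradient method on a convex program. Because the objective contains the term $\max(\cdots,0)$ it is nonsmooth, so the relevant guarantee is the projected \emph{subgradient} bound rather than a smooth-case bound; this is exactly the regime giving the $O(1/\sqrt{T})$ rate. Next I would check the two hypotheses controlling the hidden constant: (i) the subgradients are uniformly bounded, and (ii) the effective domain is bounded. For (i), the $\theta$-subgradient is a convex combination of vectors of the form $f'(\cdot)\,(\hat x_{i,j},a^A_i)$ plus a subgradient of the max term, with $f'\in[0,1]$ and with feature vectors that are bounded since $\cX$ and $\cA$ are assumed compact; the $(\alpha_A,\alpha_P)$-subgradient is $r_A,r_P$ plus bounded terms. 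For (ii), since the objective grows linearly in $\alpha_A,\alpha_P$ with positive coefficients $r_A,r_P$, the minimizer lies in a bounded region, so the distance from the initialization to the optimum is finite. The standard bound then yields, with step size $\eta=\Theta(1/\sqrt{T})$,
\begin{align*}
\Omega^A(\overline{\alpha_A},\overline{\alpha_P},\overline{\theta}) &\le \Omega^A(\alpha'_A,\alpha'_P,\theta') + O(1/\sqrt{T}),\\
\Omega^P(\overline{\alpha_A}',\overline{\alpha_P}',\overline{\theta}') &\le \Omega^P(\alpha''_A,\alpha''_P,\theta'') + O(1/\sqrt{T}),
\end{align*}
where convexity of $\Omega^A,\Omega^P$ is what lets us pass from the averaged \emph{iterate} to the averaged function value via Jensen's inequality.

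Then I would glue the two inequalities. Observe that every point of $C^A$ satisfies $\alpha_A<\alpha_P$, hence $\Omega=\Omega^A$ there, and likewise $\Omega=\Omega^P$ on $C^P$. Since projection keeps every iterate feasible and a convex set is closed under averaging, the two returned averages lie in $C^A$ and $C^P$ respectively, so $\Omega$ evaluated at them equals $\Omega^A$ and $\Omega^P$ as above. The algorithm returns whichever average has the smaller objective value, so its value is $\min\bigl(\Omega^A(\overline{\alpha_A},\overline{\alpha_P},\overline{\theta}),\ \Omega^P(\overline{\alpha_A}',\overline{\alpha_P}',\overline{\theta}')\bigr)$. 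Using the elementary fact that $a\le c+\epsilon$ and $b\le d+\epsilon$ imply $\min(a,b)\le\min(c,d)+\epsilon$, the two displayed bounds give
\[
\min\bigl(\Omega^A(\overline{\alpha_A},\overline{\alpha_P},\overline{\theta}),\Omega^P(\overline{\alpha_A}',\overline{\alpha_P}',\overline{\theta}')\bigr)\le \min\bigl(\Omega^A(\alpha'_A,\alpha'_P,\theta'),\Omega^P(\alpha''_A,\alpha''_P,\theta'')\bigr)+O(1/\sqrt{T}),
\]
and the right-hand minimum equals $\Omega(\alpha^*_A,\alpha^*_P,\theta^*)$ by the preceding claim, which finishes the argument.

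The main obstacle I expect is the bookkeeping behind the hidden constant: genuinely pinning down the Lipschitz constant and the domain radius so that the $O(1/\sqrt{T})$ term is finite and explicit. This is precisely where compactness of $\cX$ and $\cA$ (assumed for Theorem~\ref{thm:our-duality}) and the coercivity of the objective in $(\alpha_A,\alpha_P)$ are essential. A secondary technical point is that $C^A$ is defined by the \emph{strict} inequality $\alpha_A<\alpha_P$ and is therefore not closed; I would replace it by its closure $\{\alpha_A\le\alpha_P\}$, which is harmless because on the seam $\alpha_A=\alpha_P$ one has $\hat\alpha=\alpha_A=\alpha_P$ and $\hat x_{i,j}$ coincides, so $\Omega^A$ and $\Omega^P$ agree there and the piecewise objective $\Omega$ is well defined and continuous across the boundary.
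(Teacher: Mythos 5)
Your proposal is correct and takes essentially the same route as the paper's own proof, which likewise applies the standard $O(1/\sqrt{T})$ projected-gradient-descent guarantee to the two runs separately, uses that $\Omega$ coincides with $\Omega^A$ on $C^A$ and with $\Omega^P$ on $C^P$, and concludes via the final selection rule together with the claim that the global optimum is the smaller of the two restricted optima. Your extra care about nonsmoothness (subgradient rather than gradient bounds), boundedness of subgradients and of the effective domain, and the non-closedness of $C^A$ (replacing the strict inequality by its closure, with $\Omega^A=\Omega^P$ on the seam) fills in technical details the paper leaves implicit, but it is the same argument.
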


\begin{algorithm}[H]
   \caption{Distributionally Robust Data Join}
   \label{alg:drdj-alg}
\begin{algorithmic}[1]
   \STATE {\bfseries Input:} $S_A$, $S_P$, $r_A$, $r_P$, $\kappa_A$, $\kappa_P$, $k$, $T$
   \STATE Run $k$-nearest neighbors on $S_A$ and $S_P$ to calculate the matching pairs $M$
   \STATE choose arbitrary $\theta, \alpha_A, \alpha_P$
   \STATE Set $\theta^1_A = \theta, \alpha^1_A = \alpha_A, \alpha^1_P = \alpha_P$
   \STATE Set ${\theta'}^1_A = \theta, {\alpha'}^1_A = \alpha_A, \alpha^1_P$
   \FOR{$i=1$ {\bfseries to} $T$}
    \STATE $( \alpha_A^{t+1}, \alpha_P^{t+1}, \theta^{t+1}) = \text{Project}_{C^A}\left((\alpha_A^{t}, \alpha_P^{t}, \theta^{t}) - \eta \nabla \Omega^A(\alpha_A^{t}, \alpha_P^{t}, \theta^{t})\right)$
    \STATE $({\alpha'}_A^{t+1}, {\alpha'}_P^{t+1}, {\theta'}^{t+1}) = \text{Project}_{C^P}\left(({\alpha'}_A^{t}, {\alpha'}_P^{t}, {\theta'}^{t}) - \eta \nabla \Omega^P({\alpha'}_A^{t}, {\alpha'}_P^{t}, {\theta'}^{t})\right)$
   \ENDFOR
   \STATE $\overline{\alpha_A} = \frac{1}{T} \sum_{t=1}^T \alpha_A^t, \overline{\alpha_P} = \frac{1}{T} \sum_{t=1}^T \alpha_P^t, \overline{\theta} = \frac{1}{T} \sum_{t=1}^T \theta^t$
   \STATE $\overline{\alpha_A}' = \frac{1}{T} \sum_{t=1}^T {\alpha'}_A^t, \overline{\alpha_P}' = \frac{1}{T} \sum_{t=1}^T {\alpha'}_P^t, \overline{\theta}' = \frac{1}{T} \sum_{t=1}^T {\theta'}^t$
   \IF{$\Omega^A(\overline{\alpha_A}, \overline{\alpha_P}, \overline{\theta}) < \Omega^P(\overline{\alpha_A}', \overline{\alpha_P}', \overline{\theta}')$}
    \STATE Return $(\overline{\alpha_A}, \overline{\alpha_P}, \overline{\theta})$
    \ELSE
    \STATE Return $(\overline{\alpha_A}', \overline{\alpha_P}', \overline{\theta}')$
   \ENDIF
\end{algorithmic}
\end{algorithm}


\drjoinalgconvergence*
\begin{proof}
    Note that due to convergence rate of projected gradient descent, we have 
    \begin{align*}
    \Omega^A(\overline{\alpha_A}, \overline{\alpha_P}, \overline{\theta}) &\le \Omega^A(\alpha'_A, \alpha'_P, \theta') + O\left(\frac{1}{\sqrt{T}}\right) \\
    \Omega^P(\overline{\alpha_A}', \overline{\alpha_P}', \overline{\theta}') &\le \Omega^P(\alpha{''}_A, \alpha{''}_P, \theta{''}) + O\left(\frac{1}{\sqrt{T}}\right)
\end{align*}
Also, we have
\begin{align*}
    \Omega^A(\overline{\alpha_A}, \overline{\alpha_P}, \overline{\theta}) &= \Omega(\overline{\alpha_A}, \overline{\alpha_P}, \overline{\theta}) \\ 
    \Omega^A(\alpha'_A, \alpha'_P, \theta') &= \Omega(\alpha'_A, \alpha'_P, \theta') \\
    \Omega^P(\overline{\alpha_A}', \overline{\alpha_P}', \overline{\theta}') &= \Omega(\overline{\alpha_A}', \overline{\alpha_P}', \overline{\theta}') \\
    \Omega^P(\alpha{''}_A, \alpha{''}_P, \theta{''}) &= \Omega(\alpha{''}_A, \alpha{''}_P, \theta{''})
\end{align*}

Therefore, we must have 
\begin{align*}
    \Omega(\alpha_A, \alpha_P, \theta) \le \Omega(\alpha^*_A, \alpha^*_P, \theta^*) + O\left(\frac{1}{\sqrt{T}}\right)
\end{align*}
\end{proof}

Here we try to give a characterization of the projection when $p=2$. It is not immediate clear how to perform a projection onto $C$: given $\theta, \alpha_A, \alpha_P$, we need to find
\[
    \arg\min_{\theta', \alpha'_A, \alpha_P' \in C_1} ||(\theta, \alpha_A, \alpha_P) -  (\theta', \alpha'_A, \alpha'_P)||^2_2 =\arg\min_{\theta', \alpha'_A, \alpha_P' \in C_1} || \theta - \theta'||^2_2 + |\alpha_A - \alpha'_A|^2 + |\alpha_P - \alpha'_P|^2.
\]

Suppose we are given $(\theta, \alpha_A, \alpha_P)$ such that $||\theta_1||_2 > \alpha_A + \alpha_P$ and/or $||\theta_2||_2 > \kappa_A\alpha_A$. The Lagrangian for the above optimization problem we are interested in is the following:
\begin{align*}
    &\Lagr(\theta'_1, \theta'_2, \alpha'_A, \alpha'_P) \\
    &= \frac{1}{2} \sum_i (\theta'_1[i] - \theta_1[i])^2 + \frac{1}{2} \sum_i (\theta'_2[i] - \theta_2[i])^2 + \frac{1}{2} (\alpha'_A - \alpha_A)^2 + \frac{1}{2} (\alpha'_P - \alpha_P)^2\\
    &+ \lambda_1 ((\sum_i (\theta'_1[i])^2)^{1/2} - \alpha'_A - \alpha'_P)
    + \lambda_2 ((\sum_i (\theta'_2[i])^2)^{1/2} - \kappa_A \alpha'_A ) 
    + \lambda_3 (\alpha'_A - \alpha'_P).
\end{align*}

The stationary part of the KKT condition requires that the gradient with respect to $\theta'_1, \theta'_2, \alpha'_A$ and $\alpha'_P$ is 0. In other words, we have
\begin{align*}
    \nabla_{\theta'_1[i]} \Lagr &=  (\theta'_1[i] - \theta_1[i]) + \frac{\lambda_1}{2} \frac{2\theta'_1[i]}{(\sum_i (\theta'_1[i])^2)^{1/2}} = (\theta'_1[i] - \theta_1[i]) +  \frac{\lambda_1\theta'_1[i]}{||\theta'_1||_2} = 0 \\
    \nabla_{\theta'_2[i]} \Lagr &=  (\theta'_2[i] - \theta_2[i]) + \frac{\lambda_1}{2} \frac{2\theta'_2[i]}{(\sum_i (\theta'_2[i])^2)^{1/2}} = (\theta'_2[i] - \theta_2[i]) +  \frac{\lambda_2\theta'_1[i]}{||\theta'_2||_2} = 0 \\
    \nabla_{\alpha'_A} \Lagr &= \alpha'_A - \alpha_A -\lambda_1 -\lambda_2\kappa_A  +\lambda_3= 0 \\
    \nabla_{\alpha'_P} \Lagr &= \alpha'_P - \alpha_P -\lambda_1 - \lambda_3= 0 \\
\end{align*}

With some arranging, we get
\begin{align*}
    &\theta'_1  +  \frac{\lambda_1\theta'_1}{||\theta'_1||} = \theta_1 \\
    \implies &||\theta'_1|| \bar{\theta'_1} + \lambda_1 \bar{\theta'_1} = \theta_1 \\
    \implies &\bar{\theta'_1} = \frac{\theta_1}{||\theta'_1||  + \lambda_1}\\
    \implies &\theta'_1 = \frac{||\theta'_1|| \theta_1}{||\theta'_1||  + \lambda_1} \\
    \implies &||\theta'_1|| = \norm{\frac{||\theta'_1|| \theta_1}{||\theta'_1||  + \lambda_1}}\\
    \implies &||\theta'_1|| = \frac{||\theta'_1||}{||\theta'_1||  + \lambda_1} ||\theta_1||\\
    \implies &||\theta'_1||  + \lambda_1  = ||\theta_1||
\end{align*}

Similarly, we have
\begin{align*}
    &\theta'_2  +  \frac{\lambda_2\theta'_2}{||\theta'_2||} = \theta_2 \\
    \implies &||\theta'_2|| \bar{\theta'_2} + \lambda_2 \bar{\theta'_2} = \theta_1 \\
    \implies &\bar{\theta'_2} = \frac{\theta_2}{||\theta'_2||  + \lambda_2}\\
    \implies &\theta'_2 = \frac{||\theta'_2|| \theta_2}{||\theta'_2||  + \lambda_2} \\
    \implies & ||\theta'_2|| = ||\frac{||\theta'_2|| \theta_2}{||\theta'_2||  + \lambda_2}|| \\
    \implies & ||\theta'_2|| = \frac{||\theta'_2|| }{||\theta'_2||  + \lambda_2} ||\theta_2|| \\
    \implies & ||\theta'_2||  + \lambda_2 = ||\theta_2||.
\end{align*}

Note that $\theta'_1$ is simply a rescaling of $\theta_1$:
\[
    \theta'_1 = \frac{||\theta_1|| - \lambda_1}{||\theta_1||} \theta_1.
\]
The complementary slack conditions require that
\[
    \lambda_1 (||\theta'_1|| - \alpha'_A - \alpha'_P) = 0.
\]
In other words, either $\theta'_1 = \theta_1$ or $||\theta'_1|| = \alpha'_A + \alpha'_P$. 
The same argument applies for $\theta'_2$: either $\theta'_2 = \theta_2$ or $||\theta'_2|| = \kappa_A \alpha_A$. Now, we consider all four cases, and for each of those cases, we repeatedly consider the case where $\lambda_3 = 0$ and $\lambda_3 > 0$ (i.e. $\alpha'_A - \alpha'_P = 0$ from the complementary slack condition).

\paragraph{Case $\theta'_1 = \theta_1$ and $\theta'_2 = \theta_2$:} 
In this case, we need only concern ourselves with how to set $\alpha'_A$ and $\alpha'_P$. Because we have $\lambda_1, \lambda_2 = 0$, 
\begin{align*}
    \alpha'_A - \alpha_A +  \lambda_3 &= 0\\
    \lambda_3 &= \alpha'_P - \alpha_P.
\end{align*}

The complementary slackness condition requires
$\lambda_3 (\alpha'_A - \alpha'_P) = 0$. In other words, when $\lambda_3 = 0$, we have $(\alpha'_A, \alpha'_P) = (\alpha_A, \alpha_P)$. In other case where $\alpha'_A = \alpha'_P$, we have $(\alpha'_A, \alpha'_P) = (\frac{\alpha_A + \alpha_P}{2}, \frac{\alpha_A + \alpha_P}{2})$.

\paragraph{Case $\theta'_1 = \theta_1$ and
$||\theta'_2|| = \kappa_A \alpha'_A$:} We have $\lambda_1 = 0$ and 
\[
    \lambda_2 = ||\theta_2|| - ||\theta'_2|| = ||\theta_2|| - \kappa_A \alpha'_A
\]

Plugging in $\lambda_1 = 0$, we have
\begin{align*}
    \alpha'_A - \alpha_A  -\lambda_2\kappa_A  +\lambda_3= 0 \\
    \alpha'_P - \alpha_P - \lambda_3= 0.
\end{align*}

Substituting in $\lambda_2$ value, we get
\begin{align*}
    &\alpha'_A - \alpha_A  - \kappa_A(||\theta_2|| - \kappa_A \alpha'_A)  +\lambda_3= 0 \\
    \implies& \alpha'_A (1 + \kappa^2_A) = \alpha_A + \kappa_A ||\theta_2|| - \lambda_3 \\
    &\implies \alpha'_A  = \frac{\alpha_A + \kappa_A ||\theta_2|| - \lambda_3}{1 + \kappa^2_A}
\end{align*}

If $\lambda_3 = 0$, we have 
\begin{align*}
    \alpha'_A &=  \frac{\alpha_A + \kappa_A ||\theta_2||}{1 + \kappa^2_A}\\
    \alpha'_P &=  \alpha_P.
\end{align*}

If $\lambda_3 \neq 0$ and hence $\alpha'_A = \alpha'_P$, then we have
\begin{align*}
    &\alpha'_A (1 + \kappa^2_A) = \alpha_A + \kappa_A ||\theta_2|| - (\alpha'_A -\alpha_P) \\
    \implies &\alpha'_P = \alpha'_A = \frac{\alpha_A + \alpha_P +  \kappa_A ||\theta_2||}{2 + \kappa^2_A}
\end{align*}

\paragraph{Case $||\theta'_1|| = \alpha'_A + \alpha'_P$ and $\theta'_2 = \theta_2$:}
We have that $\lambda_2 = 0$ and $\lambda_1 > 0$ and also
\begin{align*}
    \lambda_1  = ||\theta_1|| - ||\theta'_1|| = ||\theta_1|| - (\alpha'_A + \alpha'_P).
\end{align*}

Plugging in $\lambda_2 = 0$, we have
\begin{align*}
    &\alpha'_A - \alpha_A -\lambda_1 +\lambda_3= 0 \\
    &\alpha'_P - \alpha_P -\lambda_1 - \lambda_3= 0 \\
\end{align*}

If $\lambda_3 = 0$, then 
\begin{align*}
    &\alpha'_A - \alpha_A -\lambda_1 = 0 \quad\text{and}\quad\alpha'_P - \alpha_P -\lambda_1 = 0 \\
    \implies & \lambda_1 = \alpha'_A - \alpha_A = \alpha'_P - \alpha_P
\end{align*}

Substituting $\alpha'_A = \alpha_A + \alpha'_P - \alpha_P$ into $\alpha'_P - \alpha_P = \lambda_1  = ||\theta_1|| - (\alpha'_A + \alpha'_P)$, we get
\begin{align*}
    &\alpha'_P - \alpha_P = ||\theta_1|| - (\alpha_A + 2\alpha'_P - \alpha_P) \\
    \implies& \alpha'_P - \alpha_P = ||\theta_1|| - \alpha_A - 2\alpha'_P + \alpha_P \\
    \implies& 3\alpha'_P  = ||\theta_1|| - \alpha_A  + 2\alpha_P \\
    \implies& \alpha'_P  = \frac{||\theta_1|| - \alpha_A  + 2\alpha_P }{3}.
\end{align*}
$\alpha'_A$ is then calculated as 
\begin{align*}
    \alpha'_A = \alpha_A + \frac{||\theta_1|| - \alpha_A  + 2\alpha_P }{3} - \alpha_P.
\end{align*}

If $\lambda_3 \neq 0$ and hence $\alpha'_A = \alpha'_P$, then
\begin{align*}
    ||\theta_1|| - 2\alpha'_A = \alpha'_A - \alpha_A + \lambda_3 = \alpha'_A - \alpha_P  - \lambda_3 = \lambda_1
\end{align*}

From the first equation, we get
\[
\lambda_3 = ||\theta_1|| - 2\alpha'_A - (\alpha'_A - \alpha_A) = ||\theta_1|| - 3\alpha'_A + \alpha_A.
\] 

Plugging in this value for $\lambda_3$ into the second equation, we get
\begin{align*}
    &||\theta_1|| - 2\alpha'_A = \alpha'_A - \alpha_P - (||\theta_1|| - 3\alpha'_A + \alpha_A) \\
    \implies & - 2\alpha'_A = 4\alpha'_A - \alpha_P - \alpha_A - 2 ||\theta_1|| \\
    \implies & \frac{\alpha_A + \alpha_P + 2 ||\theta_1||}{6} = \alpha'_A.
\end{align*}

\paragraph{Case $||\theta'_1|| = \alpha'_A + \alpha'_P$ and $||\theta'_2|| = \kappa_A \alpha'_A$:}

\begin{align*}
    \lambda_1 & = ||\theta_1|| - ||\theta'_1|| = ||\theta_1|| - (\alpha'_A + \alpha'_P) \\
    \lambda_2 &= ||\theta_2|| - ||\theta'_2|| = ||\theta_2|| - \kappa_A \alpha'_A
\end{align*}

Putting these equations altogether with variables $\alpha'_A, \alpha'_P, \lambda_1, \lambda_2, \lambda_3$, we have
\begin{align*}
    ||\theta_1|| &= \alpha'_A + \alpha'_P  + \lambda_1 \\
    ||\theta_2|| &= \kappa_A \alpha'_A + \lambda_2\\
    \alpha'_A - \alpha_A  -\lambda_1 -\lambda_2\kappa_A  +\lambda_3 &= 0 \\
    \alpha'_P - \alpha_P - \lambda_1 - \lambda_3 &= 0
\end{align*}

We'll use the first equation to substitute in $\lambda_1 = ||\theta_1|| -\alpha'_A - \alpha'_P$ to get
\begin{align*}
    ||\theta_2|| &= \kappa_A \alpha'_A + \lambda_2\\
    2\alpha'_A - \alpha_A  - ||\theta_1||  + \alpha'_P -\lambda_2\kappa_A  +\lambda_3 &= 0 \\
    2\alpha'_P - \alpha_P - ||\theta_1|| +\alpha'_A - \lambda_3 &= 0
\end{align*}

Similarly, use the last equation to substitute in $\lambda_3 = 2\alpha'_P - \alpha_P - ||\theta_1|| +\alpha'_A$. 
\begin{align*}
    ||\theta_2|| &= \kappa_A \alpha'_A + \lambda_2\\
    3\alpha'_A + 3\alpha'_P - \alpha_A  - \alpha_P - 2||\theta_1||  -\lambda_2\kappa_A  &= 0 
\end{align*}

Finally, plug in $\lambda_2 = ||\theta_2|| - \kappa_A \alpha'_A$.
\begin{align*}
    &3\alpha'_A + 3\alpha'_P - \alpha_A  - \alpha_P - 2||\theta_1|| - \kappa_A(||\theta_2|| - \kappa_A \alpha'_A) = 0 \\
    \implies &(3 + \kappa^2_A)\alpha'_A + 3\alpha'_P - \alpha_A  - \alpha_P - 2||\theta_1|| - \kappa_A||\theta_2|| = 0 \\
\end{align*}

As before, when $\lambda_3 = 0$, we get 
\begin{align*}
    &2\alpha'_P - \alpha_P - ||\theta_1|| +\alpha'_A = 0\\
    \implies & \alpha'_P = \frac{||\theta_1|| + \alpha_P - \alpha'_A}{2}.
\end{align*}

Then, we get 
\begin{align*}
    &(3 + \kappa^2_A)\alpha'_A + 3\left(\frac{||\theta_1|| + \alpha_P - \alpha'_A}{2}\right) - \alpha_A  - \alpha_P - 2||\theta_1|| - \kappa_A||\theta_2|| = 0\\
    \implies& \left(\frac{3}{2} + \kappa^2_A\right)\alpha'_A = -3\left(\frac{||\theta_1|| + \alpha_P}{2}\right) + \alpha_A  + \alpha_P + 2||\theta_1|| + \kappa_A||\theta_2||\\
    \implies& \alpha'_A = \frac{-3\left(\frac{||\theta_1|| + \alpha_P}{2}\right) + \alpha_A  + \alpha_P + 2||\theta_1|| + \kappa_A||\theta_2||}{\frac{3}{2} + \kappa^2_A}\\
    \implies& \alpha'_A = \frac{-3\left(\frac{||\theta_1|| + \alpha_P}{2}\right) + \alpha_A  + \alpha_P + 2||\theta_1|| + \kappa_A||\theta_2||}{\frac{3}{2} + \kappa^2_A}\\
    \implies& \alpha'_A = \frac{2\alpha_A + \alpha_P + ||\theta_1|| + 2 \kappa_A ||\theta_2||}{3+2\kappa^2_A}.
\end{align*}
Consequently, we have
\begin{align*}
    \alpha'_P = \frac{||\theta_1|| + \alpha_P}{2} -  \left(\frac{2\alpha_A + \alpha_P + ||\theta_1|| + 2 \kappa_A ||\theta_2||}{6+4\kappa^2_A}\right).
\end{align*}

Otherwise, when $\lambda_3 > 0$, we have $\alpha'_A = \alpha'_P$. In this case, we get
\begin{align*}
    &(6 + \kappa^2_A)\alpha'_A = \alpha_A  + \alpha_P + 2||\theta_1|| + \kappa_A||\theta_2|| \\
    \implies& \alpha'_A = \alpha'_P = \frac{\alpha_A  + \alpha_P + 2||\theta_1|| + \kappa_A||\theta_2||}{6 + \kappa^2_A}.
\end{align*}

We summarize the results in the following tables:
\begin{table}[ht]
\begin{tabular}{|l|l|l|}
\hline
Cases & $\lambda_3 = 0$ \\ \hline
$(\theta'_1, \theta'_2) = (\theta_1, \theta_2)$ & $(\alpha'_A, \alpha'_P) = (\alpha_A, \alpha_P)$  \\ \hline
$(\theta'_1, \theta'_2) = (\theta_1, \kappa_A \alpha'_A \overline{\theta}_2)$ & $(\alpha'_A, \alpha'_P)= (\frac{\alpha_A + \kappa_A ||\theta_2||}{1 + \kappa^2_A}, \alpha_P)$  \\ \hline
$(\theta'_1, \theta'_2) = ((\alpha'_A + \alpha'_P)\overline{\theta}_1, \theta_2)$  & $(\alpha'_A, \alpha'_P) = (\alpha_A + \alpha'_P - \alpha_P, \frac{||\theta_1|| - \alpha_A  + 2\alpha_P }{3}) $ \\ \hline
$(\theta'_1, \theta'_2) = ((\alpha'_A + \alpha'_P)\overline{\theta}_1, \kappa_A \alpha'_A \overline{\theta}_2)$ & $(\alpha'_A, \alpha'_P) = \left(\frac{2\alpha_A + \alpha_P + ||\theta_1|| + 2 \kappa_A ||\theta_2||}{3+2\kappa^2_A}, \frac{||\theta_1|| + \alpha_P}{2} -  \left(\frac{2\alpha_A + \alpha_P + ||\theta_1|| + 2 \kappa_A ||\theta_2||}{6+4\kappa^2_A}\right) \right)$   \\ \hline
\end{tabular}
\end{table}

\begin{table}[ht]
\begin{tabular}{|l|l|}
\hline
Cases &  $\lambda_3 > 0$\\ \hline
$(\theta'_1, \theta'_2) = (\theta_1, \theta_2)$ & $\alpha'_A =\alpha'_P = \frac{\alpha_A + \alpha_P}{2}$  \\ \hline
$(\theta'_1, \theta'_2) = (\theta_1, \kappa_A \alpha'_A \overline{\theta}_2)$   & $\alpha'_A =\alpha'_P=\frac{\alpha_A + \alpha_P +  \kappa_A ||\theta_2||}{2 + \kappa^2_A}$ \\ \hline
$(\theta'_1, \theta'_2) = ((\alpha'_A + \alpha'_P)\overline{\theta}_1, \theta_2)$  & $\alpha'_A = \alpha'_P = \frac{\alpha_A + \alpha_P + 2 ||\theta_1||}{6}$  \\ \hline
$(\theta'_1, \theta'_2) = ((\alpha'_A + \alpha'_P)\overline{\theta}_1, \kappa_A \alpha'_A \overline{\theta}_2)$ & $\alpha'_A = \alpha'_P = \frac{\alpha_A  + \alpha_P + 2||\theta_1|| + \kappa_A||\theta_2||}{6 + \kappa^2_A}$ \\ \hline
\end{tabular}
\end{table}

\section{Application: Fairness}
\label{app:fairness}
In many situations, the actual demographic group information may not be available in the original labeled dataset, but another auxiliary unlabeled dataset may contain the needed demographic group information. We can leverage our data join method in order to incorporate this auxiliary dataset to penalize the model for model's unfairness. Suppose $\cA$ represents two different groups that an individual can belong to --- $\cA = \{0, 1\}$.

Given $\theta$, we define its unfairness with respect to distribution $\cP$ over $\cX, \cA, \cY$ as 
$\cU(\theta, \cP)= \Bigg|\Pr_{(x,a,y) \sim \cP}\left[u(h_{\theta}(x))| a=1, y=1\right] - \Pr_{(x,a,y) \sim \cP}\left[u(h_{\theta}(x))| a=0, y=1\right]\Bigg|$
where $u(t) = \log(t)$ and $h_{\theta}(x) = \frac{1}{1 + \exp(-\langle \theta, x \rangle)}$ as in \cite{drofair}. This term is similar to the difference in true positive rates as in the case of equal opportunity, but it differs in that it looks at the log-probability --- this fairness criterion is referred to as log-probabilistic equalized opportunity in \cite{drofair}.

Also, as in \cite{drofair}, we suppose we know the underlying positive rates for each group and constrain the joint distribution's marginal distribution over $\cA$ and $\cY$ in the following manner: given some $p_0, p_1 \in (0,1)$, we define $W_{(p_0, p_1)}(S_A, S_P, r_A, r_P) = \{\cQ \in W(S_A, S_P, r_A, r_P): \Pr_{(x,a,y) \sim \cQ}[a=0, y=1] = p_0, \Pr_{(x,a,y) \sim \cQ}[a=0, y=1] = p_1\}.$ 

Then, the problem we are interested in is
$\min_{\theta \in \Theta} \sup_{\cQ \in W_{(p_0, p_1)}(S_A, S_P, r_A, r_P)} \\
    \E_{(x,a,y) \sim Q}[\ell(\theta, (x,a,y))] + \eta\cU(\theta, \cQ)
\inlineeqnum\label{eqn:disjoint-learning-fair}$
where we are adding a fairness regularization term multiplied by some constant $\eta$. In Appendix \ref{app:fairness}, just as in Section \eqref{sec:tractable-optimization}, we show how to write down the optimization by making the coupling explicit (problem \eqref{eqn:disjoint-learning-transport-fair}), derive the dual (problem \eqref{eqn:transport-dual-fair}), and how to approximate the supremum term that appears in the dual problem (Lemma \ref{lem:conjugate-infimal-repr-fair}) to derive our final fair distributionally robust data join problem (problem \eqref{eqn:fairness-prob-final} which is basically a fairness regularized version of \eqref{eqn:final-opt-prob}) with a similar approximation guarantee (Theorem \eqref{thm:gap-sup-fairness}).

We remark that solving for $\sup_{\cQ \in W_{(p_0, p_1)}(S_A, S_P, r_A, r_P)} \cU(\theta, \cQ)$ for some fixed $\theta$, which can be indeed solved with minimal modifications, corresponds to estimating the worst case unfairness of $\theta$ over all distributions $\cQ \in W_{(p_0, p_1)}(S_A, S_P, r_A, r_P)$. \cite{kallus2021assessing} consider a special case where $r_A, r_P = 0, 0$, but they can handle various fairness measures.

\subsection{Analysis}
Following the same argument as in Section \ref{subsec:coupling-formulation}, consider the following problem with some fixed $p_0, p_1 \in (0,1)$ and $|\eta| < \min(p_0, p_1)$.
\begin{align}
\begin{split}
     \sup_{\pi^{a,y}_{i,j}}  &\sum_{i=1}^{n_A} \sum_{j=1}^{n_P} \sum_{a \in A} \sum_{y \in \cY}\int \left(\ell(\theta, (x, a, y)) + \eta \left(u(h_{\theta}((x,a)))\frac{\ind[a=1, y=1]}{p_1} - u(h_{\theta}((x,a)))\frac{\ind[a=0, y=1] }{p_0} \right) \right) \pi^{a,y}_{i,j}(dx) \\
     \text{s.t.} \quad& \sum_{i=1}^{n_A}\sum_{j=1}^{n_P} \sum_{a \in A} \sum_{y \in \cY}\int d_A^{i}(x,a)  \pi^{a,y}_{i,j}(dx) \le r_a\\
    \quad& \sum_{i=1}^{n_A}\sum_{j=1}^{n_P} \sum_{a \in A} \sum_{y \in \cY}\int d_P^{j}(x,y)  \pi^{a,y}_{i,j}(dx) \le r_P \\
    \quad& \sum_{j=1}^{n_P}\sum_{a \in A} \sum_{y \in \cY} \int \pi^{a,y}_{i,j}(dx) = \frac{1}{n_A} \quad \forall i \in [n_A]\\
    \quad& \sum_{i=1}^{n_A}\sum_{a \in A} \sum_{y \in \cY} \int  \pi^{a,y}_{i,j}(dx)  = \frac{1}{n_P} \quad \forall j \in [n_P]\\
    \quad& \sum_{i=1}^{n_A}\sum_{a \in A} \sum_{y \in \cY} \int \ind[a=0, y=1] \pi^{a,y}_{i,j}(dx)  = p_{0} \\
    \quad& \sum_{i=1}^{n_A}\sum_{a \in A} \sum_{y \in \cY} \int \ind[a=1, y=1] \pi^{a,y}_{i,j}(dx)  = p_{1}
\end{split}
\label{eqn:disjoint-learning-transport-fair}
\end{align}

Denoting the value of the above optimization as $\pfair(p_0, p_1, \eta)$, the same argument as in Theorem \ref{thm:coupling-formulation} can be used to see that the value of \eqref{eqn:disjoint-learning-fair} is exactly $\max(\pfair(p_0, p_1, \eta), \pfair(p_0, p_1, -\eta))$ where we need to try out $\eta$ and $-\eta$ in order to handle the absolute value in $\cU$.

As in Section \ref{subsec:duality}, the dual problem of \eqref{eqn:disjoint-learning-transport-fair} can be derived by looking at the Lagrangian, which after rearranging the terms a little bit is as follows:
\begin{align*}
    &\Lagr(\pi, \alpha_A, \alpha_P, \{\beta_{i}\}, \{\beta'_{j}\}\}, \gamma_0, \gamma_1)\\
    &= \sum_{i=1}^{n_A}\sum_{j=1}^{n_P} \sum_{a \in \cA} \sum_{y \in \cY}\int \Bigg(\ell(\theta, (x, a, y)) + \eta \left(u(h_{\theta}((x,a)))\frac{\ind[a=1, y=1]}{p_1} - u(h_{\theta}((x,a)))\frac{\ind[a=0, y=1] }{p_0} \right) \\
    &- \alpha_A d^i_A(x,a) -  \alpha_P d^j_P(x,y) - \beta_i  - \beta'_j \\
    &- \gamma_0 \ind[a=0, y=1] - \gamma_1 \ind[a=1, y=1]\Bigg) \pi^{y, a}_{i,j}(dx) \\
    &+ \alpha_A r_A + \alpha_P r_P + \frac{1}{n_A} \sum_{i=1}^{n_A} \beta_i + \frac{1}{n_P} \sum_{j=1}^{n_P} \beta'_j + p_0 \gamma_0 + p_1 \gamma_1. 
\end{align*}

The dual problem is then
\begin{align}
\begin{split}
    &\inf_{\substack{\alpha_A, \alpha_P, \\ \{\beta_{i}\}_{i \in [n_A]}, \{\beta'_j\}_{j \in [n_P]}}} \quad \alpha_A r_A + \alpha_P r_P + \frac{1}{n_A}\sum_{i\in [n_A]} \beta_{i} + \frac{1}{n_P} \sum_{j \in [n_P]} \beta'_j + p_0 \gamma_0 + p_1 \gamma_1\\
    &\text{s.t.} \sup_{x} \Bigg(\ell(\theta, (x, a, y)) + \eta u(h_{\theta}((x,a))) \left(\frac{\ind[a=1, y=1]}{p_1} - \frac{\ind[a=0, y=1] }{p_0} \right) \\
    &- \alpha_A d^i_A(x,a) -  \alpha_P d^j_P(x,y)\Bigg) - \beta_{i} - \beta'_j \\
    &- \gamma_0\ind[a=0, y=1] - \gamma_1\ind[a=1,y=1] \le 0 \quad i \in [n_A], j \in [n_P], a \in \cA, y \in \cY 
\end{split}
\label{eqn:transport-dual-fair}
\end{align}

Note that when $y=-1$, then the term in the supremum simply becomes
\begin{align*}
    \ell(\theta, (x, a, y)) - \alpha_A d^i_A(x,a) -  \alpha_P d^j_P(x,y).
\end{align*}

When $y=1$, then we get
\begin{align*}
    &\log(1+\exp(- \langle \theta, (x,a) \rangle)) -\eta \log(1+\exp(- \langle \theta, (x,a) \rangle) \left(\frac{\ind[a=1]}{p_1} - \frac{\ind[a=0] }{p_0} \right) - \alpha_A d^i_A(x,a) -  \alpha_P d^j_P(x,y)\\
    &=\log(1+\exp(- \langle \theta, (x,a) \rangle)) \left(1-\eta \left(\frac{\ind[a=1]}{p_1} - \frac{\ind[a=0] }{p_0} \right) \right) - \alpha_A d^i_A(x,a) -  \alpha_P d^j_P(x,y)\\
    &=\left(1 - \eta \left(\frac{\ind[a=1]}{p_1} - \frac{\ind[a=0] }{p_0} \right) \right)\ell(\theta, (x,a,y)) - \alpha_A d^i_A(x,a) -  \alpha_P d^j_P(x,y)
\end{align*}

For simplicity, we write
\[
    c(a,y) = 1 - \ind[y=1] \eta\left(\frac{\ind[a=1]}{p_1} - \frac{\ind[a=0] }{p_0} \right).
\]
Note that $c(a,y) > 0$ because $\eta > \min(p_0, p_1)$. Write $\overline{c} = \max_{a,y} c(a,y)$.

Then, the above dual problem can be re-written as 
\begin{align}
\begin{split}
    \inf_{\substack{\alpha_A, \alpha_P, \\ \{\beta_{i}\}_{i \in [n_A]}, \{\beta'_j\}_{j \in [n_P]}}} \quad &\alpha_A r_A + \alpha_P r_P + \frac{1}{n_A}\sum_{i\in [n_A]} \beta_{i} + \frac{1}{n_P} \sum_{j \in [n_P]} \beta' + p_0 \gamma_0 + p_1 \gamma_1\\
    \text{s.t.}\quad& \sup_{x} \Bigg(c(a, y) \cdot \ell(\theta, (x, a, y))  - \alpha_A d^i_A(x,a) -  \alpha_P d^j_P(x,y)\Bigg) - \beta_{i} - \beta'_j \\
    &- \gamma_0\ind[a=0, y=1] - \gamma_1\ind[a=1,y=1] \le 0 \quad i \in [n_A], j \in [n_P], a \in \cA, y \in \cY.
\end{split}
\label{eqn:transport-dual-fair2}
\end{align}

We remark that the $c(a,y)$ that folds the fairness constraint into the original loss is essentially equivalent to the cost plugged into the cost-sensitive oracle in \cite{agarwal2018reductions} and \cite{kearns2018preventing}. 

Note that the constant can be taken out of the $\sup$ as $c(a,y)$ is always positive and the same proof as in Lemma \ref{lem:conjugate-infimal-repr} can be used:
\begin{align}
    &\sup_{x} \left(c(a, y) \cdot \ell(\theta, (x, a, y))  - \alpha_A d^i_A(x,a) -  \alpha_P d^j_P(x,y)\right) \nonumber\\
    &= c(a,y) \cdot \sup_{x} \left(\ell(\theta, (x, a, y))  - \frac{\alpha_A}{c(a,y)} d^i_A(x,a) -  \frac{\alpha_P}{c(a,y)} d^j_P(x,y)\right)\label{eqn:fairness-c-taken-out}
\end{align}

We have intentionally taken $a$ outside the $\sup$ so that we can view $c(a,y)$ as a constant.

\begin{lemma}
\label{lem:conjugate-infimal-repr-fair}
Fix any $\theta$, $(x^A_i, a, x^P_j)$, and $(\alpha_A, \alpha_P, \kappa_A)$. If $||\theta[1:m_1]||_{p,*} > \alpha_A + \alpha_P$, then $\sup_{x} h(\theta, (x, a)) - \alpha_A ||x^A_i - x ||_{p} -  \alpha_P||x^P_j - x ||_{p} = \infty$.
Otherwise, we have 
\begin{align*}
    &\sup_{x} h(\theta, (x, a)) - \alpha_A ||x^A_i - x ||_{p} -  \alpha_P||x^P_j - x ||_{p}\\
    &=\sup_{b \in [0,1]} - f^*(b) + (g^i_1 \square g^j_2)(-b\theta[1:m_1]) + \langle b\theta[m_1+1:m_1+m_2], a \rangle
\end{align*}
where $g^i_1$ and $g^j_2$ is the same as defined in Lemma \ref{lem:conjugate-infimal-repr}.
\end{lemma}
\begin{proof}
Noting that $h$ is convex and thus $h$ is equal to its biconjugate $h^{**}$, we have
\begin{align*}
    &\sup_{x} h(\theta, (x, a)) - \alpha_A ||x^A_i - x ||_{p} -  \alpha_P||x^P_j - x ||_{p} \\
    &=\sup_{x} \sup_{b \in [0,1]} \langle b\theta, (x,a) \rangle  - f^*(b) - \alpha_A ||x^A_i - x||_{p}  -  \alpha_P ||x^P_j - x||_{p} \\
    &= \sup_{b \in [0,1]} \sup_{x} \langle b\theta, (x,a) \rangle  - f^*(b) - \sup_{||q_1||_{p,*} \le \alpha_A}\langle q_1, x^A_i - x \rangle -  \sup_{||q_2||_{p,*} \le \alpha_P}\langle q_2, x^P_j - x\rangle \\ 
    &=  \sup_{b \in [0,1]} \sup_{x} \langle b\theta, (x,a) \rangle  - f^*(b) - \sup_{||q_1||_{p,*} \le \alpha_A}\langle (q_1, \xi), (x^A_i,a) - (x,a) \rangle -  \sup_{||q_2||_{p,*} \le \alpha_P}\langle (q_2, 0), (x^P_j,a) - (x,a)\rangle \\
    &=  \sup_{b \in [0,1]} \sup_{x} \inf_{\substack{||q_1||_{p,*}\le \alpha_A,\\ ||q_2||_{p,*}\le \alpha_P}} \langle b\theta, (x,a) \rangle  - f^*(b)- \langle (q_1, \xi), (x^A_i,a) - (x,a) \rangle - \langle (q_2, 0), (x^P_j,a) - (x,a)\rangle \\
    &= \sup_{b \in [0,1]} \sup_{x}  \inf_{\substack{||q_1||_{p,*}\le \alpha_A,\\ ||q_2||_{p,*}\le \alpha_P}} \langle b\theta + (q_1, \xi) + (q_2,0) , (x,a) \rangle  - f^*(b) -\langle q_1, x^A_i \rangle - \langle q_2, x^P_j \rangle
\end{align*}   
where $\xi$ can be chosen arbitrarily.

Appealing to proposition 5.5.4 of \cite{bertsekas2009convex}, we can swap $\inf$ and $\sup$.
\begin{align*}
    &= \sup_{b \in [0,1]}  \inf_{\substack{||q_1||_{p,*}\le \alpha_A,\\ ||q_2||_{p,*}\le \alpha_P }} \sup_{x}  \langle b\theta + (q_1, \xi) + (q_2,0), (x,a) \rangle  - f^*(b) - \langle q_1, x^A_i \rangle - \langle q_2, x^P_j\rangle.
\end{align*}
Note that unless $b\theta + (q_1, \xi) + (q_2,0) = 0$, $x$ can be chosen arbitrarily big. Also, if $\theta + (q_1, \xi) + (q_2, 0)\neq 0$, then $b$ can be chosen to be 1. Therefore, if there doesn't exist $(q_1, q_2)$ such that $\theta + (q_1, \xi) + (q_2,0) = 0$, everything evaluates to $\infty$. 
In other words, the expression evaluates to $\infty$ unless $||\theta[1:m_1]||_{p,*} \le \alpha_A + \alpha_P$ and $\xi = \theta[m_1+1:m_1+m_2]$.

Now, suppose $\theta$ satisfies the above condition as we know it evaluates to $\infty$ otherwise. Then, we get
\begin{align*}
    &= \sup_{b \in [0,1]} - f^*(b) + \langle b\theta[m_1 +1: m_1 + m_2], a \rangle + \inf_{\substack{||q_1||_{p,*}\le \alpha_A,\\ ||q_2||_{p,*}\le \alpha_P}} \begin{cases}
          - \langle q_1, x^A_i \rangle - \langle q_2, x^P_j \rangle  &\quad\text{if $b\theta[1:m_1] + q_1 + q_2 = 0$} \\
         \infty &\quad\text{otherwise}
    \end{cases}\\
    &=\sup_{b \in [0,1]} - f^*(b) + (g^i_1 \square g^j_2)(-b\theta[1:m_1]) + \langle b\theta[m_1 +1: m_1 + m_2], a \rangle.
\end{align*}
\end{proof}

\begin{restatable}{theorem}{thmsupupperfair}
\label{thm:sup_upper_fair}
Fix any $\theta$, $(x^A_i, a, x^P_j)$, and $(\alpha_A, \alpha_P, \kappa_A)$. If $||\theta_1||_{p,*} > \alpha_A + \alpha_P,$ then $\sup_{x} h(\theta, (x, a)) - \alpha_A ||x^A_i - x ||_{p} -  \alpha_P||x^P_j - x||_{p}=\infty$. Otherwise, we have
\begin{align*}
    &\sup_{x} h(\theta, (x, a)) - \alpha_A ||x^A_i - x ||_{p} -  \alpha_P||x^P_j - x||_{p}\\
    &\le f\Bigg(\Bigg(\frac{\min(\alpha_A,\alpha_P)||\theta_1||_{*}||x^A_i-x^P_j||}{\alpha_A + \alpha_P} + \frac{\langle\theta_1, \alpha_A x^A_i + \alpha_P x^P_j\rangle}{\alpha_A + \alpha_P}\Bigg) + \langle \theta_2, a\rangle\Bigg) - \min(\alpha_A, \alpha_P) || x^A_i - x^P_j||_{p}.
\end{align*}
\end{restatable}
\begin{proof}
    Because $f$ is a convex function, its biconjugate is itself, so
    \[
    \sup_{b \in [0,1]} -f^*(b) + b \cdot X = f(X). 
    \]
Therefore, we have
\begin{align*}
    &\sup_{x} h(\theta, (x, a)) -\alpha_A||x -  x^A_i||_{p}  - \alpha_P||x-x^P_j||_{p} \\
    &=\sup_{b \in [0,1]} - f^*(b) + (g^i_1 \square g^j_2)(b\theta_1) + \langle b\theta_2, a\rangle\\
    &\le \sup_{b \in [0,1]} - f^*(b) + \left(\frac{b}{\alpha_A + \alpha_P} \right) \left(\min(\alpha_A,\alpha_P)||\theta_1||_{*}||x^A_i-x^P_j|| + \langle\theta_1, \alpha_A x^A_i + \alpha_P x^P_j\rangle\right) \\
    &- \min(\alpha_A, \alpha_P) || x^A_i - x^P_j||_{p} + b \langle \theta_2, a\rangle \\
    &= \sup_{b \in [0,1]} - f^*(b) \\
    &+ b\left(\left(\frac{1}{\alpha_A + \alpha_P} \right) \left(\min(\alpha_A,\alpha_P)||\theta_1||_{*}||x^A_i-x^P_j|| +  \langle\theta_1, \alpha_A x^A_i + \alpha_P x^P_j\rangle\right) + \langle \theta_2, a\rangle\right) - \min(\alpha_A, \alpha_P) || x^A_i - x^P_j||_{p}\\
    &= f\left(\left(\frac{\min(\alpha_A,\alpha_P)||\theta_1||_{*}||x^A_i-x^P_j|| +  \langle\theta, \alpha_A x^A_i + \alpha_P x^P_j\rangle}{\alpha_A + \alpha_P} \right) + \langle \theta_2, a\rangle\right) - \min(\alpha_A, \alpha_P) || x^A_i - x^P_j||_{p}.
\end{align*}
The first inequality follows from Theorem \ref{thm:inf-conv-upper-bound}.
\end{proof}

Now, note that depending on $(a,y)$, $\alpha_A := \frac{\alpha_A}{c(a,y)}$ and $\alpha_P:=\frac{\alpha_P}{c(a,y)}$ in the above lemma and theorem changes because of the equation \eqref{eqn:fairness-c-taken-out}. Therefore, unless $||\theta||_{p,*} \le \min_{(a,y)} \frac{\alpha_A}{c(a,y)} + \frac{\alpha_P}{c(a,y)}$, $\max_{a,y}\sup_{x} \left(c(a, y) \cdot \ell(\theta, (x, a, y))  - \alpha_A d^i_A(x,a) -  \alpha_P d^j_P(x,y)\right)$ evaluates to $\infty$. In other words, we need 
\[
    ||\theta_1||_{p,*} \le \frac{\alpha_A + \alpha_P}{\overline{c}}.
\]

Therefore, via our approximation with $\hat{x}_{i,j}$ as in Section \ref{sec:tractable-optimization}, the optimization problem is 
\begin{align*}
    \min_{\substack{\alpha_A, \alpha_P, \\ \{\beta_{i}\}_{i \in [n_A]}, \{\beta'_j\}_{j \in [n_P]}}} \quad & n_A n_P (\alpha_A r_A + \alpha_P r_P + p_0 \gamma_0 + p_1 \gamma_1) + n_P \sum_{i\in [n_A]} \beta_{i} + n_A \sum_{j \in [n_P]} \beta' \\
    \text{s.t.}\quad& c(a, y) \cdot \ell(\theta, (\hat{x}_{i,j}, a, y)) + \alpha_A \kappa_A|a^A_i - a| +\alpha_P \kappa_P|y^P_j -y |- \min(\alpha_A, \alpha_P)||x^A_{i} - x^P_j||  \\
    &+ \gamma_0\ind[a=0, y=1] + \gamma_1\ind[a=1,y=1]\le \beta_{i} + \beta'_j  \quad i \in [n_A], j \in [n_P], a \in \cA, y \in \{\pm 1\},\\
    &||\theta_1||_{*} \le \frac{\alpha_A + \alpha_P}{\overline{c}}
\end{align*}
As in Section \ref{sec:tractable-optimization}, we can show that the above optimization problem can be solved via two convex optimization problems. And as before, we can show similar additional approximation guarantee. For each fixed $(a,y)$, approximating the $\sup_{x}$ term with $\hat{x}_{i,j}$ will result in approximation error of $2||x^A_i - x^P_j||$ as in Theorem \ref{thm:main-thm}. Therefore, even when we take the max over all $(a,y)$, the overall gap must be bounded by $2c(a,y) ||x^A_i - x^P_j||\le 4||x^A_i - x^P_j||$.
\begin{lemma}
\label{thm:gap-sup-fairness}
Suppose $p \neq 1$ and $p \neq \infty$. If $||\theta_1||_{p,*} \le \alpha_A + \alpha_P$, then
\begin{align*}
    &\max_{a,y}\sup_{x \in \cX} \left(c(a, y) \cdot \ell(\theta, (x, a, y))  - \alpha_A d^i_A(x,a) -  \alpha_P d^j_P(x,y)\right)  \\
    &-\max_{a,y}\left(c(a, y) \cdot \ell(\theta, (\hat{x}_{i,j}, a, y)) + \alpha_A \kappa_A|a^A_i - a| +\alpha_P \kappa_P|y^P_j -y |- \min(\alpha_A, \alpha_P)||x^A_{i} - x^P_j|| \right)\\
    &\le 4 \hat{\alpha} ||x^A_i - x^P_j||.
\end{align*}
\end{lemma}
\begin{proof}
First, fix any $(a,y)$. Using the same argument as in Lemma \ref{lem:holder-lem},
\begin{align*}
    &\left(\sup_{x} \ell(\theta, (x, a, y)) - \frac{\alpha_A}{c(a,y)} ||x^A_i - x ||_{p} -  \frac{\alpha_P}{c(a,y)}||x^P_j - x||_{p}\right) - \left( \ell(\theta, (\hat{x}_{i,j}, a, y)) - \frac{\hat{\alpha}}{c(a,y)} ||x^A_i - x^P_j||\right)\\
    &\le 2||x^A_i - x^P_j||
\end{align*}
where $\bar{x}$ is the same as in the proof of Lemma \ref{lem:holder-lem}. Multiplying by $c(a,y)$, we have
\begin{align*}
    &\sup_{x} \left(c(a, y) \cdot \ell(\theta, (x, a, y))  - \alpha_A ||x^A_i - x ||_{p} -  \alpha_P ||x^P_j - x ||_{p}\right) \\
    &- \left(c(a, y) \cdot \ell(\theta, (\hat{x}_{i,j}, a, y))  - \alpha_A ||x^A_i - x ||_{p} -  \alpha_P ||x^P_j - x ||_{p}\right) \\
    &\le 2c(a,y)||x^A_i - x^P_j|| \le 4||x^A_i - x^P_j||.
\end{align*}
because $c(a,y) \le 2$ for any $(a,y)$.

Finally, write 
\[
    (x^*, a^*, y^*) = \arg\max_{(x,a,y)} \left(c(a, y) \cdot \ell(\theta, (x, a, y))  - \alpha_A d^i_A(x,a) -  \alpha_P d^j_P(x,y)\right).
\]

Then, we have
\begin{align*}
    &\left(c(a^*, y^*) \cdot \ell(\theta, (x^*, a^*, y^*))  - \alpha_A d^i_A(x^*,a^*) -  \alpha_P d^j_P(x^*,y^*)\right)  \\
    &-\max_{(a,y)}\left(c(a, y) \cdot \ell(\theta, (\hat{x}_{i,j}, a, y)) + \alpha_A \kappa_A|a^A_i - a| +\alpha_P \kappa_P|y^P_j -y |- \min(\alpha_A, \alpha_P)||x^A_{i} - x^P_j|| \right)\\
    &\le \left(c(a^*, y^*) \cdot \ell(\theta, (x^*, a^*, y^*))  - \alpha_A d^i_A(x^*,a^*) -  \alpha_P d^j_P(x^*,y^*)\right)  \\
    &-\left(c(a^*, y^*) \cdot \ell(\theta, (\hat{x}_{i,j}, a^*, y^*)) + \alpha_A \kappa_A|a^A_i - a^*| +\alpha_P \kappa_P|y^P_j -y^*|- \min(\alpha_A, \alpha_P)||x^A_{i} - x^P_j|| \right)\\
    &=\left(c(a^*, y^*) \cdot \ell(\theta, (x^*, a^*, y^*))  - \alpha_A ||x^A_i - x^* ||_{p} -  \alpha_P ||x^P_j - x^* ||_{p}\right)  \\
    &-\left(c(a^*, y^*) \cdot \ell(\theta, (\hat{x}_{i,j}, a^*, y^*)) -\alpha_A ||x^A_i - x ||_{p} -  \alpha_P ||x^P_j - x ||_{p} \right)\\
    &\le 4||x^A_i - x^P_j||_p
\end{align*}
where the first inequality follows because $-\max_{a,y}$ term cannot be greater than when the inner term is evaluated at $(a^*,y^*)$, and the last inequality follows because for $(a^*,y^*)$, we have shown earlier the gap is at most $4||x^A_i - x^P_j||_p$.
\end{proof}

Furthermore, under the same assumption as in Appendix \ref{sec:opt}, the optimization problem can be even further simplified to 
\begin{align}
\begin{split}
    &\min_{\substack{\alpha_A, \alpha_P, \\ \{\beta_{i}\}_{i \in [n_A]}, \{\beta'_j\}_{j \in [n_P]}}}  n_A n_P (\alpha_A r_A + \alpha_P r_P + p_0 \gamma_0 + p_1 \gamma_1) \\
    &+ \sum_{(i,j) \in M} \max_{a \in \cA, y \in \{\pm 1\}} c(a, y) \cdot \ell(\theta, (\hat{x}_{i,j}, a, y)) + \alpha_A \kappa_A|a^A_i - a| +\alpha_P \kappa_P|y^P_j -y |- \min(\alpha_A, \alpha_P)||x^A_{i} - x^P_j|| \\
    &\text{s.t.}\quad||\theta_1||_{*} \le \frac{\alpha_A + \alpha_P}{\overline{c}}
\end{split}\label{eqn:fairness-prob-final}
\end{align}
where $M$ is defined exactly the same as in Appendix~\ref{sec:opt}.

\section{Missing Details from Section \ref{sec:experiments}}
\label{app:experiments}
All the experiments were performed on one of the authors' personal computer, MacBook Pro 2017, and every experiment took less than an hour. 

We note that as it's standard in practice to output the last iterate instead of the averaged iterate, we use the last iterate of the projected gradient descent instead of the averaged one for all our experiments. Now, the total number of points and the features for each dataset is here along with where the dataset can be found:
\begin{enumerate}
    \item BC (\href{https://archive.ics.uci.edu/ml/datasets/breast+cancer}{https://archive.ics.uci.edu/ml/datasets/breast+cancer}): 569 points with 30 features
    \item IO (\href{https://archive.ics.uci.edu/ml/datasets/ionosphere}{https://archive.ics.uci.edu/ml/datasets/ionosphere}): 351 points with 34 features
    \item HD (\href{https://archive.ics.uci.edu/ml/datasets/Heart+Disease}{https://archive.ics.uci.edu/ml/datasets/Heart+Disease}): 300 points with 13 features
    \item 1vs8 (\href{https://scikit-learn.org/stable/modules/generated/sklearn.datasets.load_digits.html#sklearn.datasets.load_digits}{https://scikit-learn.org/stable/modules/generated/sklearn.datasets.load\_digits.html\#sklearn.datasets.load\_digits}): This is a copy of the test dataset from \href{https://archive.ics.uci.edu/ml/datasets/Optical+Recognition+of+Handwritten+Digits}{https://archive.ics.uci.edu/ml/datasets/Optical+Recognition+of+Handwritten+Digits}). It originally contains 1797 points with 64 points. But after filtering out all the digits except for 1's and 8's, there are 356 points. 
\end{enumerate}
For every dataset, we preprocess the data by standardizing each feature --- that is, removing the mean and scaling to unit variance. 

We take the common feature to be the first 5 features for (BC, HD) and 4 for IO --- i.e. $m_1 = 5$ and $4$ respectively. For 1vs8, we have $m_1=32$, the first half bits of the 8x8 image. And the remaining features are the auxiliary features $\cA$: $m_2=25, 30, 8$, and $32$ for BC, IO, HD, and 1vs8 respectively. For all datasets, we set the test size to be 30\% of the entire dataset. Then, we set $(n_P, v) = (20, 5)$, $(20, 10)$, $(30, 5)$, $(30, 10)$ for BC, IO, HD, 1vs8 respectively. In other words, we imagine the total number of points in our labeled sets $S_P$ and the number of features to be very small. For BC and IO, we also try a case when the number of common features is a lot more (i.e. $m_1 = 25$).

Now we report the best regularization penalties that maximize the accuracy of RLR and RLRO respectively over all experiment runs at the granularity level of $10^{-2}$. The best regularization penalty for RLR and RLRO were $\lambda=(0.07, 0.04)$ for BC ($m_1 = 5$), (0.04, 0.04) for BC ($m_1 = 25$), $(0.02, 0.02)$ for IO ($m_1 =4$), $(0.01, 0.02)$ for IO ($m_1=25$), $(0.08, 0.03)$ for HD, and $(0.08, 0.08)$ for 1vs8. The parameters for data join used for each of the datasets can be found in the table below:
\begin{table}[ht!]
\centering
\begin{tabular}{|l|l|l|l|l|l|l|}
\hline
 & BC ($m_1 = 5$) & BC ($m_1 = 25$) & IO ($m_1 = 4$)& IO ($m_1=25$) & HD & 1vs8\\ \hline
$r_A$  & $0.65$ & $1.65$ & $0.3$ & $1.5$ & $0.65$ &1.85 \\ \hline
$r_P$  & $0.65$ & $1.65$ &$0.3$ & $1.5$ & $0.65$ & 1.85 \\ \hline
$\kappa_A$ & $5$ & $5$ & $10$ & $5$ & $10$ & 5\\ \hline
$\kappa_P$ & $2.5$ & $2.5$ & $5$ & $2.5$ & $5$ & $15$\\ \hline
$k$ & $1$ & $1$ & $1$ & $1$ & $1$ & 1\\
\hline
\end{tabular}
\caption{\label{tab:DJ-parameters} Parameters used for distributionally data join (DJ) for UCI datasets} 
\end{table}

For all of the methods (logistic regression, regularized logistic regression, distributionally robust logistic regression, and our distributionally robust data join), the learning rate used was $7*10^{-2}$ and the total number of iterations was $1500$. 

Finally, we describe how we generated the data that was used to test how well DJ handles distribution shift. First, define
\begin{align*}
    \beta_1 = [1,0,0,0,0,0,0,0,0,0]\quad\text{and}\quad
    \beta_2 = [1,1,1,1,1,1,1,1,1,1].
\end{align*}

For the first group $g=1$, the positive points and negative points were drawn from a multivariate normal distribution with mean $\beta_1$ and $-\beta_1$ respectively both with the standard deviation of 0.2:
\begin{align*}
    x|y=+1,g=1 \sim N(\beta_1, 0.2)\quad\text{and}\quad x|y=-1, g=1 \sim N(-\beta_1, 0.2).
\end{align*}

For the second group $g=2$, the positive points and negative points were drawn from a multivariate normal distribution with mean $\beta_2$ and $-\beta_2$ respectively both with the standard deviation of 0.3:
\begin{align*}
    x|y=+1,g=2 \sim N(\beta_2, 0.2)\quad\text{and}\quad x|y=-1,g=2 \sim N(-\beta_2, 0.2).
\end{align*}

Now, for the first dataset $S_1 = \{(x^1_j, y^1_j)\}_{j=1}^{n_1}$, we had the number of points from group 1 and from group 2 was 400 and 20 respectively. And we had it so that the number of positive and negative points in each group was exactly the same: i.e. 200 positive and negative points for group 1, and 10 positive and 10 negative points for group 2.

For the second dataset, $S_2 = \{(x^2_i, y^2_i)\}_{i=1}^{n_2}$, the number of points from group 1 and from group 2 was 200 and 2000 respectively. The number of positive and negative points in each group was exactly the same once again here. 

Our labeled dataset will be the first two coordinates of the fist dataset, meaning $m_1=2$: 
\[
    S_P = \{(x^1_j[0:2], y^1_j )\}_{j=1}^{n_1}.
\]
Then, we will randomly divide the second dataset so that the 70\% of it will be used as unlabeled dataset $S_A$ and the other 30\% is to be used as the test dataset $S_\text{test}$.
\begin{align*}
    S_A = \{x^2_i\}_{i=1}^{0.7 n_2} \quad\text{and}\quad S_\text{test} = \{(x^2_i, y^2_i)\}_{i=0.7n_2 + 1}^{n_2}.
\end{align*}
Note that $m_2=10$. 

The baselines that we consider for this synthetic data experiment are 
\begin{enumerate}
    \item Logistic regression trained (LR) on $S_P$
    \item Regularized regression trained (RLR) on $S_P$ with $\lambda = 10$
    \item Distributionally logistic regression (DLR) trained on $S_P$  with $r=100, \kappa=10$.
\end{enumerate}

\end{document}